\icmltitlerunning{Learning to Collaborate in Markov Decision Processes}
\newif\if@restonecol
\newtheorem{lemma}{Lemma}
\newtheorem{theorem}{Theorem}
\newtheorem*{theorem*}{Theorem}
\newtheorem{corollary}{Corollary}
\newtheorem{proposition}{Proposition}
\newtheorem{definition}{Definition}
\newcommand{\Rmnum}[1]{\expandafter\@slowromancap\romannumeral #1@}
\newcommand{\algexprestart}{\textsc{ExpRestart}}
\newcommand{\algexpsmooth}{\textsc{ExpDRBias}}
\DeclareMathOperator*{\argmax}{arg\,max}
\newcommand{\agentone}{agent $\mathcal A_1$}
\newcommand{\agenttwo}{agent $\mathcal A_2$}
\newcommand{\woagentone}{$\mathcal A_1$}
\newcommand{\woagenttwo}{$\mathcal A_2$}
\newcommand{\mdp}{MDP}
\newcommand{\states}{S}
\newcommand{\actions}{A}
\newcommand{\actionsone}{A_1}
\newcommand{\actionstwo}{A_2}
\newcommand{\action}{a}
\newcommand{\actionone}{{a^1}}
\newcommand{\actiontwo}{{a^2}}
\newcommand{\state}{s}
\newcommand{\rew}{r}
\newcommand{\rews}{\bm r}
\newcommand{\rewst}{{\bm r_t}}
\newcommand{\avgrew}{\eta}
\newcommand{\avgrewt}{{\eta_t}}
\newcommand{\return}{V}
\newcommand{\influence}{I}
\newcommand{\influenceone}{I_1}
\newcommand{\influencetwo}{I_2}
\newcommand{\statdist}{\bm d}
\newcommand{\statdistr}{{\bm d_{t,m}}}
\newcommand{\diampoli}{\Delta_{\pi_i}}
\newcommand{\diampolone}{\Delta_{\pi_1}}
\newcommand{\diampoltwo}{\Delta_{\pi_2}}
\newcommand{\relrangei}{\kappa_i}
\newcommand{\relrangeone}{\kappa_1}
\newcommand{\relrangetwo}{\kappa_2}
\newcommand{\statdists}{{\bm d_{{\bm \pi^1},{\bm  \pi^2}}}}
\newcommand{\statdistt}{{\bm d_{{\bm \pi^1}_t,{\bm \pi^2}_t}}}
\newcommand{\trans}{\bm P}
\newcommand{\transt}{{\bm P_{{\bm \pi^1}_t,{\bm \pi^2}_t}}}
\newcommand{\pol}{\bm \pi}
\newcommand{\polone}{{\bm \pi^1}}
\newcommand{\polonet}{{{\bm \pi^1}_{t}}}
\newcommand{\poltwo}{{\bm \pi^2}}
\newcommand{\poltwot}{{{\bm \pi^2}_{t}}}
\newcommand{\qval}{Q}
\newcommand{\qvals}{\mathbf Q}
\newcommand{\qvalst}{{\mathbf Q_t}}
\newcommand{\regret}{R}
\newcommand{\ratechange}{\rho}
\newcommand{\ratechangeone}{{\rho_1}}
\newcommand{\ratechangetwo}{{\rho_2}}
\newcommand{\responsivity}{\alpha}
\newcommand{\smoothone}{\lambda}
\newcommand{\smoothtwo}{\mu}
\newcommand{\weights}{\mathbf w}
\newcommand{\regul}{\mathcal R}
\newcommand{\simplexone}{{\mathcal P_{A_1}}}
\newcommand{\tixone}{t}
\newcommand{\tixtwo}{\tau}
\newcommand{\tix}{T}
\newcommand{\rounds}{M}
\newcommand{\round}{m}
\newcommand{\mixtime}{\omega}
\newcommand{\Segment}{\Gamma}
\newcommand{\ratelearn}{\epsilon}
\newcommand{\transpose}{\dagger}
\newcommand{\E}{\mathds E}
\newcommand{\opt}{\textsc{{Opt}}\xspace}
\newcommand{\bigObound}{\ensuremath{\mathcal{O}}}
\newcommand{\Thetabound}{\ensuremath{\Theta}}
\newcommand{\Omegabound}{\ensuremath{\Omega}}
\let\mathbbm\mathds
\newcommand{\norm}[1]{\left\lVert#1\right\rVert}
\newcommand{\normL}[1]{\left\lVert#1\right .}
\newcommand{\normR}[1]{\left . #1\right\rVert}
\newcommand{\vecdot}[1]{\left<#1\right>}
\newcommand{\bracket}[1]{\left[#1\right]}
\newcommand{\floor}[1]{\lfloor #1 \rfloor}
\begin{document}

\twocolumn[
\icmltitle{Learning to Collaborate in Markov Decision Processes}



\icmlsetsymbol{equal}{*}
\begin{icmlauthorlist}
  \icmlauthor{Goran Radanovic}{af1}
  \icmlauthor{Rati Devidze}{af2}
  \icmlauthor{David C. Parkes}{af1}
  \icmlauthor{Adish Singla}{af2}
\end{icmlauthorlist}
\icmlaffiliation{af1}{Harvard University.}
\icmlaffiliation{af2}{Max Planck Institute for Software Systems (MPI-SWS)}
\icmlcorrespondingauthor{Goran Radanovic}{gradanovic@g.harvard.edu}
\icmlcorrespondingauthor{Adish Singla}{adishs@mpi-sws.org}
\icmlkeywords{Machine Learning, ICML}
\vskip 0.3in
]



\printAffiliationsAndNotice{}  

\newtoggle{longversion}
\settoggle{longversion}{true}
\begin{abstract}
We consider a two-agent MDP framework where agents repeatedly solve a task in a collaborative setting. We study the problem of designing a learning algorithm for the first agent (\woagentone) that facilitates  successful collaboration even in cases when the second agent (\woagenttwo) is adapting its policy in an unknown way. 
The key challenge in our setting is that the first agent faces non-stationarity in rewards and transitions because of the adaptive behavior of the second agent.

\looseness-1
We design novel online learning algorithms for \agentone~whose regret decays as $\bigObound\Big(\tix^{\max\{1-\frac{3}{7}\cdot \alpha, \frac{1}{4}\}}\Big)$, for $\tix$ learning episodes, provided that the magnitude in the change
in \agenttwo's policy between any two consecutive episodes is upper bounded by $\bigObound(\tix^{-\responsivity})$. Here, the parameter $\responsivity$ is assumed to be strictly greater than $0$, and we show that this assumption is necessary provided that the {\em learning parity with noise} problem is computationally hard. We show that sub-linear regret of \agentone~further implies near-optimality of the agents' joint return for MDPs that manifest the properties of a {\em smooth} game. 
\end{abstract}
\section{Introduction}\label{sec.introduction}

Recent advancements in AI have the potential to change our daily lives by boosting productivity (e.g., via virtual personal assistants), augmenting human capabilities (e.g., via smart mobility systems), and increasing automation (e.g., via auto-pilots and assistive robots). These are settings of intelligence augmentation, where societal benefit will come 
%
not from complete automation but rather from the interaction between people and machines, in a process of a productive human-machine collaboration.


We expect that useful collaboration will come about through AI agents that can adapt to the behavior of users. 
As an example, consider self-driving cars where auto-pilots can be overridden by human drivers. In an initial period, 
a human driver would likely change their behavior until accustomed with 
new features of an auto-pilot mode. 
%
%
Without accounting for this changing behavior of users, the performance of the AI agent could considerably deteriorate, leading to, for example, hazardous situations in an auto-pilot mode. 
Hence, it is important that the AI agent updates its decision-making policy accordingly.

We formalize this problem through a two-agent, reinforcement learning (RL) framework.
%
The agents, hereafter referred to as \agentone~and \agenttwo, jointly solve a task in a collaborative setting (i.e., share a common reward function and a transition kernel that is based on their joint actions). Our goal is to develop a learning algorithm for \agentone~that facilitates a successful collaboration even in cases when \agenttwo~is adapting its own policy.
In the above examples, \agentone~could represent the AI agent whereas \agenttwo~could be a person with time-evolving behavior.
We primarily focus on an \emph{episodic} Markov decision process (MDP) setting, in which the agents repeatedly interact: 
\vspace{-7mm}
\begin{enumerate}[label=(\roman*)]
    \item \agentone~decides on its policy based on historic information (\agenttwo's past policies) and the underlying MDP model;
    \vspace{-2mm}
    \item \agentone~commits to its policy 
for a given episode without knowing the policy of \agenttwo;
    \vspace{-2mm}
    \item \agentone~updates its policy at the end of the episode based on \agenttwo's observed behavior.
\end{enumerate}
\vspace{-2mm}
%
%
When \agenttwo's policy is fixed and known, one can find 
an optimal policy for \agentone~using standard MDP planning techniques. 
In our setting, however, we do not assume \agenttwo's behavior is stationary,
and we do not  assume any particular model for how \agenttwo~changes its policy. 
This differs from similar two-agent (human-AI) collaborative settings \cite{dimitrakakis2017multi,nikolaidis2017game} that prescribe a particular behavioral model to \agenttwo~(human agent).

\subsection{Overview of Our Approach}
\looseness-1

The presence of \agenttwo~in our framework implies that the reward function and transition kernel are non-stationary from the perspective of \agentone.
Variants of the setting have also been studied in the learning literature~\cite{even2005experts,even2009online,yu2009online,yu2009arbitrarily,yu2009markov,abbasi2013online,wei2017online}. However, these  approaches do not directly apply 
because: (i) they focus on a particular aspect of non-stationarity (e.g., changing rewards with fixed transitions) \cite{even2005experts,even2009online}, (ii) require that the changes in the transition model are bounded \cite{yu2009arbitrarily,yu2009online}, (iii)  make restrictions on the policy space \cite{abbasi2013online}, and (iv) consider a competitive or adversarial setting instead of cooperative setting with shared reward~\cite{wei2017online}. 
%
%
%
%
%
Instead, we will assume that \agenttwo~does not abruptly change its policy across episodes,
and prove that the problem becomes computationally intractable otherwise. Our approach is inspired by the problem of experts learning in MDPs~\cite{even2005experts}, in which each state is associated with an experts algorithm that derives the policy for that state using $\qval$-values. 
However, to compensate for the non-stationarity of transitions and facilitate a faster learning process, we introduce novel forms of recency bias inspired by the ideas of \citet{rakhlin2013optimization} and \citet{syrgkanis2015fast}.





\looseness-1
\textbf{Contributions.} We design novel algorithms for \agentone~that lead to sub-linear regret ~of $\bigObound(\tix^{\max\{1-\frac{3}{7}\cdot \alpha, \frac{1}{4}\}})$, where $\tix$ is the number of episodes.  
 We assume the magnitude 
 of \agenttwo's policy change w.r.t. $\tix$ is  $\bigObound(\tix^{-\responsivity})$, for parameter $\responsivity$ that we require to be strictly positive.
%
 We show via a reduction from the {\em learning parities with noise} problem \cite{abbasi2013online,kanade2014learning}
that this upper bound on the rate of change in \agenttwo's policy is necessary, in that it is  computationally hard to achieve sub-linear regret for the special case of $\responsivity = 0$.
 Furthermore, 
we connect the agents' joint return to the regret of \agentone~by adapting the concept of {\em smoothness} from the game-theory literature \cite{roughgarden2009intrinsic,syrgkanis2015fast}, and  
we show that the bound on the regret of \agentone~implies near optimality of the agents' joint return for \mdp s that manifest a {\em smooth} game \cite{roughgarden2009intrinsic,syrgkanis2015fast}. To the best of our knowledge, we are the first to provide such guarantees in a collaborative two-agent MDP learning setup.   
\iftoggle{longversion}{}{The proofs can be found in the extended version of the paper \cite{radanovic2019learning}.}

\vspace{-2mm}
\section{The Setting}\label{sec.model}
We model a two-agent learning problem through an \mdp~environment.\footnote{An \mdp~ with multiple agents is often called {\em multiagent} \mdp~\cite{boutilier1996planning}.} The agents are \agentone~and \agenttwo.  We consider an episodic setting with $\tix$  episodes (also called time steps) and each episode lasting $\rounds$ rounds. Generic episodes are denoted by $\tixone$ and $\tixtwo$, while a generic round is denoted by $\round$.
The \mdp~is defined by:
\vspace{-2mm}
\begin{itemize} 
\item a finite set of states $\states$, with $\state$ denoting a generic state. We enumerate the states by $1$, ..., $|\states|$, and assume this ordering in our vector notation.
\item a finite set of actions $\actions = \actionsone \times \actionstwo$, with $\actionone \in \actionsone$ denoting a generic action of \agentone~and  $\actiontwo \in \actionstwo$ denoting a generic action of \agenttwo. 
%
We enumerate the actions of \agentone~by $1$, ..., $|\actions_{1}|$ and \agenttwo~by $1$, ..., $|\actions_{2}|$, and assume this ordering in our vector notation.
\item a transition kernel $\trans(\state, \actionone, \actiontwo, \state_{new})$, which  is a tensor with indices defined by the current state, the agents' actions, and the next state.
\item a reward function $\rew:  \states \times \actions \rightarrow [0, 1]$ that defines the joint reward for both agents.
\end{itemize}

We assume that \agentone~knows the MDP model. The agents {\em commit} to playing {\em stationary} policies $\polonet$ and $\poltwot$ in each episode $\tixone$, but do so without knowing the commitment of the other agent. At the end of the episode $\tixone$, the agents observe each other's policies ($\polonet$, $\poltwot$) and can use this information to update their future policies.\footnote{We focus on the full information setting as it allows us to do a cleaner analysis while revealing some of the
challenges of the problem at hand. The setting, for example, formalizes a scenario where the AI describes its policy to the human, and the episodes are long enough that the AI can effectively observe the human's policy.}
%
%
Since the state and action spaces are finite, policies can be represented as matrices $\polone_{\tixone}(\state, \actionone)$ and $\poltwo_{\tixone}(\state, \actiontwo)$, so that rows  $\polone_{\tixone}(\state)$
and $\poltwot(\state)$ define distributions on actions in a given state.
%
We also define the reward matrix for \agentone~as $\rewst( \state, \actionone) = \E_{\actiontwo \sim \poltwot(\state)} \bracket{ \rew( \state, \actionone, \actiontwo) }$, whose elements are the expected 
rewards of \agentone~for different actions and states. By bounded
rewards, we have $0 \le \rewst( \state, \actionone) \le 1$.


\subsection{Objective}

After each episode $\tixone$, the agents can adapt their policies. However,  \agenttwo~is not in 
our control, and not assumed to be optimal. Therefore, we take the perspective of \agentone, and seek to optimize its policy in order to obtain good
joint returns. The joint return in episode $\tixone$ is:
\begin{align*}
&\return_{\tixone} =  \frac{1}{M} \cdot \E \bracket{ \sum_{\round = 1}^\rounds \rew(\state_{\round}, \actionone_{\round}, \actiontwo_{\round}) | \statdist_1, \polonet, \poltwot}\\
&= \frac{1}{M} \cdot \sum_{\round = 1}^{\rounds} \statdistr \cdot \vecdot{ \polonet, \rewst},
\end{align*}

where $\state_{\round}$ is the state at round $\round$.
For $\round = 1$, $\state_{\round}$ is sampled from 
the initial state distribution $\statdist_1$.
For later periods, $\state_{\round}$ is obtained by  
following joint actions $(\actionone_{1}, \actiontwo_{1})$, $(\actionone_{2}, \actiontwo_{2})$, ..., $(\actionone_{\round - 1}, \actiontwo_{\round - 1})$ from state $\state_1$. Actions are
obtained from policies $\polonet$ and  $\poltwot$. The second part of the equation 
uses a vector notation to define  the joint return, where $\statdistr$ is a row vector representing the state distribution at episode $\tixone$ and round $\round$, while $\vecdot{\polonet, \rewst}$ is a row-wise dot product whose result is a column vector with $|\states|$ elements.
 Since this is an episodic framework, we will assume the same starting state distribution, $\statdist_1$, 
for all episodes $\tixone$. However $\statdistr$ can differ across episodes since policies $\polonet$ and 
$\poltwot$ evolve. 

We define the average return over all episodes as $\bar V = \frac{1}{\tix} \cdot \sum_{\tixone = 1}^{\tix} \return_{\tixone}$. The 
 objective is to output a sequence of \agentone's policies $\polone_1$, ..., $\polone_\tix$ that maximize:
\begin{align*}
\sup_{\polone_1, ..., \polone_\tix} \bar V = \sup_{\polone_1, ..., \polone_\tix} \frac{1}{\tix} \cdot \sum_{\tixone = 1}^{\tix} \return_{\tixone}.
\end{align*}

The maximum possible value of $\bar V$ over all combinations of \agentone's and \agenttwo's policies is denoted as $\opt$. Notice that this value is achievable using MDP planning techniques, provided that we control both agents. 



\subsection{Policy Change Magnitude and Influences}


We do not control \agenttwo, and we do note  assume that  \agenttwo~follows a particular behavioral model. Rather,
we quantify the allowed behavior via the {\em policy change magnitude}, which for \agenttwo~is defined as: 
\begin{align*}
\ratechangetwo &= \max_{\tixone > 1, \state} \sum_{\actiontwo \in \actionstwo} |\poltwo_{\tixone}(\state, \actiontwo) - \poltwo_{\tixone-1}(\state, \actiontwo)| \\
&= \max_{\tixone > 1}  \norm{\poltwo_{\tixone} - \poltwo_{\tixone-1}}_{\infty},
\end{align*}
%
where $\norm{\cdot}_{\infty}$ is operator (induced) norm.
%
%
In the case of \agenttwo, we will be focusing on policy change magnitudes $\ratechangetwo$ that are of the order $\bigObound(\tix^{-\responsivity})$, where $\responsivity$ is strictly grater than $0$. For instance, the assumption holds if \agenttwo~is a learning agent that adopts  
the experts in MDP approach of \citet{even2005experts,even2009online}.



We also define  the {\em influence} of an agent on the transition dynamics. This measures how much an agent can influence the transition dynamics through
its policy. 
For \agenttwo, the influence is defined as:
\begin{align*}
&\influencetwo = \sup_{\polone, \poltwo \ne \poltwo'} \frac{\norm{\trans_{\polone,\poltwo} - \trans_{\polone,\poltwo'}}_{\infty}}{\norm{\poltwo - \poltwo'}_{\infty}},
\end{align*}
where kernel (matrix) $\trans_{\polone,\poltwo}(\state, \state_{new})$ denotes the probability of transitioning from $\state$ to $\state_{new}$ when the agents' policies are  $\polone$ and  $\poltwo$ respectively.\footnote{Our notion of influence is similar to, although not the same as, that of \citet{dimitrakakis2017multi}.} 
%
Influence is a  measure of how much an agent affects the transition probabilities by changing its  policy. We are primarily interested in using this notion to show how our approach compares to the existing results from the online learning  literature. For $\influencetwo = 0$, our setting relates to the single agent settings of \citet{even2005experts,even2009online,dick2014online} where rewards are non-stationary but transition probabilities are fixed. In general, the  influence $\influence_2$ takes values in $[0, 1]$
\iftoggle{longversion}{(see Appendix~\ref{appendix.useful_properties}, Corollary~\ref{cor_influence_bound})}{(see Appendix~B (Corollary~1) of \citet{radanovic2019learning})}.
We  can analogously define policy change magnitude $\ratechangeone$, and influence $\influenceone$ of \agentone. 



\subsection{Mixing Time and $\qval$-values}\label{sec.mixtime_qvals}

We follow standard assumptions from the literature on  online learning in MDPs (e.g., see \citet{even2005experts}), and  only consider transition kernels that have well-defined stationary distributions. 
For the associated transition kernel, 
we define a stationary state distribution $\statdists$ as the one for which:
\vspace{-2mm}
\begin{enumerate}
\item any initial state distribution converges to under policies $\polone$ and  $\poltwo$;
\vspace{-1mm}
\item and  $\statdists \cdot \trans_{\polone,\poltwo} = \statdists$.
\end{enumerate}
\vspace{-2mm}
Note that $\statdists$ is represented as a row vector with $|\states|$ elements.
Furthermore, as discussed in \citet{even2005experts}, this implies that there exists a mixing time $\mixtime$, such that for all state distributions $\statdist$ and $\statdist'$, we have 
\begin{align*}
\norm{\statdist \cdot \trans_{\polone,\poltwo} - \statdist' \cdot \trans_{\polone,\poltwo}}_1 \le e^{-\frac{1}{\mixtime}} \cdot  \norm{\statdist - \statdist'}_1.
\end{align*}
Due to this well-defined mixing time, we can define the {\em average reward} of \agentone~when following policy $\polone$ in episode $\tixone$ as:
\begin{align*}
\avgrewt(\polone) := \avgrew_{\poltwot}(\polone) = \statdist_{\polone,\poltwot} \cdot \vecdot{ \polone, \rewst},
\end{align*}
where $\vecdot{., .}$ is row-wise dot product whose result is a column vector with $|\states|$ elements. 
The $\qval$-value matrix for \agentone~w.r.t. policy $\polone_{\tixone}$ is defined as:
\begin{align*}
&\qvalst(\state, \actionone) = \E \bracket{ \sum_{\round = 1}^\infty \big(\rewst(\state_{\round}, \actionone_{\round}) - \avgrew_{\tixone}(\polone_{\tixone})\big) | \state, \actionone, \polonet},
\end{align*} 
where $\state_\round$ and $\actionone_\round$ are states and actions in round $\round$, starting from state $\state$ with action $\actionone$ and then using policy $\polonet$.
Moreover, the policy-wise $Q$-value (column) vector for $\polone$ w.r.t. policy $\polone_{\tixone}$ is defined by:
\begin{align*}
\qvals_{\tixone}^\polone(\state) = \E_{\actionone \sim \polone(\state)}\bracket{\qvalst(\state, \actionone)},
\end{align*}
%
and in matrix notation $\qvals_{\tixone}^\polone = \vecdot{\polone, \qvals_{\tixone}}$.
The $\qval$-values satisfy the following Bellman equation:
\begin{align*}
\qvals_{\tixone}(\state, \actionone) = \rews_t(\state, \actionone) - \avgrew_{\tixone}(\polone_{\tixone}) + \trans_{\poltwo_{\tixone}}(\state, \actionone) \cdot \qvals_{\tixone}^\polonet, 
\end{align*}
where $\trans_{\poltwo_{\tixone}}(\state, \actionone)$ defines the probability distribution over next states given action $\actionone$ of \agentone~and policy $\poltwo_{\tixone}$ of \agenttwo~(here, $\trans_{\poltwo_{\tixone}}(\state, \actionone)$ is denoted as a row vector with $|\states|$ elements).
For other useful properties of this \mdp~ framework we refer the reader to \iftoggle{longversion}{Appendix~\ref{appendix.useful_properties}}{Appendix~B of \citet{radanovic2019learning}}.

\section{Smoothness and No-regret Dynamics}\label{sec.approach}

The goal is to output a sequence of \agentone's policies $\polone_{1}$, ..., $\polone_{\tix}$ so that the joint return $\bar \return$ is maximized. There are two key challenges: (i) \agenttwo~policies could be sub-optimal (or, even adversarial in the extreme case), and (ii) \agentone~does not know the current policy of \agenttwo~at the beginning of episode $\tixone$. 

\textbf{Smoothness Criterion.}
To deal with the first challenge, we consider a structural assumption that enables us to apply a regret analysis when quantifying the quality of a solution w.r.t. the optimum.  In particular, we assume that the \mdp~ is {\em ($\smoothone,\smoothtwo$)-smooth}:
\begin{definition}\label{def_smoothness}
We say that an \mdp~ is {\em ($\smoothone,\smoothtwo$)-smooth} if there exists a pair of policies $(\polone^*, \poltwo^*)$ 
such that for every policy pair $(\polone, \poltwo)$:
\begin{align*}
\avgrew_{\poltwo}(\polone^*) &\ge \smoothone \cdot \avgrew_{\poltwo^*}(\polone^*) - \smoothtwo \cdot \avgrew_{\poltwo}(\polone),\\
\avgrew_{\poltwo^*}(\polone^*) &\ge \avgrew_{\poltwo}(\polone).
\end{align*}
\end{definition}

This bounds the impact of \agenttwo's policy on the average reward. In particular,
there must exist 
an \textit{optimal} policy pair $(\polone^*, \poltwo^*)$ such that the negative impact 
of \agenttwo~for choosing $\poltwo \ne \poltwo^*$ is controllable by an appropriate choice of \agentone's policy.
This definition is a variant of the 
 {\em smoothness} notion introduced to study  the ``price-of-anarchy''
 of non-cooperative games, including for learning dynamics~\cite{roughgarden2009intrinsic,syrgkanis2015fast}. 
For the relationship between the smoothness parameters and the properties of the MDP, we refer the reader to 
\iftoggle{longversion}{Appendix~\ref{appendix.connection_to_smoothenss}}{Appendix~C of \citet{radanovic2019learning}}.
It is important to note that since we have a finite number of rounds
$\rounds$ per episode, $\opt$ is not necessarily the same as
$\avgrew_{\poltwo^*}(\polone^*)$, and the policies that achieve $\opt$
need not lead to $\avgrew_{\poltwo^*}(\polone^*) $.

\textbf{No-regret Learning.}
To address the second challenge, we adopt the online learning framework and seek to minimize \textit{regret} $\regret(T)$:
\begin{align}\label{eq_regret_def}
\regret(\tix) = \sup_{\polone} \sum_{\tixone = 1}^\tix \bracket{ \avgrewt(\polone) -  \avgrewt(\polonet) }.
\end{align} 

A policy sequence $\polone_1$, ..., $\polone_\tix$
is {\em no-regret} if regret $\regret(\tix)$ is sublinear in $\tix$.
%
An algorithm that outputs such sequences is a {\em no-regret algorithm} --- this intuitively means that the agent's performance is competitive w.r.t. any fixed policy.

\textbf{Near-optimality of No-regret Dynamics.}
Because \agenttwo~could be adapting to the policies of \agentone, this is an adaptive learning setting, and the notion of regret can become less useful. This is where the smoothness criterion comes in. We will show that it suffices to minimize the regret $\regret(T)$ in order to obtain near-optimal performance. 

Using an analysis similar to \citet{syrgkanis2015fast}, we establish the near-optimality of no-regret dynamics defined w.r.t.~the optimal return $\opt$, as stated in the following lemma:




\begin{lemma}\label{lemma_large_rounds}
For a problem with ($\smoothone,\smoothtwo$)-smooth MDP, return $\bar \return$ is lower bounded by:
\begin{align*}
\bar \return &\ge \frac{\smoothone}{1+ \smoothtwo} \cdot \opt - \frac{1}{1+\smoothtwo} \cdot \frac{\regret(\tix)}{\tix} -  2\cdot \frac{1 + \frac{\smoothone}{1+ \smoothtwo}}{M\cdot (1- e^{-\frac{1}{\mixtime}})}.
\end{align*} 
\end{lemma}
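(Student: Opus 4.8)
The plan is to run the standard smoothness (\emph{price of anarchy}) argument for no-regret dynamics, but with two extra mixing-time corrections that bridge the gap between the finite-$M$-round return $\return_{\tixone}$ and the stationary average reward $\avgrewt$. The four ingredients are: a mixing estimate comparing $\return_{\tixone}$ to $\avgrewt(\polonet)$, the regret bound against the fixed comparator $\polone^*$ from Definition~\ref{def_smoothness}, the first smoothness inequality to introduce $\avgrew_{\poltwo^*}(\polone^*)$, and the second smoothness inequality (again with a mixing correction) to pass from $\avgrew_{\poltwo^*}(\polone^*)$ to $\opt$.

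First I would relate the per-episode return to the stationary average reward. Writing $\statdistr = \statdist_1 \cdot \transt^{\round-1}$ and using that $\statdistt$ is a fixed point of $\transt$, the mixing assumption gives $\norm{\statdistr - \statdistt}_1 \le 2\, e^{-(\round-1)/\mixtime}$; since the column vector $\vecdot{\polonet,\rewst}$ has entries in $[0,1]$, Hölder's inequality plus summing the geometric series over $\round = 1,\dots,M$ yields $|\return_{\tixone} - \avgrewt(\polonet)| \le \frac{2}{M(1-e^{-1/\mixtime})}$. Averaging over episodes gives $\bar\return \ge \frac{1}{\tix}\sum_{\tixone} \avgrewt(\polonet) - \frac{2}{M(1-e^{-1/\mixtime})}$.

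Next I would invoke the definition of regret with the fixed comparator $\polone^*$, namely $\sum_{\tixone} \avgrewt(\polonet) \ge \sum_{\tixone} \avgrewt(\polone^*) - \regret(\tix)$. Applying the first smoothness inequality with $(\polone,\poltwo)=(\polonet,\poltwot)$ converts each $\avgrewt(\polone^*) = \avgrew_{\poltwot}(\polone^*)$ into $\smoothone\, \avgrew_{\poltwo^*}(\polone^*) - \smoothtwo\, \avgrewt(\polonet)$. Collecting the $\avgrewt(\polonet)$ terms onto the left produces the factor $(1+\smoothtwo)$, and dividing through gives $\frac{1}{\tix}\sum_{\tixone} \avgrewt(\polonet) \ge \frac{\smoothone}{1+\smoothtwo}\avgrew_{\poltwo^*}(\polone^*) - \frac{1}{1+\smoothtwo}\cdot\frac{\regret(\tix)}{\tix}$. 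Finally I would connect $\avgrew_{\poltwo^*}(\polone^*)$ to $\opt$: the second smoothness inequality says $\avgrew_{\poltwo^*}(\polone^*)$ dominates every stationary pair's average reward, and the Step-1 estimate bounds each stationary pair's $M$-round return above by its stationary average reward plus $\frac{2}{M(1-e^{-1/\mixtime})}$, so taking the maximum over pairs yields $\opt \le \avgrew_{\poltwo^*}(\polone^*) + \frac{2}{M(1-e^{-1/\mixtime})}$. Substituting this together with the Step-1 averaging bound collects the two mixing terms into $2\,(1+\tfrac{\smoothone}{1+\smoothtwo})/(M(1-e^{-1/\mixtime}))$, giving the claim.

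The part needing the most care is this second use of the mixing estimate: $\opt$ is a finite-horizon quantity and, as the remark preceding the lemma stresses, need not equal $\avgrew_{\poltwo^*}(\polone^*)$. One must argue that the optimal $\bar V$ over both agents' (within-episode stationary) policies is attained episode-by-episode by maximizing the $M$-round return from $\statdist_1$, and then bound that maximal return above by the best stationary average reward plus the mixing slack. The other delicate bookkeeping is keeping the inequality directions consistent through the $(1+\smoothtwo)$ rearrangement and tracking the constant $2$ and the $\frac{\smoothone}{1+\smoothtwo}$ coefficient so that the two separate mixing corrections combine into exactly the stated error term.
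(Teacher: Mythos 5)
Your proposal is correct and follows essentially the same route as the paper's proof: the same mixing-time estimate bounding $|\return_{\tixone} - \avgrewt(\polonet)|$ by $\frac{2}{M(1-e^{-1/\mixtime})}$, the same regret-plus-smoothness rearrangement (following Syrgkanis et al.) yielding the $\frac{\smoothone}{1+\smoothtwo}$ and $\frac{1}{1+\smoothtwo}$ coefficients, and the same use of the second smoothness inequality together with the mixing estimate to bound $\opt \le \avgrew_{\poltwo^*}(\polone^*) + \frac{2}{M(1-e^{-1/\mixtime})}$. The coefficient bookkeeping in your final combination matches the paper's exactly.
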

\iftoggle{longversion}{
\begin{proof}
See \iftoggle{longversion}{Appendix~\ref{appendix.proof_lemma_large_rounds}}{Appendix~D in \cite{radanovic2019learning}} for the proof.
\end{proof}
}


Lemma \ref{lemma_large_rounds} implies that as the number of episodes
$\tix$ and the number of rounds $M$ go to infinity, return $\bar
\return$ converges to a multiple $\frac{\smoothone}{1+ \smoothtwo}$ of
the optimum $\opt$, provided that \agentone~is a no-regret learner. In the next section, we design such no-regret learning algorithms for \agentone. 
\section{Learning Algorithms}\label{sec.mechanism}

We base our approach on the expert learning literature for \mdp s, in particular
that of~\citet{even2005experts,even2009online}. 
The basic idea is to associate each state with an experts algorithm,
and decide on a policy by examining the $Q$-values of state-action pairs. Thus, the $Q$ function  represents a reward function in the expert terminology. 

\subsection{Experts with Periodic Restarts: \algexprestart}\label{sec.experts_restarts}

In cases when \agenttwo~ has no influence on transitions, the approach of \citet{even2005experts,even2009online} would yield the no-regret guarantee.
The main difficulty of  the present setting is that \agenttwo~can influence the transitions via its policy. The hope is that as long as the magnitude of policy change by  \agenttwo~across episodes is not too large, \agentone~can compensate for the non-stationarity by using only recent history when updating its policy. 

A simple way of implementing this principle is to use a no-regret learning algorithm, but periodically restarting it, i.e., by splitting the full time horizon into segments of length $\Segment$, and applying the algorithm on each segment separately. In this way, we have well-defined periods $\{1, ..., \Segment\}$, $\{\Segment + 1, ..., 2 \cdot \Segment\}$, ...,  $\{\tix - \Segment + 1, ..., \tix\}$. 
As a choice of an expert algorithm (the algorithm 
associated with each state), we 
use {\em Optimistic Follow the Regularized Leader} (OFTRL) \cite{rakhlin2013optimization,syrgkanis2015fast}. 
Our policy updating rule for segment $l$, with starting point $\tixtwo = 1+(l-1)\cdot \Segment$, can be described as:
\begin{align*}
\polone_{\tixone}(\state) = \argmax_{\weights \in \simplexone}{ \left (\sum_{k = \tixtwo}^{\tixone-1}\qvals_{k}(\state) + \qvals_{\tixone-1}(\state) \right ) \weights^{\transpose}  + \frac{\regul(\weights)}{\ratelearn}}
\end{align*}
for $\tixone \in \{\tixtwo, \ldots, l \cdot \Gamma\}$, and:
\begin{align*}
 \polone_{\tixone}(\state) = \argmax_{\weights \in \simplexone}{\frac{\regul(\weights)}{\ratelearn}} \mbox{ for } \tixone = \tixtwo.
\end{align*}

$\qvals_{k}(\state)$ denotes a row of matrix $\qvals_{k}$ (see Section \ref{sec.mixtime_qvals}),\footnote{Given $\polonet$ and $\poltwot$, we can calculate $\qvalst$ from the Bellman equation using standard dynamic programming techniques.} $\weights$ is a row vector from probability simplex $\simplexone$, $\transpose$ denotes the transpose operator, $\regul$ is a 1-strongly convex regularizer w.r.t.~norm $\norm{\cdot}_1$, and $\ratelearn$ is the learning rate. 
This approach, henceforth referred to as {\em experts with periodic restarts} (\algexprestart),  suffices to obtain  sublinear regret provided that the segment length $\Segment$ and learning rate $\ratelearn$ are properly set (see \iftoggle{longversion}{Appendix~\ref{appendix.properties_algo_oftrl_rand_restart}}{Appendix~G of \citet{radanovic2019learning}}). 

One of the main drawbacks of experts with periodic restarts is that it potentially results in abrupt changes in the policy of \agentone, this occurring when switching from one segment to another. In practice, one might want to avoid this, for example, because \agenttwo~(e.g., representing a person) might negatively respond to such abrupt changes in \agentone's policy. Considering this, we design a new experts algorithm
that ensures gradual policy changes for \agentone~across episodes, while achieving the same order of regret guarantees (see Section~\ref{sec.subsection4-regret-analysis} and \iftoggle{longversion}{Appendix~\ref{appendix.properties_algo_oftrl_rand_restart}}{Appendix~G of \citet{radanovic2019learning}}).





	


\subsection{Experts with Double Recency Bias: \algexpsmooth}

Utilizing fixed segments, as in the approach of \algexprestart, leads to potentially rapid policy changes after each segment. 
To avoid this issue, we can for each episode $\tixone$ consider a family of segments of different lengths:
$\{\tixone - \Segment, ..., \tixone \}$, $\{\tixone - \Segment + 1,
..., \tixone \}$, ..., $\{\tixone - 1, \tixone\}$, 
and run the OFTRL algorithm on each segment separately. The policy in episode $\tixone$ can then be defined as the average of the OFTRL outputs. 
This approach, henceforth referred to as {\em experts with double recency bias} (\algexpsmooth), can be implemented through the following two ideas 
that bias the policy selection rule towards recent information in a twofold manner.

\textbf{Recency Windowing.}
The first idea is what we call {\em recency windowing}. Simply put, it specifies how far in the history an agent should look when choosing a policy.
More precisely, we define a sliding window of size $\Segment$ and to decide on policy $\polonet$ we only use historical information from periods after $\tixone - \Segment$. In particular, the updating rule of OFTRL would be modified for $\tixone > 1$ as $\polone_{\tixone}(\state)=$ 
\begin{align*}
&\argmax_{\weights \in \simplexone}{ \bigg (\sum_{k=\max(1, \tixone - \Segment)}^{\tixone-1}\qvals_{k}(\state) + \qvals_{\tixone-1}(\state) \bigg )}  \weights^{\transpose} + \frac{\regul(\weights)}{\epsilon}.
\end{align*}
and:
\begin{align}\label{eq_update_windowing_init}
 \polone_{1}(\state) = \argmax_{\weights \in \simplexone}{\frac{\regul(\weights)}{\ratelearn}} \mbox{ for } \tixone = 1.
\end{align}



\begin{algorithm2e}[t!]
\SetKwBlock{When}{when}{endwhen} 

\textbf{Input:} {History horizon $\Segment$, learning rate $\ratelearn$\\}
\Begin{
\textbf{Initialize:} $\forall s$, compute $\polone_{1}(s)$ using Eq. \eqref{eq_update_windowing_init}\\

\For{episode $\tixone \in \{ 1,..., \tix \}$}{
	$\forall s$, commit to policy $\polone_{\tixone}(s)$\\
	Obtain the return $\return_{\tixone}$\\
	Observe \agenttwo's policy $\poltwo_\tixone$\\
	Calculate Q-values $\qvals_\tixone$\\
		$\forall s$, compute $\polone_{\tixone+1}(s)$ using Eq. \eqref{eq_update_windowing_policy}\\
}
}
\caption{\algexpsmooth}\label{algo_leader_oftrl}
\end{algorithm2e}




\textbf{Recency Modulation.} The second idea is what we call {\em recency modulation}. This  creates an averaging effect over the policies computed by the experts with periodic restarts approach, for different possible starting points of the segmentation.
For episode $\tixone$, recency modulation calculates policy updates using recency windowing but considers windows of different sizes. 
More precisely, we calculate updates with window sizes $1$ to $\Segment$, and then average them to obtain the final update. 
Lemma~\ref{lm_weight_diff} shows that this updating rule will not lead to abrupt changes in \agentone's policy.

 To summarize, \agentone~has the following policy update rule for $\tixone > 1$: 
\vspace{-2mm}
\begin{align}\label{eq_update_windowing_policy}
\polone_{\tixone}(s) = \frac{1}{\Segment} \sum_{\tixtwo = 1}^{\Segment} \weights_{\tixone, \tixtwo}(s),
\vspace{-2mm}
\end{align}
where $\weights_{\tixone, \tixtwo}(\state)=$
\vspace{-2mm}
\begin{align*}
&\argmax_{\weights \in \simplexone}{ \bigg (\sum_{k=\max(1, \tixone - \tixtwo)}^{\tixone-1}\qvals_{k}(\state) + \qvals_{\tixone-1}(\state) \bigg )}  \weights^{\transpose} + \frac{\regul(\weights)}{\epsilon}.
\vspace{-2mm}
\end{align*}

For $t=1$, we follow equation update \eqref{eq_update_windowing_init}.
%
 The full description of \agentone's policy update using the approach of \algexpsmooth~is given in Algorithm~\ref{algo_leader_oftrl}.  As with \algexprestart, \algexpsmooth~leads to a sub-linear regret for a proper choice of $\ratelearn$ and $\Segment$, which in turn results in a near-optimal behavior, as analyzed in the next section.

\section{Theoretical Analysis of \algexpsmooth}\label{sec.analysis}


To bound regret $\regret(\tix)$, given by equation \eqref{eq_regret_def}, it is useful to express difference $\avgrewt(\polone) -  \avgrewt(\polonet)$ in terms of $\qval$-values. In particular, one can show that this difference is equal to $\statdist_{\polone, \poltwo_{\tixone}} \cdot ( \qvalst^\polone  - \qvalst^\polonet )$ (see \iftoggle{longversion}{Lemma~\ref{lm_eta_q_relation} in Appendix~\ref{appendix.useful_properties}}{Lemma~15 in Appendix~B  of \citet{radanovic2019learning}}). 
By the definitions of $\qvalst^\polone$ and $\qvalst^\polonet$, this implies: 
\begin{align*}
 \avgrewt(\polone) -  \avgrewt(\polonet) = \statdist_{\polone, \poltwo_{\tixone}} \cdot \vecdot{\polone - \polonet, \qvals_{\tixone}}. 
\end{align*}
If $\statdist_{\polone, \poltwo_{\tixone}}$ was not dependent on $\tixone$ (e.g., if \agenttwo~was not changing its policy), then bounding $\regret(\tix)$ would amount to bounding the sum of terms $\vecdot{\polone - \polonet, \qvals_{\tixone}}$. 
This could be done with an approach that carefully combines the proof techniques of \citet{even2005experts} with the OFTRL properties, in particular, {\em regret bounded by variation in utilities} (RVU) \cite{syrgkanis2015fast}. However, in our setting $\statdist_{\polone, \poltwo_{\tixone}}$ is generally changing with $\tixone$.

\subsection{Change Magnitudes of Stationary Distributions}

To account for this, we need to investigate how quickly distributions $\statdist_{\polone, \poltwo_{\tixone}}$ change across episodes. Furthermore, to utilize the RVU property, we need to do the same for distributions $\statdist_{\polone_{\tixone}, \poltwo_{\tixone}}$. The following lemma provides bounds on the respective change magnitudes.   

\begin{lemma}\label{lm_stat_dist_inequality}
The difference between the stationary distributions of two consecutive 
episodes is upper bounded by: 
\begin{align*}
\norm{\statdist_{\polone_{\tixone}, \poltwo_{\tixone}} - \statdist_{\polone_{\tixone-1}, \poltwo_{\tixone-1}} }_1 \le \frac{\ratechange_1 +  \influencetwo \cdot \ratechange_2}{1-e^{-\frac{1}{\mixtime}}}.
\end{align*}
Furthermore, for any policy $\polone$: 
\begin{align*}
\norm{\statdist_{\polone, \poltwo_{\tixone}} - \statdist_{\polone, \poltwo_{\tixone-1}} }_1 \le \frac{\influencetwo \cdot \ratechange_2}{1-e^{-\frac{1}{\mixtime}}}.
\end{align*}
\end{lemma}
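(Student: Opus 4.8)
The plan is to exploit that each stationary distribution is the fixed point of its own transition operator, so that the per-episode change can be written self-referentially and then solved for. Writing $\statdist_\tixone := \statdist_{\polone_\tixone, \poltwo_\tixone}$ and $\trans_\tixone := \trans_{\polone_\tixone, \poltwo_\tixone}$, I would start from the two fixed-point identities $\statdist_\tixone \trans_\tixone = \statdist_\tixone$ and $\statdist_{\tixone-1}\trans_{\tixone-1} = \statdist_{\tixone-1}$, then add and subtract $\statdist_{\tixone-1}\trans_\tixone$ to obtain
\begin{align*}
\statdist_\tixone - \statdist_{\tixone-1} = (\statdist_\tixone - \statdist_{\tixone-1})\,\trans_\tixone + \statdist_{\tixone-1}\,(\trans_\tixone - \trans_{\tixone-1}).
\end{align*}
The point of this split is that the first term is governed by the contraction property of $\trans_\tixone$, while the second measures only how far the kernel itself has moved between the two episodes.

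For the first term I would invoke the mixing-time inequality of Section~\ref{sec.mixtime_qvals}: since $\statdist_\tixone$ and $\statdist_{\tixone-1}$ are both genuine state distributions, that inequality applies verbatim and gives $\norm{(\statdist_\tixone - \statdist_{\tixone-1})\trans_\tixone}_1 \le e^{-\frac{1}{\mixtime}}\norm{\statdist_\tixone - \statdist_{\tixone-1}}_1$. For the second term I would use the operator-norm bound $\norm{\statdist_{\tixone-1}(\trans_\tixone - \trans_{\tixone-1})}_1 \le \norm{\statdist_{\tixone-1}}_1\,\norm{\trans_\tixone - \trans_{\tixone-1}}_\infty$ together with $\norm{\statdist_{\tixone-1}}_1 = 1$. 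Applying the triangle inequality to the display above then produces the self-referential estimate
\begin{align*}
\norm{\statdist_\tixone - \statdist_{\tixone-1}}_1 \le e^{-\frac{1}{\mixtime}}\,\norm{\statdist_\tixone - \statdist_{\tixone-1}}_1 + \norm{\trans_\tixone - \trans_{\tixone-1}}_\infty,
\end{align*}
which I would rearrange into $\norm{\statdist_\tixone - \statdist_{\tixone-1}}_1 \le \frac{\norm{\trans_\tixone - \trans_{\tixone-1}}_\infty}{1 - e^{-\frac{1}{\mixtime}}}$.

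It then remains to bound the kernel change in terms of the policy change magnitudes. Splitting it into an \agentone~and an \agenttwo~contribution,
\begin{align*}
\trans_{\polone_\tixone,\poltwo_\tixone} - \trans_{\polone_{\tixone-1},\poltwo_{\tixone-1}} = \big(\trans_{\polone_\tixone,\poltwo_\tixone} - \trans_{\polone_{\tixone-1},\poltwo_\tixone}\big) + \big(\trans_{\polone_{\tixone-1},\poltwo_\tixone} - \trans_{\polone_{\tixone-1},\poltwo_{\tixone-1}}\big),
\end{align*}
and applying the triangle inequality, I would bound the \agentone~part using $\influenceone \le 1$ (influences lie in $[0,1]$, per Section~\ref{sec.model}) by $\norm{\polone_\tixone - \polone_{\tixone-1}}_\infty \le \ratechangeone$, and the \agenttwo~part using the definition of influence by $\influencetwo\,\norm{\poltwo_\tixone - \poltwo_{\tixone-1}}_\infty \le \influencetwo\,\ratechangetwo$. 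This yields $\norm{\trans_\tixone - \trans_{\tixone-1}}_\infty \le \ratechangeone + \influencetwo\,\ratechangetwo$ and hence the first claim. The second claim is the special case where \agentone~holds its policy $\polone$ fixed across both episodes, so the \agentone~part of the kernel split vanishes and only the $\influencetwo\,\ratechangetwo$ contribution survives.

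The individual steps are routine; the part that needs care is the norm bookkeeping. I would want to verify that the operator norm $\norm{\cdot}_\infty$ in play is exactly the one for which $\norm{\weights\bm A}_1 \le \norm{\weights}_1\,\norm{\bm A}_\infty$ holds (the maximum absolute row sum), so that the matrix-vector bound and the definition of influence are both compatible with the $\ell_1$ norm on distributions, and to confirm that the mixing-time contraction is legitimately applied to a difference of two true stationary distributions rather than to an arbitrary signed vector.
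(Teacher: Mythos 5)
Your proof is correct, and it reaches the bound by a slightly different decomposition than the paper's. The paper (Appendix~\ref{appendix.proof_lm_stat_dist_inequality}) first splits the distribution difference through the hybrid stationary distribution $\statdist_{\polone_{\tixone}, \poltwo_{\tixone-1}}$ and then invokes the single-policy perturbation bound of Lemma~\ref{lm_dist_bound_1} twice --- that lemma contains exactly your fixed-point/contraction/rearrangement argument, but stated for a change in one agent's policy at a time. You instead run that argument once on the joint change $\statdist_{\polone_{\tixone},\poltwo_{\tixone}} - \statdist_{\polone_{\tixone-1},\poltwo_{\tixone-1}}$ and push the triangle inequality down to the kernel level, introducing the hybrid kernel $\trans_{\polone_{\tixone-1},\poltwo_{\tixone}}$ rather than a hybrid distribution. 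Both routes are sound: your contraction step is applied to a difference of two genuine distributions (as the mixing assumption requires), and your operator norm is the maximum absolute row sum, for which $\norm{\weights \bm A}_1 \le \norm{\weights}_1 \cdot \norm{\bm A}_{\infty}$ holds and which matches the norm in the influence definition --- these are the same facts the paper uses inside Lemma~\ref{lm_dist_bound_1}, so your bookkeeping concern resolves affirmatively. The only cosmetic difference in the last step is that you bound the \agentone~kernel term via $\influenceone \le 1$, whereas the paper cites Lemma~\ref{lm_trans_bound} directly; these are equivalent, since Corollary~\ref{cor_influence_bound} (the $[0,1]$ bound on influence) is itself a consequence of Lemma~\ref{lm_trans_bound}. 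What the paper's modular route buys is reuse: Lemma~\ref{lm_dist_bound_1} serves elsewhere (e.g., in Appendix~\ref{appendix.connection_to_smoothenss}) and yields the second claim of the present lemma immediately. What yours buys is economy: a single contraction argument instead of two, with no auxiliary perturbation lemma needed.
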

\iftoggle{longversion}{
\begin{proof}
See \iftoggle{longversion}{Appendix~\ref{appendix.proof_lm_stat_dist_inequality}}{Appendix~E.3  in \cite{radanovic2019learning}}.
\end{proof}
}



\subsection{Properties Based on OFTRL}

The bounds on the change magnitudes of distributions $\statdist_{\polone, \poltwo_{\tixone}}$ and $\statdist_{\polonet, \poltwo_{\tixone}}$, which will propagate to the final result, depend on \agentone's policy change magnitude $\ratechangeone$. The following lemma provides a bound for $\ratechangeone$ that, together with the assumed bound on $\ratechangetwo$, is useful in establishing no-regret guarantees. 
\begin{lemma}\label{lm_weight_diff}
For any $\tixone > 1$ and $1 < \tixtwo \le \Segment$, the change magnitude of weights $\weights_{\tixone, \tixtwo}$ in \algexpsmooth~is bounded by:
\begin{align*} 
\norm{\weights_{\tixone, \tixtwo} - \weights_{\tixone-1, \tixtwo-1}}_{\infty} \le \min \left \{ 2,  \frac{9 \cdot \ratelearn}{ 1 - e^{-\frac{1}{\mixtime}}} \right \}
\end{align*}
Consequently: 
\begin{align*}
\ratechangeone \le \min \left \{2,  \frac{9 \cdot \ratelearn}{ 1 - e^{-\frac{1}{\mixtime}}} + \frac{2}{\Segment} \right \}. 
\end{align*}
\end{lemma}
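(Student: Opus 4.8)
The plan is to prove the per-state stability estimate first and then aggregate over the window. Fix a state $\state$. Both $\weights_{\tixone,\tixtwo}(\state)$ and $\weights_{\tixone-1,\tixtwo-1}(\state)$ lie in the simplex $\simplexone$, so the trivial bound $\norm{\weights_{\tixone,\tixtwo}(\state) - \weights_{\tixone-1,\tixtwo-1}(\state)}_1 \le 2$ is immediate. For the sharper estimate I would invoke the stability of the OFTRL update: because $\regul$ is $1$-strongly convex w.r.t. $\norm{\cdot}_1$, the maximizer map $L \mapsto \weights(L)$ taking a cumulative utility row-vector $L$ to the corresponding OFTRL output is $\ratelearn$-Lipschitz from $\norm{\cdot}_\infty$ to $\norm{\cdot}_1$, i.e. $\norm{\weights(L_1) - \weights(L_2)}_1 \le \ratelearn \norm{L_1 - L_2}_\infty$ (and since the maximizer over $\simplexone$ is invariant to adding a constant vector to $L$, one may use the span of $L_1-L_2$ for a tighter constant, though this is not needed to reach $9$).

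The key step is to compute $L_1 - L_2$, where $L_1$ and $L_2$ denote the bracketed utility sums defining $\weights_{\tixone,\tixtwo}(\state)$ and $\weights_{\tixone-1,\tixtwo-1}(\state)$ respectively. The crucial observation — and the reason the lemma pairs index $(\tixone,\tixtwo)$ with $(\tixone-1,\tixtwo-1)$ — is that the two lower summation limits coincide, since $(\tixone-1)-(\tixtwo-1)=\tixone-\tixtwo$, so $\max(1,\tixone-\tixtwo)$ is common to both windows. Hence the cumulative sums telescope and only the newest summand together with the optimistic correction survive, giving $L_1 - L_2 = 2\,\qvals_{\tixone-1}(\state) - \qvals_{\tixone-2}(\state)$. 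The boundary case $\tixone-1=1$ (where $\weights_{1,\cdot}(\state)$ is the regularizer-only initializer, i.e. $L_2=0$) is handled by the same computation and only improves the bound.

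Next I would bound the magnitude of the $\qval$-rows. Using the definition of $\qvalst$, the fact that reward entries lie in $[0,1]$ (so $\rewst-\avgrewt \in [-1,1]$), and the geometric mixing $\norm{\statdist\cdot\trans - \statdist'\cdot\trans}_1 \le e^{-\frac{1}{\mixtime}}\norm{\statdist-\statdist'}_1$, the round-$1$ term contributes at most $1$ while each later round-$\round$ term is at most $2e^{-(\round-2)/\mixtime}$; summing the geometric series yields $\norm{\qvals_k(\state)}_\infty \le \frac{3}{1-e^{-\frac{1}{\mixtime}}}$ (this is essentially the $\qval$-value bound from the useful-properties appendix). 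Combining with the previous paragraph and the triangle inequality gives $\norm{L_1-L_2}_\infty \le 2\cdot\frac{3}{1-e^{-\frac{1}{\mixtime}}} + \frac{3}{1-e^{-\frac{1}{\mixtime}}} = \frac{9}{1-e^{-\frac{1}{\mixtime}}}$, so the Lipschitz property yields $\norm{\weights_{\tixone,\tixtwo}(\state)-\weights_{\tixone-1,\tixtwo-1}(\state)}_1 \le \frac{9\,\ratelearn}{1-e^{-\frac{1}{\mixtime}}}$. Taking the maximum over states (this is the operator $\norm{\cdot}_\infty$) and the minimum with $2$ gives the first claim. For the consequence I would write $\polonet(\state) = \frac{1}{\Segment}\sum_{\tixtwo=1}^{\Segment}\weights_{\tixone,\tixtwo}(\state)$ and re-index $\polone_{\tixone-1}(\state) = \frac{1}{\Segment}\sum_{\tixtwo=2}^{\Segment+1}\weights_{\tixone-1,\tixtwo-1}(\state)$, so the difference pairs $\weights_{\tixone,\tixtwo}$ with $\weights_{\tixone-1,\tixtwo-1}$ for $\tixtwo=2,\dots,\Segment$ and leaves two unmatched simplex vectors $\weights_{\tixone,1}(\state)$, $\weights_{\tixone-1,\Segment}(\state)$ at the boundary. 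Bounding the two boundary vectors by $1$ each and the $\Segment-1$ matched differences by the first part gives $\norm{\polonet(\state)-\polone_{\tixone-1}(\state)}_1 \le \frac{2}{\Segment} + \frac{\Segment-1}{\Segment}\cdot\frac{9\,\ratelearn}{1-e^{-\frac{1}{\mixtime}}} \le \frac{9\,\ratelearn}{1-e^{-\frac{1}{\mixtime}}} + \frac{2}{\Segment}$, and the $\min$ with $2$ again comes from the simplex; taking the max over $\tixone,\state$ yields the bound on $\ratechangeone$.

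The step I expect to be the main obstacle is getting the exact form of $L_1 - L_2$ together with the correct OFTRL stability constant: the entire estimate hinges on recognizing that simultaneously shifting the episode index and the window length by one keeps the window's left endpoint fixed, so that the two cumulative utility vectors differ by only one new $\qval$-row plus the optimistic term. Once that alignment is established, the $\qval$-magnitude bound and the boundary bookkeeping in the averaging step are routine.
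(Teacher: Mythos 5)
Your proposal is correct and follows essentially the same route as the paper's proof: the paper likewise obtains the per-state bound from OFTRL stability (citing Lemma 20 of \citet{syrgkanis2015fast}, which is exactly the strong-convexity Lipschitz argument you spell out) combined with the $\qval$-value bound $\norm{\qvals_{\tixone}}_{\max} \le \frac{3}{1-e^{-1/\mixtime}}$, and then proves the bound on $\ratechangeone$ by the same re-pairing of $\weights_{\tixone,\tixtwo}$ with $\weights_{\tixone-1,\tixtwo-1}$ plus the two unmatched boundary terms contributing $\frac{2}{\Segment}$. Your explicit identification of the aligned window endpoints and the resulting difference $2\qvals_{\tixone-1}-\qvals_{\tixone-2}$ is precisely the fact the paper's citation of Lemma 20 encapsulates, so there is no substantive difference in method.
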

\iftoggle{longversion}{
\begin{proof} 
See \iftoggle{longversion}{Appendix~\ref{appendix.proof_lm_weight_diff}}{Appendix~E.2 in \cite{radanovic2019learning}}.
\end{proof}
}

Now, we turn to bounding the term $\vecdot{\polone - \polonet, \qvals_{\tixone}}$. 
Lemma~\ref{lm_oftrl_property_simple} formalizes the RVU property for \algexpsmooth~using the $L_1$ norm and its dual $L_{\infty}$ norm, derived from results in the existing literature~\cite{syrgkanis2015fast}.\footnote{An extended version of the lemma, which is  needed for the main result, is provided in \iftoggle{longversion}{Appendix~\ref{appendix.proof_lm_oftrl_property}}{Appendix~E.1 of \citet{radanovic2019learning}}.}
 Lemma \ref{lm_oftrl_property_simple} shows that it is possible to bound $\vecdot{\polone - \polonet, \qvals_{\tixone}}$ by examining the change magnitudes of $\qval$-values.


\begin{lemma}\label{lm_oftrl_property_simple}
Consider \algexpsmooth~and let  $\mathbf 1$ denote column vector of ones with
$|\states|$ elements. Then, 
for each episode $ 1 \le \tixone \le \tix - \Segment + 1$ of \algexpsmooth~, we have: 
\begin{align*}
\sum_{\tixtwo=1}^{\Segment}
& \vecdot{\polone - \weights_{\tixone + \tixtwo-1, \tixtwo},  \qvals_{\tixone + \tixtwo-1}} \\
&\le \mathbf 1 \cdot \left (  \frac{\Delta_{\regul}}{\ratelearn} + \ratelearn \cdot \sum_{\tixtwo=1}^{\Segment} \norm{ \qvals_{\tixone + \tixtwo-1} -  \qvals_{\tixone + \tixtwo - 2}}_{\max}^2 \right . \\
&\left . - \frac{1}{4 \cdot \ratelearn} \cdot \sum_{\tixtwo=1}^{\Segment}  \norm{\weights_{\tixone + \tixtwo-1, \tixtwo}- \weights_{\tixone + \tixtwo-2, \tixtwo-1}}_{\infty}^2  \right ),
\end{align*}
 where $\weights_{\tixone + \tixtwo-1, \tixtwo}$ are defined in \eqref{eq_update_windowing_policy}, $\Delta_{\regul} = \sup_{\weights \in \simplexone} \regul(\weights) -  \inf_{\weights \in \simplexone} \regul(\weights)$, and
 $\polone$ is an arbitrary policy of \agentone.
\end{lemma}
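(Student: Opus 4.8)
The plan is to recognize that the family of averaged-window iterates appearing on the left-hand side is, after a change of index, exactly the trajectory of a single run of Optimistic FTRL on one segment, and then to invoke the RVU (``regret bounded by variation in utilities'') property of OFTRL established by \citet{syrgkanis2015fast} (building on \citet{rakhlin2013optimization}), applied state by state. First I would fix the starting episode $\tixone$ and set $\tixtwo_0 = \max(1,\tixone-1)$. Unwinding definition~\eqref{eq_update_windowing_policy}, the window lower limit of $\weights_{\tixone+\tixtwo-1,\tixtwo}(\state)$ is $\max\bigl(1,(\tixone+\tixtwo-1)-\tixtwo\bigr)=\tixtwo_0$ for \emph{every} $\tixtwo$, while its optimistic term is $\qvals_{\tixone+\tixtwo-2}(\state)$. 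Hence, reading $\tixtwo=1,\ldots,\Segment$ as successive steps, the sequence $\{\weights_{\tixone+\tixtwo-1,\tixtwo}(\state)\}_{\tixtwo}$ is precisely the per-state OFTRL trajectory on the segment that warm-starts at $\tixtwo_0$, with per-step utility equal to the $Q$-row $\qvals_{\tixone+\tixtwo-1}(\state)$ and with the optimistic prediction of the next utility equal to the previous row $\qvals_{\tixone+\tixtwo-2}(\state)$; moreover the natural ``previous iterate'' at step $\tixtwo$ is $\weights_{\tixone+\tixtwo-2,\tixtwo-1}(\state)$, which is exactly the object that enters the stability term. Matching the shifted double index $(\tixone+\tixtwo-1,\tixtwo)$ to one consistent OFTRL run, handling the boundary step $\tixtwo=1$ (and, for $\tixone=1$, the regularizer-only initialization~\eqref{eq_update_windowing_init}), and using that $1\le\tixone\le\tix-\Segment+1$ keeps all indices in range, is the crux of the argument and the step I expect to be the main obstacle.

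With the re-indexing in place, I would apply the RVU property per state. Since each state runs an independent experts instance over $\simplexone$ with the $1$-strongly-convex (w.r.t.\ $\norm{\cdot}_1$) regularizer $\regul$ and learning rate $\ratelearn$, the OFTRL regret bound of \citet{syrgkanis2015fast} gives, for each $\state$ and any comparator row $\polone(\state)$,
\begin{align*}
\sum_{\tixtwo=1}^{\Segment} & \vecdot{\polone(\state) - \weights_{\tixone+\tixtwo-1,\tixtwo}(\state), \qvals_{\tixone+\tixtwo-1}(\state)} \le \frac{\Delta_{\regul}}{\ratelearn} \\
& + \ratelearn \sum_{\tixtwo=1}^{\Segment} \norm{\qvals_{\tixone+\tixtwo-1}(\state) - \qvals_{\tixone+\tixtwo-2}(\state)}_{\infty}^2 \\
& - \frac{1}{4\ratelearn} \sum_{\tixtwo=1}^{\Segment} \norm{\weights_{\tixone+\tixtwo-1,\tixtwo}(\state) - \weights_{\tixone+\tixtwo-2,\tixtwo-1}(\state)}_{1}^2,
\end{align*}
where the dual norm $\norm{\cdot}_{\infty}$ governs the utility-variation term and the primal norm $\norm{\cdot}_1$ governs the iterate-movement (stability) term over actions.

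Finally I would lift this scalar, per-state inequality to the stated vector form. For the \emph{added} variation term the lifting is immediate, since $\norm{\qvals_{\tixone+\tixtwo-1}(\state) - \qvals_{\tixone+\tixtwo-2}(\state)}_{\infty} \le \norm{\qvals_{\tixone+\tixtwo-1} - \qvals_{\tixone+\tixtwo-2}}_{\max}$ only enlarges the right-hand side. The \emph{subtracted} stability term is the delicate point: one relates the per-state movement to the operator norm via $\norm{\weights_{\tixone+\tixtwo-1,\tixtwo}(\state) - \weights_{\tixone+\tixtwo-2,\tixtwo-1}(\state)}_{1} \le \norm{\weights_{\tixone+\tixtwo-1,\tixtwo} - \weights_{\tixone+\tixtwo-2,\tixtwo-1}}_{\infty}$, but because this quantity appears with a negative sign the naive domination must be compensated—using that OFTRL iterate movement is itself controlled by the utility variation, so the extra slack in the (larger) $\max$-norm variation term absorbs the (larger) subtracted operator-norm term. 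I would carry out this bookkeeping carefully (as in the extended version of the lemma) and then collect the $|\states|$ scalar inequalities into the single column-vector statement with $\mathbf 1$ on the right, which completes the argument.
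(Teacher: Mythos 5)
Your proposal follows the same route as the paper's proof: for each fixed $\tixone$ and each state $\state$, recognize the diagonal sequence $\{\weights_{\tixone+\tixtwo-1,\tixtwo}(\state)\}_{\tixtwo=1}^{\Segment}$ as one OFTRL trajectory with losses $\qvals_{\tixone+\tixtwo-1}(\state)$ and optimistic predictions $\qvals_{\tixone+\tixtwo-2}(\state)$, invoke the RVU bound of \citet{syrgkanis2015fast} (Proposition 7, via Theorem 19 and Lemma 20) per state with the $(\norm{\cdot}_1,\norm{\cdot}_{\infty})$ dual pair, and then collect the $|\states|$ scalar inequalities into the vector statement. Your re-indexing observation (the window lower limit equals $\max(1,\tixone-1)$ for every $\tixtwo$, so all the iterates along the diagonal belong to a single warm-started run) is exactly the fact the paper uses implicitly, and your per-state inequality is verbatim the one displayed in the paper's proof.

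The one place your argument would break down is the proposed repair of the subtracted stability term. You are right that $\norm{\weights_{\tixone+\tixtwo-1,\tixtwo}(\state)-\weights_{\tixone+\tixtwo-2,\tixtwo-1}(\state)}_1 \le \norm{\weights_{\tixone+\tixtwo-1,\tixtwo}-\weights_{\tixone+\tixtwo-2,\tixtwo-1}}_{\infty}$ points the wrong way for a negated term, but your fix rests on a false premise: OFTRL iterate movement is \emph{not} controlled by the utility variation. Within a run the cumulative linear term grows by a whole new utility at each step, so consecutive iterates move by roughly $\ratelearn$ times the \emph{magnitude} of the current utility plus the prediction correction; this is visible in the paper's own proof of Lemma~\ref{lm_weight_diff}, where Lemma 20 of \citet{syrgkanis2015fast} gives $\norm{\weights_{\tixone,\tixtwo}(\state)-\weights_{\tixone-1,\tixtwo-1}(\state)}_1 \le \ratelearn\norm{\qvals_{\tixone-1}(\state)-\qvals_{\tixone}(\state)}_{\infty} + \ratelearn\norm{\qvals_{\tixone}(\state)}_{\infty}$. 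The magnitude term cannot be charged to $\norm{\qvals_{\tixone}-\qvals_{\tixone-1}}_{\max}^2$, so the ``absorption'' bookkeeping cannot be carried out with the stated constants. To be fair, the paper does not carry out this lifting either: its proof simply asserts that the per-state inequalities imply the matrix statement (with a parenthetical note on which norm is used where), and everywhere the lemma is applied (the proof of Theorem~\ref{thm_regret}) the negative term is immediately discarded. So the substance of your proof matches the paper's, and the part of the bound that is actually needed downstream --- the $\Delta_{\regul}/\ratelearn$ and $Q$-variation terms --- is fully established by your argument; the clean fix is to state the negative term with the per-state $\norm{\cdot}_1$ norms (or drop it) rather than to try to dominate the operator-norm version.
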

\iftoggle{longversion}{
\begin{proof} 
See \iftoggle{longversion}{Appendix~\ref{appendix.proof_lm_oftrl_property}}{Appendix~E.1 in \cite{radanovic2019learning}}.
\end{proof}
}

\subsection{Change Magnitudes of $\qval$-values}



We  now derive bounds on the change magnitudes of $\qval$-values that we use together with Lemma \ref{lm_oftrl_property_simple} to prove the main results. 
We first bound the difference $\qvals_{\tixone}^{\polonet} - \qvals_{\tixone - 1}^{\polone_{\tixone-1}}$, 
which helps us in bounding the difference $\qvalst -  \qvals_{\tixone-1}$. 
\begin{lemma}\label{lm_qvals_diff_pol}
The difference between $\qvals_{\tixone}^{\polonet}$-values of two consecutive episodes is upper bounded by:
\begin{align*}
\norm{\qvals_{\tixone}^{\polonet} - \qvals_{\tixone - 1}^{\polone_{\tixone-1}}}_{\infty} \le C_{\qval^{\pol}},
\end{align*}
where $C_{\qval^{\pol}} = 3 \cdot \frac{\ratechangeone +  \influencetwo \cdot \ratechangetwo}{(1-e^{-\frac{1}{\mixtime}})^2} + 2 \cdot \frac{\ratechangeone+ \ratechangetwo}{1-e^{-\frac{1}{\mixtime}}}$.
\end{lemma}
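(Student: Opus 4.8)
The plan is to recognize $\qvals_{\tixone}^{\polonet}$ as the differential (relative) value function of the Markov reward process with transition kernel $\transt$ and expected reward vector $\vecdot{\polonet, \rewst}$. Multiplying the Bellman equation of Section~\ref{sec.mixtime_qvals} by $\polonet(\state,\cdot)$ shows it solves the Poisson equation $\qvals_{\tixone}^{\polonet} = \vecdot{\polonet, \rewst} - \avgrewt(\polonet)\cdot \mathbf 1 + \transt\, \qvals_{\tixone}^{\polonet}$, hence $\qvals_{\tixone}^{\polonet} = \sum_{k \ge 0}\transt^k(\vecdot{\polonet,\rewst} - \avgrewt(\polonet)\cdot\mathbf 1)$. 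First I would record two consequences of the mixing assumption: the summand decays geometrically (since it is mean-zero under $\statdistt$ and $\transt$ contracts such deviations by $e^{-\frac{1}{\mixtime}}$ per step), giving $\norm{\qvals_{\tixone}^{\polonet}}_\infty \le \frac{1}{1 - e^{-\frac{1}{\mixtime}}}$ using $0 \le \rewst \le 1$; and the value function is normalized to be mean-zero under its own stationary distribution, $\statdistt \cdot \qvals_{\tixone}^{\polonet} = 0$. The same statements hold at episode $\tixone-1$ relative to $\statdist_{\polone_{\tixone-1},\poltwo_{\tixone-1}}$.

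Next I would set $D = \qvals_{\tixone}^{\polonet} - \qvals_{\tixone-1}^{\polone_{\tixone-1}}$, subtract the two Poisson equations, and insert $\transt\, \qvals_{\tixone-1}^{\polone_{\tixone-1}}$ as an intermediate term to obtain a perturbed Poisson equation $D = b + \transt\, D$, where $b$ collects three perturbations: the reward-vector difference $\vecdot{\polonet,\rewst} - \vecdot{\polone_{\tixone-1},\rews_{\tixone-1}}$, the scalar average-reward difference $-(\avgrewt(\polonet)-\avgrew_{\tixone-1}(\polone_{\tixone-1}))\cdot\mathbf 1$, and the transition-perturbation term $(\transt - \trans_{\polone_{\tixone-1},\poltwo_{\tixone-1}})\,\qvals_{\tixone-1}^{\polone_{\tixone-1}}$. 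The crucial observation is that applying $\statdistt$ to the equation forces $\statdistt \cdot b = 0$ automatically (equivalently, the average-reward difference equals the $\statdistt$-average of the reward-plus-value perturbations); this is precisely what makes the Neumann series converge and yields $D = \sum_{k\ge 0}\transt^k b + \mathbf 1\,(\statdistt\cdot D)$.

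Then I would bound the two contributions to $\norm D_\infty$ separately. For $\norm{\sum_{k\ge 0}\transt^k b}_\infty$ the geometric-decay estimate contributes a factor $\frac{1}{1-e^{-\frac{1}{\mixtime}}}$ times $\norm b_\infty$; I bound the reward-difference term by $\ratechangeone + \ratechangetwo$ (splitting the change into \agentone's and \agenttwo's policy changes, each scaled by the unit reward bound), and the transition-perturbation term by $(\ratechangeone + \influencetwo\cdot\ratechangetwo)\cdot\frac{1}{1-e^{-\frac{1}{\mixtime}}}$, using the influence/policy-change decomposition of $\norm{\transt - \trans_{\polone_{\tixone-1},\poltwo_{\tixone-1}}}_\infty$ (with $\influenceone \le 1$) together with the value bound from the first step. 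For the constant-shift term I use $\statdistt\cdot D = -(\statdistt - \statdist_{\polone_{\tixone-1},\poltwo_{\tixone-1}})\cdot\qvals_{\tixone-1}^{\polone_{\tixone-1}}$ (each value function being mean-zero under its own stationary distribution) and apply Lemma~\ref{lm_stat_dist_inequality} together with the value bound to obtain $\frac{\ratechangeone + \influencetwo\cdot\ratechangetwo}{(1-e^{-\frac{1}{\mixtime}})^2}$. Collecting these produces one group of order $\frac{\ratechangeone + \influencetwo\cdot\ratechangetwo}{(1-e^{-\frac{1}{\mixtime}})^2}$ (from composing two mixing/perturbation bounds) and one of order $\frac{\ratechangeone + \ratechangetwo}{1-e^{-\frac{1}{\mixtime}}}$ (from the direct reward and policy-change terms), and careful bookkeeping of the constants recovers $C_{\qval^{\pol}}$.

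The main obstacle is the singularity of $I - \transt$ coupled with a normalization mismatch: the two differential value functions are mean-zero with respect to \emph{different} stationary distributions ($\statdistt$ versus $\statdist_{\polone_{\tixone-1},\poltwo_{\tixone-1}}$), so a naive termwise subtraction of the two series need not converge. The resolution is the automatic mean-zero property of $b$, which restores convergence of the Neumann series, combined with explicitly carrying the normalization-mismatch constant $\statdistt\cdot\qvals_{\tixone-1}^{\polone_{\tixone-1}}$; this constant is exactly what contributes the $\frac{1}{(1-e^{-\frac{1}{\mixtime}})^2}$ term in $C_{\qval^{\pol}}$, so it cannot be discarded.
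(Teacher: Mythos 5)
Your proposal is correct, and it takes a genuinely different route from the paper's. The paper stays with the trajectory/series form of the $\qval$-values: it writes $\qvals_{\tixone}^{\polonet} - \qvals_{\tixone-1}^{\polone_{\tixone-1}}$ entrywise as a difference of infinite sums of terms like $(\statdist_{\tixone,\round}-\statdist_{\polonet,\poltwot})\cdot\vecdot{\polonet,\rewst}$, splits it into three sums (a difference-of-deviations term, a reward-change term, and a policy-change term), and controls the first one --- the source of the $3\cdot\frac{\ratechangeone+\influencetwo\ratechangetwo}{(1-e^{-1/\mixtime})^2}$ part --- through a dedicated recursive lemma (Lemma~\ref{lm_dist_bound_4}) bounding the cumulative change of $\norm{\statdist_{\tixone,\round}-\statdist_{\polonet,\poltwot}}_1$ across consecutive episodes. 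You instead subtract the two Poisson equations and invert $I-\transt$ by a Neumann series, exploiting the automatic orthogonality $\statdistt\cdot b=0$ and carrying the normalization mismatch $\statdistt\cdot\qvals_{\tixone-1}^{\polone_{\tixone-1}}$ as an explicit constant shift; Lemma~\ref{lm_stat_dist_inequality} then enters only through that shift and through $\norm{\transt-\trans_{\polone_{\tixone-1},\poltwo_{\tixone-1}}}_{\infty}$. This avoids Lemma~\ref{lm_dist_bound_4} entirely (its recursion is, in effect, an unrolled version of your Neumann-series inversion), and it makes the provenance of each $\frac{1}{1-e^{-1/\mixtime}}$ factor transparent. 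The bookkeeping does close, but only because you use the sharpened value bound $\norm{\qvals_{\tixone}^{\polonet}}_{\infty}\le\frac{1}{1-e^{-1/\mixtime}}$ (the paper's Lemma~\ref{lm_q_bound} gives $\frac{2}{1-e^{-1/\mixtime}}$, which would inflate the leading coefficient from $3$ to $6$): the Neumann part contributes $\frac{2(\ratechangeone+\ratechangetwo)}{1-e^{-1/\mixtime}}+\frac{2(\ratechangeone+\influencetwo\ratechangetwo)}{(1-e^{-1/\mixtime})^2}$ and the shift term contributes $\frac{\ratechangeone+\influencetwo\ratechangetwo}{(1-e^{-1/\mixtime})^2}$, recovering $C_{\qval^{\pol}}$ exactly. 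One small inaccuracy in your closing remark: termwise subtraction of the two defining series does converge (each series converges absolutely under the mixing assumption --- this is precisely how the paper proceeds), so that is not the true obstacle; the genuine point your argument handles is that iterating $D=b+\transt D$ would diverge if $\statdistt\cdot b\neq 0$, which is why the automatic mean-zero property of $b$ is essential.
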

\iftoggle{longversion}{
\begin{proof}
See 
\iftoggle{longversion}{Appendix~\ref{appendix.proof_lm_qvals_diff_pol}}{Appendix~E.4 in \cite{radanovic2019learning}}
for the proof.
\end{proof}
}
{
\vspace{2mm}
}


\begin{lemma}\label{lm_qvals_diff_t}
The difference between $\qvals_{\tixone}$-values of two consecutive episodes is upper bounded by:
\begin{align*}
\norm{ \qvalst -  \qvals_{\tixone-1}}_{\max}^2 \le C_{\qval}^2, 
\end{align*}
where $C_{\qval} = C_{\qval^{\pol}} + \left (\frac{3}{1 - e^{\frac{1}{ \mixtime}}} + 1 \right ) \cdot \ratechangeone + 2 \cdot \ratechangetwo + \frac{\ratechange_1 +  \influencetwo \cdot \ratechange_2}{1-e^{-\frac{1}{\mixtime}}}$.
\end{lemma}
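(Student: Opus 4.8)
The plan is to bound the per-entry change $|\qvalst(\state,\actionone)-\qvals_{\tixone-1}(\state,\actionone)|$ uniformly over all $(\state,\actionone)$, take the maximum to get $\norm{\qvalst-\qvals_{\tixone-1}}_{\max}$, and square at the very end (the squaring is trivial, present only because Lemma~\ref{lm_oftrl_property_simple} consumes $\norm{\cdot}_{\max}^2$). The engine is the Bellman identity $\qvalst(\state,\actionone)=\rewst(\state,\actionone)-\avgrewt(\polonet)+\trans_{\poltwot}(\state,\actionone)\cdot\qvals_{\tixone}^{\polonet}$. Subtracting the same identity at episode $\tixone-1$ and applying the triangle inequality decomposes the change into a reward term, an average-reward term, and a continuation term, each handled separately.

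The reward term is immediate: since $\rewst(\state,\actionone)=\E_{\actiontwo\sim\poltwot(\state)}[\rew(\state,\actionone,\actiontwo)]$ depends on the episode only through $\poltwot$, its change is $\sum_{\actiontwo}(\poltwot(\state,\actiontwo)-\poltwo_{\tixone-1}(\state,\actiontwo))\rew(\state,\actionone,\actiontwo)$, bounded in absolute value by $\ratechangetwo$ because $\rew\in[0,1]$. For the average-reward term I write $\avgrewt(\polonet)=\statdist_{\polonet,\poltwot}\cdot v_{\tixone}$ with $v_{\tixone}(\state)=\vecdot{\polonet(\state),\rewst(\state)}\in[0,1]$, and split the difference into a change of the stationary distribution times $v_{\tixone}$ plus $\statdist_{\polone_{\tixone-1},\poltwo_{\tixone-1}}$ times $(v_{\tixone}-v_{\tixone-1})$. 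The first factor is exactly the first inequality of Lemma~\ref{lm_stat_dist_inequality}, namely $\frac{\ratechangeone+\influencetwo\cdot\ratechangetwo}{1-e^{-1/\mixtime}}$, and $\norm{v_{\tixone}-v_{\tixone-1}}_{\infty}\le\ratechangeone+\ratechangetwo$ follows by splitting $v_{\tixone}-v_{\tixone-1}$ into a $\polone$-change (giving $\ratechangeone$) and a $\rews$-change (giving $\ratechangetwo$, exactly as in the reward term). Hence the average-reward term is at most $\frac{\ratechangeone+\influencetwo\cdot\ratechangetwo}{1-e^{-1/\mixtime}}+\ratechangeone+\ratechangetwo$.

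The continuation term is the crux. Adding and subtracting $\trans_{\poltwot}(\state,\actionone)\cdot\qvals_{\tixone-1}^{\polone_{\tixone-1}}$ splits it into $\trans_{\poltwot}(\state,\actionone)\cdot(\qvals_{\tixone}^{\polonet}-\qvals_{\tixone-1}^{\polone_{\tixone-1}})$ and $(\trans_{\poltwot}(\state,\actionone)-\trans_{\poltwo_{\tixone-1}}(\state,\actionone))\cdot\qvals_{\tixone-1}^{\polone_{\tixone-1}}$. As $\trans_{\poltwot}(\state,\actionone)$ is a probability row vector, the first piece is at most $\norm{\qvals_{\tixone}^{\polonet}-\qvals_{\tixone-1}^{\polone_{\tixone-1}}}_{\infty}\le C_{\qval^{\pol}}$ straight from Lemma~\ref{lm_qvals_diff_pol}. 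The second piece needs two ingredients: first, since $\trans_{\poltwot}(\state,\actionone)$ varies with the episode only through $\poltwot$, its change is an instance of the influence definition, so $\norm{\trans_{\poltwot}(\state,\actionone)-\trans_{\poltwo_{\tixone-1}}(\state,\actionone)}_1\le\influencetwo\cdot\ratechangetwo$; second, a uniform magnitude bound $\norm{\qvals_{\tixone-1}^{\polone_{\tixone-1}}}_{\infty}\le\frac{3}{1-e^{-1/\mixtime}}$, obtained by writing the differential $\qval$-value as an infinite sum of deviations $\E[\rews(\state_{\round},\actionone_{\round})]-\avgrew$ and bounding each deviation geometrically via the mixing inequality $\norm{\statdist\cdot\trans-\statdist'\cdot\trans}_1\le e^{-1/\mixtime}\norm{\statdist-\statdist'}_1$. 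Their product supplies the remaining $\frac{3}{1-e^{-1/\mixtime}}$-type contribution. Summing the three terms, maximizing over $(\state,\actionone)$, and squaring gives $\norm{\qvalst-\qvals_{\tixone-1}}_{\max}^2\le C_{\qval}^2$.

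I expect the continuation term to be the main obstacle, for two reasons. First, it forces the magnitude bound on the (differential, average-reward-subtracted) $\qval$-values, which is where the mixing time re-enters multiplicatively and produces the factor $\frac{1}{1-e^{-1/\mixtime}}$; establishing the geometric decay of the summands and the constant $3$ is the most delicate estimate. Second, the bookkeeping of which policy-change magnitude and which influence factor attaches to each summand is easy to get wrong; in particular my derivation attributes the $\frac{3}{1-e^{-1/\mixtime}}$ contribution to $\influencetwo\cdot\ratechangetwo$ (the immediate transition changes only through $\poltwo$), so one must track these labels carefully, though any over-bounding there still yields an upper bound of the claimed form.
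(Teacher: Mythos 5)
Your proof is correct and follows essentially the same route as the paper's: the Bellman identity, a triangle-inequality split into a reward term, an average-reward term, and a continuation term, with the same supporting lemmas (the reward-difference bounds of Lemma~\ref{lm_diff_pol_rew}, the first inequality of Lemma~\ref{lm_stat_dist_inequality}, Lemma~\ref{lm_qvals_diff_pol} for the continuation term, and the magnitude bound on $\qval$-values from Lemma~\ref{lm_q_bound}).

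One point deserves attention, because it concerns whether you have proved the inequality with the constant \emph{as stated}. In the continuation term you bound $\norm{\trans_{\poltwot}(\state,\actionone)-\trans_{\poltwo_{\tixone-1}}(\state,\actionone)}_1$ by $\influencetwo\cdot\ratechangetwo$ (this is valid: instantiate the influence definition with the deterministic policy of \agentone~that plays $\actionone$, or use Lemma~\ref{lm_trans_bound_2} and drop the factor $\influencetwo\le 1$), so your final constant carries the term $\frac{3}{1-e^{-1/\mixtime}}\cdot\influencetwo\cdot\ratechangetwo$, whereas the stated $C_{\qval}$ carries $\frac{3}{1-e^{1/\mixtime}}\cdot\ratechangeone$ instead (note also the sign typo in that exponent). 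These two constants are not comparable term-by-term: with $\ratechangeone=0$, $\influencetwo=1$, $\ratechangetwo>0$, your bound exceeds the stated $C_{\qval}$, so your closing remark that ``any over-bounding there still yields an upper bound of the claimed form'' is not literally true for the printed constant. However, your attribution is the correct one --- the kernel $\trans_{\poltwo}(\state,\actionone)$ varies across episodes only through $\poltwo$ --- and in fact the paper's own proof derives exactly $\norm{\poltwot-\poltwo_{\tixone-1}}_\infty\cdot\norm{\qvals_{\tixone-1}^{\polone_{\tixone-1}}}_\infty\le\frac{3}{1-e^{-1/\mixtime}}\cdot\ratechangetwo$ in its penultimate line and then writes $\ratechangeone$ in the last line; that slip propagates into the statement of $C_{\qval}$. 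Since the only downstream use of this lemma is through Proposition~\ref{prop_C_bound}, i.e., through the bound $C_{\qval}\le\frac{18}{(1-e^{-1/\mixtime})^2}\cdot\max\{\ratechangeone,\ratechangetwo\}$, which your constant also satisfies (using $\influencetwo\le1$), nothing in the paper's subsequent results is affected; your version of the constant is the internally consistent one.
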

\iftoggle{longversion}{
\begin{proof}
See 
\iftoggle{longversion}{Appendix~\ref{appendix.proof_lm_qvals_diff_t}}{Appendix~E.5 in \cite{radanovic2019learning}}
 for the proof.
\end{proof}
}

For convenience, instead of directly using $C_{\qval}$, we consider a variable $C_{\mixtime}$ such that $C_{\mixtime} \ge \frac{C_{\qval}}{\max \{ \ratechangeone, \ratechangetwo \}}$.
The following proposition gives a rather loose (but easy to interpret) bound on $C_{\mixtime}$ that satisfies the  inequality. 

\begin{proposition}\label{prop_C_bound}
There exists a constant $C_{\mixtime}$ independent of  $\ratechangeone$ and $\ratechangetwo$, such that:\footnote{When $\ratechangeone = \ratechangetwo = 0$, $C_{\mixtime} \ge 1$. }
\begin{align*}
\frac{C_{\qval}}{\max \{\ratechangeone, \ratechangetwo \}} \le C_{\mixtime} \le \frac{18}{(1-e^{-\frac{1}{\mixtime}})^2}
\end{align*}
\end{proposition}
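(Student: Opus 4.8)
The plan is to substitute the closed forms of $C_{\qval^{\pol}}$ (Lemma~\ref{lm_qvals_diff_pol}) and $C_{\qval}$ (Lemma~\ref{lm_qvals_diff_t}) and bound each summand by a multiple of $\max\{\ratechangeone,\ratechangetwo\}$ times a power of $\tfrac{1}{1-e^{-1/\mixtime}}$, and then collapse every power into the single power $\tfrac{1}{(1-e^{-1/\mixtime})^2}$. Write $x := 1 - e^{-\frac{1}{\mixtime}}$ and $M := \max\{\ratechangeone, \ratechangetwo\}$. Since $\mixtime > 0$ we have $x \in (0,1)$, which yields the two crude inequalities $\tfrac{1}{x} \le \tfrac{1}{x^2}$ and $1 \le \tfrac{1}{x^2}$ that drive the entire argument; I would also use $\influencetwo \in [0,1]$ (the range stated in Section~\ref{sec.model}) together with $\ratechangeone, \ratechangetwo \le M$.

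First I would bound $C_{\qval^{\pol}}$. Using $\ratechangeone + \influencetwo \cdot \ratechangetwo \le 2M$ (since $\influencetwo \le 1$) and $\ratechangeone + \ratechangetwo \le 2M$ in its definition gives $C_{\qval^{\pol}} \le \tfrac{6M}{x^2} + \tfrac{4M}{x}$. Next I would bound the remaining terms of $C_{\qval}$. The only place requiring care is the coefficient $\tfrac{3}{1-e^{1/\mixtime}} + 1$ multiplying $\ratechangeone$: since $e^{1/\mixtime} > 1$ this coefficient is at most $1$, so (using $\ratechangeone \ge 0$) that term is at most $\ratechangeone \le M$; combined with $2\ratechangetwo \le 2M$ and $\tfrac{\ratechangeone + \influencetwo \cdot \ratechangetwo}{x} \le \tfrac{2M}{x}$, the extra terms contribute at most $3M + \tfrac{2M}{x}$.

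Adding the two bounds yields $C_{\qval} \le \tfrac{6M}{x^2} + \tfrac{6M}{x} + 3M$, and applying $\tfrac{1}{x} \le \tfrac{1}{x^2}$ and $1 \le \tfrac{1}{x^2}$ collapses this to $C_{\qval} \le \tfrac{15M}{x^2} \le \tfrac{18M}{x^2}$. Dividing by $M$ in the case $M>0$ gives $\tfrac{C_{\qval}}{\max\{\ratechangeone,\ratechangetwo\}} \le \tfrac{18}{x^2}$, so the choice $C_{\mixtime} := \tfrac{18}{(1-e^{-1/\mixtime})^2}$ satisfies both required inequalities, the right-hand one with equality. Finally I would dispatch the degenerate case $\ratechangeone = \ratechangetwo = 0$ separately, as flagged in the footnote: here the ratio is vacuous and one only needs $C_{\mixtime} \ge 1$, which holds because $x \in (0,1)$ forces $\tfrac{18}{x^2} > 18 \ge 1$.

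There is essentially no hard step here; the whole argument is term-by-term bounding, and the only subtlety is bookkeeping of the coefficients so as to land on the stated constant. The one thing to watch is the sign of the $\tfrac{3}{1-e^{1/\mixtime}}$ term, which is negative as written. If one reads it instead as $\tfrac{3}{1-e^{-1/\mixtime}} = \tfrac{3}{x}$ (the coefficient then being $\tfrac{3}{x}+1$), the same computation produces $C_{\qval} \le \tfrac{18M}{x^2}$ \emph{exactly}, recovering the constant $18$ with equality; under either reading the bound is no larger than $\tfrac{18M}{x^2}$, so the proposition holds regardless.
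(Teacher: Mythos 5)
Your proof is correct and follows essentially the same route as the paper's (one-line) proof: substitute the expressions for $C_{\qval^{\pol}}$ and $C_{\qval}$ from Lemma~\ref{lm_qvals_diff_pol} and Lemma~\ref{lm_qvals_diff_t}, use $\influencetwo \le 1$ and $\frac{1}{1-e^{-1/\mixtime}} \ge 1$, and collapse all terms into the single power $\frac{1}{(1-e^{-1/\mixtime})^2}$. Your careful handling of the sign typo ($e^{1/\mixtime}$ versus $e^{-1/\mixtime}$) is a welcome addition — and the fact that the corrected reading yields the constant $18$ exactly confirms it is indeed a typo.
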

\begin{proof}
The claim is directly obtained from Lemma~\ref{lm_qvals_diff_pol}, Lemma~ \ref{lm_qvals_diff_t}, 
and the fact that $\influencetwo \le 1$
and $\frac{1}{1-e^{-\frac{1}{\mixtime}}} \ge 1$.
\end{proof}

\subsection{Regret Analysis and Main Results}\label{sec.subsection4-regret-analysis}
We now come to the most important part of our analysis: establishing the regret guarantees for \algexpsmooth. 
Using the results from the previous subsections, we obtain 
the following regret bound:

\begin{theorem}\label{thm_regret}
%
Let the learning rate of \algexpsmooth~be 
equal to $\ratelearn = \frac{1}{\Segment^{\frac{1}{4}}}$ and let $k > 0$ be such that  
$\ratechangeone \le k \cdot \ratelearn$ and $\ratechangetwo \le k \cdot \ratelearn$. Then, the regret of \algexpsmooth~is upper-bounded by:
\begin{align*}
\regret(\tix) &\le 2 \cdot  \left (\Delta_{\regul}  +  k^2 \cdot C_{\mixtime}^2 \right ) \cdot T \cdot \Segment^{-\frac{3}{4}} \nonumber \\
&+ \frac{ 6 \cdot \influencetwo \cdot \ratechange_2}{(1-e^{-\frac{1}{\mixtime}})^2} \cdot \tix  \cdot \Segment.
\end{align*}
\end{theorem}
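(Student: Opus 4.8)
The plan is to start from the $\qval$-value form of the per-episode regret, $\avgrewt(\polone) - \avgrewt(\polonet) = \statdist_{\polone, \poltwot} \cdot \vecdot{\polone - \polonet, \qvalst}$, and to exploit the averaging structure of \algexpsmooth. Writing $\polonet(\state) = \frac{1}{\Segment}\sum_{\tixtwo=1}^{\Segment} \weights_{\tixone,\tixtwo}(\state)$ from \eqref{eq_update_windowing_policy}, the row-wise dot product splits as $\vecdot{\polone - \polonet, \qvalst} = \frac{1}{\Segment}\sum_{\tixtwo=1}^{\Segment}\vecdot{\polone - \weights_{\tixone,\tixtwo}, \qvalst}$, so that
\begin{align*}
\regret(\tix) = \sup_{\polone} \frac{1}{\Segment}\sum_{\tixone=1}^{\tix}\sum_{\tixtwo=1}^{\Segment} \statdist_{\polone, \poltwot} \cdot \vecdot{\polone - \weights_{\tixone,\tixtwo}, \qvalst}.
\end{align*}
The difficulty is that Lemma~\ref{lm_oftrl_property_simple} controls the OFTRL regret only along a full ``diagonal'' of weights $\weights_{\tixone'+\tixtwo-1,\tixtwo}$ sharing a fixed window start, whereas the state weighting $\statdist_{\polone, \poltwot}$ drifts with $\tixone$.

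First I would reorganize the double sum by grouping terms according to the window-start index $\tixone' = \tixone - \tixtwo + 1$. For each complete diagonal $\tixone' \in \{1, \ldots, \tix - \Segment + 1\}$, the inner terms are exactly $\vecdot{\polone - \weights_{\tixone'+\tixtwo-1,\tixtwo}, \qvals_{\tixone'+\tixtwo-1}}$ for $\tixtwo = 1, \ldots, \Segment$, matching the left-hand side of Lemma~\ref{lm_oftrl_property_simple}; the $O(\Segment)$ incomplete diagonals near the two boundaries are collected separately and bounded crudely using $\norm{\vecdot{\polone - \weights, \qvalst}}_{\infty} = O\big(1/(1 - e^{-\frac{1}{\mixtime}})\big)$, which follows from the standard mixing-time bound on $\qval$-values together with $\norm{\polone(\state) - \weights(\state)}_1 \le 2$.

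Next, within each complete diagonal I would replace the drifting weight $\statdist_{\polone, \poltwot}$ (at episode $\tixone = \tixone' + \tixtwo - 1$) by the common reference $\statdist_{\polone, \poltwo_{\tixone'}}$. The replacement error telescopes through at most $\tixtwo - 1 < \Segment$ consecutive episodes, so the second part of Lemma~\ref{lm_stat_dist_inequality} gives $\norm{\statdist_{\polone, \poltwo_{\tixone'+\tixtwo-1}} - \statdist_{\polone, \poltwo_{\tixone'}}}_1 \le (\tixtwo-1)\cdot \frac{\influencetwo \cdot \ratechangetwo}{1 - e^{-\frac{1}{\mixtime}}}$; multiplying by the uniform $O\big(1/(1-e^{-\frac{1}{\mixtime}})\big)$ bound on $\norm{\vecdot{\polone - \weights, \qvalst}}_{\infty}$, summing over $\tixtwo$ (a factor $\Theta(\Segment^2)$) and over the $\approx \tix$ diagonals, and dividing by $\Segment$, yields exactly the second term $\frac{6 \cdot \influencetwo \cdot \ratechangetwo}{(1 - e^{-\frac{1}{\mixtime}})^2}\cdot \tix \cdot \Segment$. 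Crucially, all of these bounds are independent of $\polone$, so the supremum passes through.

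With the reference weighting now constant across the diagonal, I would factor $\statdist_{\polone, \poltwo_{\tixone'}}$ out of the inner sum and apply Lemma~\ref{lm_oftrl_property_simple}. Since the lemma bounds the resulting column vector component-wise by $\mathbf 1$ times a scalar $B_{\tixone'}$, and $\statdist_{\polone, \poltwo_{\tixone'}}$ is a probability distribution (nonnegative, summing to one), nonnegativity gives $\statdist_{\polone, \poltwo_{\tixone'}} \cdot \big(\mathbf 1 \cdot B_{\tixone'}\big) = B_{\tixone'}$, eliminating the state weighting and all $\polone$-dependence. Dropping the nonpositive stability term and invoking Lemma~\ref{lm_qvals_diff_t} gives $B_{\tixone'} \le \frac{\Delta_{\regul}}{\ratelearn} + \ratelearn \cdot \Segment \cdot C_{\qval}^2$; summing over the $\approx \tix$ diagonals, dividing by $\Segment$, substituting $\ratelearn = \Segment^{-\frac{1}{4}}$, and finally using $C_{\qval} \le C_{\mixtime}\cdot\max\{\ratechangeone, \ratechangetwo\} \le C_{\mixtime}\cdot k \cdot \ratelearn$ (Proposition~\ref{prop_C_bound} and the hypothesis $\ratechangeone, \ratechangetwo \le k \ratelearn$) produces $\big(\Delta_{\regul} + k^2 C_{\mixtime}^2\big)\cdot\tix\cdot\Segment^{-\frac{3}{4}}$, with the remaining factor of two absorbing the boundary (incomplete-diagonal) contributions. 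I expect the main obstacle to be the bookkeeping of the diagonal reorganization, and in particular making the state-distribution replacement compatible with the component-wise RVU bound: one must first render the weighting constant over the entire diagonal (at the cost of the drift term) before the per-state OFTRL guarantee can be invoked, and it is the interplay of the $1/\Segment$ averaging factor with the per-diagonal bounds that converts the naive $\Segment^{1/4}$ growth into the desired $\Segment^{-3/4}$ decay.
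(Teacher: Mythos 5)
Your skeleton is essentially the paper's: rewrite the per-episode regret via the $\qval$-value identity, split $\polonet$ into the $\Segment$ OFTRL weights, regroup the double sum along diagonals of constant window start, replace the drifting distribution $\statdist_{\polone,\poltwo_{\tixone}}$ by the distribution at the start of each diagonal (paying the telescoped drift via Lemma~\ref{lm_stat_dist_inequality}, which correctly produces the $\influencetwo\cdot\ratechangetwo\cdot\tix\cdot\Segment$ term), factor the now-constant distribution out using nonnegativity and $\statdist\cdot\mathbf 1 = 1$, and control each diagonal by the RVU bound together with Lemma~\ref{lm_qvals_diff_t} and Proposition~\ref{prop_C_bound}. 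All of those steps are sound and agree with the paper's argument.

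The genuine gap is your treatment of the incomplete diagonals. There are $\Theta(\Segment)$ of them containing $\Theta(\Segment^2)$ terms in total; bounding each term crudely by $\norm{\vecdot{\polone-\weights,\qvalst}}_{\infty} = \bigObound\big(1/(1-e^{-\frac{1}{\mixtime}})\big)$ and applying the $1/\Segment$ prefactor leaves a boundary contribution of order $\Segment/(1-e^{-\frac{1}{\mixtime}})$. This cannot be absorbed into $2\,(\Delta_{\regul}+k^2 C_{\mixtime}^2)\cdot\tix\cdot\Segment^{-\frac{3}{4}}$ unless $\Segment^{\frac{7}{4}} = \bigObound(\tix)$: the theorem must hold for arbitrary $\Segment$, and Theorem~\ref{thm_main} invokes it with $\Segment = \min\{\tix^{\frac{4}{7}\alpha},\tix\}$ --- in particular with $\Segment=\tix$ when $\alpha\ge\frac{7}{4}$, which is exactly the regime that yields the headline $\tix^{\frac{1}{4}}$ regret. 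There your boundary term is \emph{linear} in $\tix$ and the bound collapses. The missing idea is that each incomplete diagonal is itself a legitimate OFTRL run: a left-truncated diagonal starts from the empty history precisely because of the $\max(1,\tixone-\tixtwo)$ truncation in the definition of $\weights_{\tixone,\tixtwo}$, and a right-truncated diagonal is a prefix of a run. Hence the RVU bound of Lemma~\ref{lm_oftrl_property_simple} extends verbatim to truncated diagonals (this is Lemma~\ref{lm_oftrl_property} in Appendix~\ref{appendix.lemmas_for_regret}), so every one of the at most $\tix+\Segment-1\le 2\tix$ diagonals contributes $(\Delta_{\regul}+k^2 C_{\mixtime}^2)\cdot\Segment^{\frac{1}{4}}$; that count of $2\tix$ diagonals --- not an absorption of boundary leftovers --- is where the factor $2$ in the statement actually comes from.
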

\iftoggle{longversion}{
\begin{proof}
See \iftoggle{longversion}{Appendix~\ref{appendix.proof_thm_regret}}{Appendix~F in \cite{radanovic2019learning}} for the proof.
\end{proof}
\vspace{-3mm}
}
{
\vspace{-2mm}
}

When \agenttwo~does not influence the transition kernel through its policy, i.e., when $\influence_2=0$, the regret is  $\bigObound(\tix^{\frac{1}{4}})$ for $\Segment = \tix$. 
In this case, we could have also applied the original approach of \citet{even2005experts,even2009online}, but interestingly, it would result in a worse regret bound, i.e., $\bigObound(\tix^{\frac{1}{2}})$. By leveraging the fact that \agenttwo's policy is slowly changing, 
which corresponds to reward functions in the setting of \citet{even2005experts,even2009online} not being fully adversarial, we are able to improve on the worst-case guarantees. The main reason for such an improvement is our choice of the underlying experts algorithm, i.e., OFTRL, that  
exploits the apparent predictability of \agenttwo's behavior. Similar arguments were made for the repeated games settings \cite{rakhlin2013optimization,syrgkanis2015fast}, which 
correspond to our setting when the \mdp~consists of only one state.
Namely, in the single state scenario, \agenttwo~does not influence transitions, so the resulting regret is $\bigObound(\tix^{\frac{1}{4}})$, matching the results of \citet{syrgkanis2015fast}.


In general, the regret  depends on $\ratechangetwo$. If $\ratechangetwo = \bigObound(\tix^{-\responsivity})$ with $0 < \alpha \le \frac{7}{4}$, then $\Segment = \bigObound(\tix^{\frac{4}{7}\cdot \alpha})$ equalizes the order of the two regret components in Theorem~\ref{thm_regret} and leads to the regret of $\bigObound(\tix^{1-\frac{3}{7}\cdot \alpha})$. This brings us to the main result, which  provides a lower bound on the return $\bar \return$:
\begin{theorem}\label{thm_main}
 Assume that $\ratechangetwo = \bigObound(\tix^{-\responsivity})$ for $\responsivity > 0$. Let $\ratelearn = \frac{1}{\Segment^{\frac{1}{4}}}$ and $\Segment = \min\{\tix^{\frac{4}{7}\cdot \alpha}, \tix\}$. Then, the regret of 
\algexpsmooth~is upper-bounded by:
\begin{align*}
\regret(\tix)  = \bigObound(\tix^{\max\{1-\frac{3}{7}\cdot \alpha, \frac{1}{4}\}}).
\end{align*} 
Furthermore, when the \mdp~is {\em ($\smoothone,\smoothtwo$)-smooth}, the return of 
\algexpsmooth~is lower-bounded by: 
\begin{align*}
\bar \return &\ge \frac{\smoothone}{1+ \smoothtwo} \cdot \opt - \bigObound(\tix^{\max\{-\frac{3}{7}\cdot \alpha, -\frac{3}{4}\}}) - \bigObound(M^{-1}).
\end{align*}
\end{theorem}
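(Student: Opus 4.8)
The plan is to obtain both claims by substituting the prescribed parameter choices into the two results already in hand: the regret bound of Theorem~\ref{thm_regret} and the near-optimality guarantee of Lemma~\ref{lemma_large_rounds}. The only genuine work is (i) verifying that the hypotheses of Theorem~\ref{thm_regret} are met by $\ratelearn = \Segment^{-\frac14}$, and (ii) a short case analysis that balances the two terms in that regret bound under the schedule $\Segment = \min\{\tix^{\frac47\alpha},\tix\}$.

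First I would discharge the premise of Theorem~\ref{thm_regret}, namely that there is a constant $k>0$ independent of $\tix$ with $\ratechangeone \le k\ratelearn$ and $\ratechangetwo \le k\ratelearn$. For \woagentone, Lemma~\ref{lm_weight_diff} gives $\ratechangeone \le \frac{9\ratelearn}{1-e^{-1/\mixtime}} + \frac{2}{\Segment}$; since $\ratelearn = \Segment^{-\frac14}$ makes $\frac{2}{\Segment} = 2\Segment^{-\frac34}\ratelearn \le 2\ratelearn$, this is of the form $k_1\ratelearn$ with $k_1 = \frac{9}{1-e^{-1/\mixtime}}+2$. For \woagenttwo~I compare the assumption $\ratechangetwo = \bigObound(\tix^{-\alpha})$ to $\ratelearn$: when $\alpha \le \frac74$ we have $\Segment = \tix^{\frac47\alpha}$, so $\ratelearn = \tix^{-\frac{\alpha}{7}}$ and $\tix^{-\alpha}\le \tix^{-\frac{\alpha}{7}}$ for $\alpha>0$; when $\alpha > \frac74$ we have $\Segment = \tix$, so $\ratelearn = \tix^{-\frac14}$ and $\tix^{-\alpha}\le\tix^{-\frac14}$. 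Thus $\ratechangetwo = \bigObound(\ratelearn)$ in both regimes, and taking $k$ to be the larger of the two constants satisfies the hypothesis.

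Next I would plug $\ratelearn = \Segment^{-\frac14}$ into Theorem~\ref{thm_regret}. The first term becomes $\bigObound(\tix\,\Segment^{-\frac34})$ and, using $\influencetwo \le 1$ and $\ratechangetwo = \bigObound(\tix^{-\alpha})$, the second becomes $\bigObound(\tix^{1-\alpha}\Segment)$; the prefactors $\Delta_{\regul}$, $k$, $C_{\mixtime}$, and $(1-e^{-1/\mixtime})^{-2}$ are all $\tix$-independent constants folded into $\bigObound$. For $\alpha \le \frac74$, the choice $\Segment = \tix^{\frac47\alpha}$ equalizes exponents: the first term is $\bigObound(\tix^{1-\frac37\alpha})$ and the second is $\bigObound(\tix^{1-\alpha+\frac47\alpha}) = \bigObound(\tix^{1-\frac37\alpha})$, where $1-\frac37\alpha \ge \frac14$. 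For $\alpha > \frac74$, where $\Segment = \tix$, the first term is $\bigObound(\tix^{\frac14})$ and the second is $\bigObound(\tix^{2-\alpha})$ with $2-\alpha<\frac14$, so the bound is $\bigObound(\tix^{\frac14})$. Combining the regimes gives $\regret(\tix) = \bigObound(\tix^{\max\{1-\frac37\alpha,\frac14\}})$, which is the first claim.

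Finally, the return bound follows by inserting this regret estimate into Lemma~\ref{lemma_large_rounds}. Its dominant correction is $\frac{1}{1+\smoothtwo}\cdot\frac{\regret(\tix)}{\tix}$, and dividing the regret by $\tix$ lowers the exponent by one, yielding $\bigObound(\tix^{\max\{1-\frac37\alpha,\frac14\}-1}) = \bigObound(\tix^{\max\{-\frac37\alpha,-\frac34\}})$ after absorbing the constant $\frac{1}{1+\smoothtwo}$, while the last term $2\frac{1+\smoothone/(1+\smoothtwo)}{M(1-e^{-1/\mixtime})}$ is $\bigObound(M^{-1})$; this reproduces the stated lower bound on $\bar\return$. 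The only subtlety worth flagging is bookkeeping: one must confirm that every hidden constant (in particular those depending on $\mixtime$, $\smoothone$, $\smoothtwo$, and $\Delta_{\regul}$) is independent of $\tix$, since that is what licenses folding them into $\bigObound$. Beyond this and the exponent balancing of the case analysis, I expect no deep obstacle, as the theorem is essentially a corollary of Theorem~\ref{thm_regret} and Lemma~\ref{lemma_large_rounds}.
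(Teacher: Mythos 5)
Your proposal is correct and follows essentially the same route as the paper's own proof: discharging the hypothesis of Theorem~\ref{thm_regret} via Lemma~\ref{lm_weight_diff} and the assumption on $\ratechangetwo$, balancing the two regret terms with a case split at $\responsivity = \frac{7}{4}$, and then feeding the resulting regret bound into Lemma~\ref{lemma_large_rounds}. The only cosmetic difference is how you absorb the $\frac{2}{\Segment}$ term into $k\ratelearn$ (the paper notes $\frac{9\ratelearn}{1-e^{-1/\mixtime}} \ge \frac{2}{\Segment}$, you write $\frac{2}{\Segment} \le 2\ratelearn$), which is an equivalent observation.
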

\begin{proof}


Notice that $\frac{9 \cdot \ratelearn}{1-e^{-\frac{1}{\mixtime}}} \ge \frac{2}{\Segment}$ for $\Segment \ge 1$. By Lemma \ref{lm_weight_diff}, this implies that there exists a fixed $k$ (not dependant on $\tix$) such that $\ratechangeone \le k \cdot \ratelearn$ for large enough $\tix$. Furthermore, 
$\ratelearn = \tix^{-\min \{\frac{4}{7}\cdot \alpha, \frac{1}{4} \}}$,
so there exists a fixed $k$ such that $\ratechangetwo \le k \cdot \ratelearn$ for large enough $\tix$. Hence, we can apply Theorem \ref{thm_regret} to obtain an order-wise regret bound: $\bigObound(\tix \cdot \Segment^{-\frac{3}{4}}) + \bigObound(\ratechangetwo \cdot \tix \cdot \Segment)$.

Now, consider two cases. First, let $\responsivity \le \frac{7}{4}$. Then, we obtain:
\begin{align*}
&\regret(\tix)=\bigObound(\tix \cdot \Segment^{-\frac{3}{4}}) + \bigObound(\ratechangetwo \cdot \tix \cdot \Segment) \\
&= \bigObound(\tix \cdot \tix^{-\frac{3}{4}\cdot \frac{4}{7}\cdot \responsivity}) + 
\bigObound(\tix^{-\responsivity} \cdot \tix \cdot \tix^{\frac{4}{7}\cdot \responsivity }) = \bigObound(\tix^{1-\frac{3}{7}\cdot \responsivity}).
\end{align*}
For the other case, i.e., when $\responsivity \ge \frac{7}{4}$, we obtain:
\begin{align*}
\regret(\tix)&=\bigObound(\tix \cdot \Segment^{-\frac{3}{4}}) + \bigObound(\ratechangetwo \cdot \tix \cdot \Segment) \\
&= \bigObound(\tix \cdot \tix^{-\frac{3}{4}}) + 
\bigObound(\tix^{-\responsivity} \cdot \tix \cdot \tix)= \bigObound(\tix^{\frac{1}{4}}).
\end{align*}
Therefore, $\regret(\tix) = \bigObound(T^{\max \{ 1-\frac{3}{7}\cdot \responsivity, \frac{1}{4}\}})$, which proves the first statement. By combining it with Lemma \ref{lemma_large_rounds}, we obtain the second statement.
\end{proof}
\vspace{-3mm}
The multiplicative factors in the asymptotic bounds mainly depend on mixing time $\mixtime$. In particular they are dominated by factor $\frac{1}{1-e^{-\frac{1}{\mixtime}}}$ and its powers, as can be seen from Lemma \ref{lemma_large_rounds}, Theorem \ref{thm_regret}, and Proposition \ref{prop_C_bound}. 
Note that Lemma \ref{lm_weight_diff} allows us to upper bound $k$ in Theorem \ref{thm_regret} with $\bigObound \left (\frac{1}{1-e^{-\frac{1}{\mixtime}}} \right )$.
Furthermore, $\frac{1}{1-e^{-\frac{1}{\mixtime}}} \approx \mixtime$ for large enough $\mixtime$.
%
Hence, these results
imply $\bigObound(\mixtime^6)$ dependency of the asymptotic bounds on $\mixtime$. This is larger than what one might expect from the prior work, for example the bound in  \citet{even2005experts,even2009online} has $\bigObound(\mixtime^2)$ dependency. However, our setting is different, in that the presence of \agenttwo~has an effect on transitions (from \agentone's perspective),  and so it is not surprising that the resulting dependency on the mixing time is worse.
\section{Hardness Result}\label{sec.impossibility}
Our formal guarantees assume that the policy change magnitude $\ratechangetwo$ of \agenttwo~is a decreasing function in the number of episodes given by $\bigObound(\tix^{-\responsivity})$ for $\responsivity > 0$. What if we relax this, and allow \agenttwo~to adapt independently of the number of episodes? 
We show a hardness result for
the  setting of $\responsivity = 0$, using a reduction from 
the {\em online agnostic parity learning} problem~\cite{abbasi2013online}.
As argued in \citet{abbasi2013online}, the online to batch reduction implies that the online version of agnostic parity learning is at least as hard as its offline version, for which the best known algorithm has complexity $2^{\bigObound\left (\frac{n}{\log n} \right)}$ \cite{kalai2008agnostic}. In fact, agnostic parity learning is a harder variant of the {\em learning with parity noise} problem, widely believed to be computationally intractable ~\cite{blum2003noise,pietrzak2012cryptography}, and thus often adopted as a hardness assumption (e.g., \citet{sharan2018prediction}). 





\begin{theorem}\label{thm_hardness}
Assume that the policy change magnitude $\ratechangetwo$ of \agenttwo~is order $\Omegabound(1)$ and that its influence is $\influencetwo =1$. 
If there exists a $poly(|\states|, \tix)$ time algorithm that outputs a policy sequence $\polone_1$, ..., $\polone_\tix$ whose regret is 
$\bigObound(poly(|\states|) \cdot \tix^{\beta})$ for $\beta < 1$, then 
there also exists a $poly(|\states|, \tix)$ time algorithm for online agnostic parity learning whose regret is $\bigObound(poly(|\states|) \cdot \tix^{\beta})$. 
\end{theorem}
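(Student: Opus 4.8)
The plan is to build a polynomial-size MDP gadget that internally evaluates a parity function, assign the role of the \emph{hypothesis} to \woagentone and the role of the \emph{example} to \woagenttwo, and then argue that any low-regret learner for \woagentone doubles as a low-regret online agnostic parity learner. I would begin by recalling the online agnostic parity problem: at each round $t$ the learner fixes a (possibly randomized) parity predictor $h_t$ based on the history $(x_1, y_1), \ldots, (x_{t-1}, y_{t-1})$, then the adversary reveals $x_t \in \{0,1\}^n$, the learner predicts $h_t(x_t)$, and finally $y_t \in \{0,1\}$ is revealed; the regret compares the cumulative $0/1$ loss to that of the best fixed parity $f_{S^*}$. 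The crucial structural match is in the timing: committing to a predictor \emph{before} the example arrives is exactly \woagentone's commitment of a stationary policy $\polonet$ at the start of an episode before it observes \woagenttwo's policy.

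Next I would construct the MDP, adapting the encoding of \citet{abbasi2013online} to the two-agent setting. The state space consists of $n$ bit-processing states indexed by the coordinate $i \in \{1, \ldots, n\}$, each paired with a running-parity bit in $\{0,1\}$, together with a constant number of evaluation/reset states, so $|\states| = \bigObound(n)$. In the $i$-th bit-processing state, \woagentone's action encodes whether $i \in S$ (include coordinate $i$ in the XOR) and \woagenttwo's action encodes $x_t[i]$; the transition kernel toggles the running-parity bit precisely when the coordinate is included and $x_t[i] = 1$, so one full pass deterministically computes $f_S(x_t)$ for pure policies. After coordinate $n$ the chain enters an evaluation state whose reward is $1$ iff the computed parity equals the label $y_t$ (also carried by \woagenttwo), and then resets to the start, making the chain ergodic with mixing time $\mixtime = \bigObound(n)$ independent of the policies. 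A stationary policy of \woagentone thereby encodes an entire (possibly randomized) parity predictor, while \woagenttwo's stationary policy encodes the pair $(x_t, y_t)$. I would arrange the gadget so that \woagenttwo fully determines the bit-reveal transitions, giving influence $\influencetwo = 1$, and so that encoding two differing examples changes \woagenttwo's policy by a constant, giving $\ratechangetwo = \Omegabound(1)$ --- exactly the regime $\responsivity = 0$ of the theorem.

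With the gadget in place, the reward semantics give $\avgrewt(\polone) = \tfrac{1}{n+2}\cdot \Pr[\text{predictor } \polone \text{ correct on } (x_t, y_t)]$, since the evaluation state carries the only nonzero reward and the stationary distribution assigns it mass $\Theta(1/n)$. Summing over episodes and noting that the supremum defining $\regret(\tix)$ ranges over all policies --- in particular over the deterministic policy encoding the best fixed parity $f_{S^*}$ --- I would conclude that $\tfrac{1}{n+2}$ times the online parity regret is at most $\regret(\tix)$, so the parity regret is at most $(n+2)\,\regret(\tix) = \bigObound(\text{poly}(|\states|)\cdot \tix^{\beta})$. The reduction is efficient: the constructed parity learner simulates the MDP, feeds $x_t$ (and afterwards $y_t$) through \woagenttwo's policy, runs the assumed $\text{poly}(|\states|, \tix)$-time MDP algorithm to obtain $\polonet$, and predicts $f_{\polonet}(x_t)$, all in $\text{poly}(n, \tix)$ time.

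The main obstacle is the interaction between the gadget and the average-reward semantics. I must verify not only that parity is computed exactly for pure policies, but, more delicately, that for \emph{randomized} \woagentone policies the stationary-distribution-weighted reward equals the expected $0/1$ loss of the corresponding randomized parity predictor (fresh include/exclude coin flips on each pass through the cycle make this hold), all while keeping the chain ergodic with a well-defined, polynomially bounded mixing time so that $\avgrewt$ is exactly the quantity I intend. Once this correspondence is pinned down, matching the two regret notions and tracking the $\text{poly}(|\states|)$ scaling factor is routine.
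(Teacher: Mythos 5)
Your overall route is genuinely different from the paper's: you reduce online agnostic parity learning \emph{directly} into the two-agent setting by building a parity-evaluating gadget, whereas the paper reduces the \emph{adversarial online shortest path} problem of \citet{abbasi2013online} into the setting and then invokes their Theorem 5 (parity learning $\rightarrow$ shortest path) to finish. Your layered gadget is essentially the composition of those two reductions, and the regret accounting you sketch (the comparator policy encoding the best fixed parity is one of the policies in the supremum defining $\regret(\tix)$, so parity regret $\le \text{poly}(|\states|)\cdot \regret(\tix)$) is the right skeleton. However, there is a genuine gap in the construction as described: your chain is a deterministic cycle (start $\rightarrow$ bit $1 \rightarrow \cdots \rightarrow$ bit $n$ $\rightarrow$ evaluation $\rightarrow$ reset). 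Randomizing the include/exclude choices only randomizes the parity bit; the layer index still advances deterministically, so the chain is \emph{periodic} with period $n+\bigObound(1)$. A periodic chain violates the setting's standing assumption that every initial distribution converges to the stationary distribution, and no finite mixing time $\mixtime$ exists in the required sense (for a permutation-like kernel, $\norm{\statdist \cdot \trans - \statdist' \cdot \trans}_1 = \norm{\statdist - \statdist'}_1$, with no contraction). So your claim of ``ergodic with mixing time $\mixtime = \bigObound(n)$ independent of the policies'' is false for this gadget, and it is load-bearing: the hypothesized poly-time low-regret algorithm is only assumed to work on MDPs inside the setting, and the identity $\avgrewt(\polonet) = \frac{1}{n+2}\Pr[\text{correct}]$ rests on stationary-distribution semantics that the setting only licenses under mixing. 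The fresh-coin-flip observation you invoke repairs the randomized-predictor correspondence, not the periodicity.

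The missing device is exactly what the paper uses: give \agenttwo~a designated ``stay'' action $\actiontwo_a$ and let its policy place mass $1-\ratechangetwo$ on it in every non-terminal state, with the remaining mass $\ratechangetwo = \Omegabound(1)$ on the actions encoding $x_t[i]$ and $y_t$. This makes the chain lazy, hence aperiodic with a well-defined, policy-independent, polynomially bounded mixing rate. It also matters \emph{where} the laziness lives: it must sit in \agenttwo's action space (transitions directly selected by \agenttwo's actions), because if you instead bake the self-loop into the transition kernel, the influence drops to $\influencetwo < 1$ and your instances leave the class stipulated in the theorem's hypothesis. Finally, the laziness changes your bookkeeping: the expected sojourn per state becomes $1/\ratechangetwo$ rounds, so the average reward of a predictor becomes $\Thetabound(\ratechangetwo/n)$ times its success probability rather than $\frac{1}{n+2}$ times it --- this is precisely the $\Thetabound(\ratechangetwo)$ factor appearing in the paper's proof, and it is still $1/\text{poly}(|\states|)$, so the final regret transfer survives. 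With that modification your direct reduction goes through and is a legitimate, more self-contained alternative to the paper's two-step argument.
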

\vspace{-3mm}
\iftoggle{longversion}{
\begin{proof}
See \iftoggle{longversion}{Appendix~\ref{appendix.proof_thm_hardness}}{Appendix~H in \cite{radanovic2019learning}}.
\end{proof}
\vspace{-3mm}
%
}
{
\vspace{1mm}
\looseness-1
}

The proof relies on the result of \citet{abbasi2013online} (Theorem 5), which reduces the {\em online agnostic parity learning} problem to the {\em adversarial shortest path} problem, which we reduce to our problem. 
Theorem~\ref{thm_hardness} implies that when $\responsivity = 0$, it is unlikely to obtain $\regret(\tix)$ that is sub-linear in $\tix$ given the current computational complexity results. 

\section{Related Work}\label{sec.relatedwork}

\textbf{Experts Learning in MDPs. }
Our framework is closely related to that of~\citet{even2005experts,even2009online}, although the presence of \agenttwo~means that we cannot directly use their algorithmic approach.
In fact, learning with an arbitrarily changing transition is 
believed to be computationally intractable \cite{abbasi2013online}, 
and  computationally efficient learning algorithms experience linear regret~\cite{yu2009arbitrarily,abbasi2013online}. 
%
%
This is where we make use of the bound on the magnitude of \agenttwo's policy change. 
Contrary to most of the existing work,  the changes in reward and transition kernel in our model are non-oblivious and adapting to the learning algorithm of 
\agentone. 
There have been a number of follow-up works that either extend these results or improve them for more specialized settings 
\cite{dick2014online,neu2012adversarial,neu2010online,dekel2013better}. \citet{agarwal2017corralling} and~\citet{singla2018learning} study the problem of learning with experts advice where experts are not stationary and are learning agents themselves. However, their focus is on designing a meta-algorithm on how to coordinate with these experts and is technically very different from ours.

\vspace{-1mm}
\textbf{Learning in Games.}
To relate the quality of  an optimal solution to \agentone's regret, we use techniques similar to those studied in the learning in games literature \cite{blum2008regret,roughgarden2009intrinsic,syrgkanis2015fast}.  The fact that \agenttwo's policy is changing slowly  enables us to utilize no-regret algorithms for learning in games with recency bias~\cite{daskalakis2011near,rakhlin2013optimization,syrgkanis2015fast}, providing better regret bounds than through standard no-regret learning techniques \cite{littlestone1994weighted,freund1997decision}. 
The recent work by \citet{wei2017online} studies two-player learning in zero-sum stochastic games. Apart from focusing on zero-sum games,  \citet{wei2017online} adopt a different set of assumptions to derive  regret bounds and their results are not directly comparable to ours. Furthermore, their algorithmic techniques are orthogonal to those that we pursue; these differences are elaborated in \citet{wei2017online}. 
\vspace{-1mm}
\textbf{Human AI Collaboration.}
The helper-AI problem~\cite{dimitrakakis2017multi} is related to the present work, 
in that an AI agent is designing its policy by accounting for human imperfections. The authors use a Stackleberg formulation of the  problem in a single shot scenario. Their model assumes that the AI agent knows the behavioral model of the human agent, which 
is a best response to the policy of the AI agent for an incorrect transition kernel. We relax this requirement by studying a repeated human-AI interaction.  \citet{nikolaidis2017game} study a repeated human-AI interaction, but their setting is more restrictive than ours as they do not model the changes in the environment. In particular, they have a repeated game setup, where the only aspect that changes over time is the ``state" of the human representing what knowledge the human has about the robot's payoffs. 
Prior work also considers a learner that is aware of the presence of other actors~\cite{foerster2018learning,raileanu2018modeling}. While these multi-agent learning approaches account for the evolving behavior of other actors, the underlying assumption is typically that each agent follows a known model. 


\vspace{-1mm}
\textbf{Steering and Teaching.}
There is also a related literature on ``steering'' the behavior of other agent.
For example, (i) the \textit{environment design} framework of~\citet{zhang2009policy}, where  one agent tries to steer the behavior of another agent by modifying its reward function, (ii) the \textit{cooperative inverse reinforcement learning} of~\citet{hadfield2016cooperative}, where the human uses demonstrations to reveal a proper reward function to the AI agent, and (iii) the \textit{advice-based interaction} model~\cite{amir2016interactive},  where the goal is to communicate advice to a sub-optimal agent on how to act in the world. The latter approach is also
in close relationship to the {\em machine teaching} literature~\cite{zhu2018overview,zhu2015machine,singla2013actively,cakmak2012algorithmic}.
Our work differs from this literature; we focus on joint decision-making, rather than teaching or steering.
\section{Conclusion}\label{sec.conclusions}
\vspace{-1mm}
In this paper, we  have presented a two-agent MDP framework in a
collaborative setting. We considered the problem of designing a
no-regret algorithm for the first agent in the presence of an
adapting, second agent (for which we make no assumptions
about its behavior other than a requirement that it adapts slowly enough). Our algorithm builds from the ideas of experts learning in MDPs, and makes use of a novel form of recency bias to achieve strong regret bounds. In particular, we showed that in order for 
the first agent to facilitate  collaboration, it is critical that 
the second agent's policy changes are not abrupt.
An interesting direction for future work would be to consider the partial information setting, in which, for example, \agentone~has only a
noisy estimate of \agenttwo's policy.  



\section*{Acknowledgements}

This work was supported in part by a SNSF Early Postdoc Mobility fellowship.

\bibliographystyle{icml2019}
\bibliography{references}

\iftoggle{longversion}{
\clearpage
\onecolumn
\appendix 
{\allowdisplaybreaks


\clearpage

\section{List of Appendices}

In this section we provide a brief description of the the content provided in the appendices of the paper.   
\begin{enumerate}
\item Appendix \ref{appendix.useful_properties} contains the statements and the corresponding proofs for the MDP properties that we used in proving the technical results of the paper.  
\item Appendix \ref{appendix.connection_to_smoothenss} provides a relationship between the smoothness parameters introduced in Section \ref{sec.approach} and structural properties of our setting.  
\item Appendix \ref{appendix.proof_lemma_large_rounds} provides the proof of Lemma \ref{lemma_large_rounds}, which connects no-regret learning with the optimization objective (see Section \ref{sec.approach}).
\item Appendix \ref{appendix.lemmas_for_regret} provides the proofs of the lemmas related to our algorithmic approach that are important for proving the main results. 
\item Appendix \ref{appendix.proof_thm_regret} provides the proof of Theorem \ref{thm_regret}, which establishes the regret bound of Algorithm \ref{algo_leader_oftrl} (see Section \ref{sec.subsection4-regret-analysis}). 
\item Appendix \ref{appendix.properties_algo_oftrl_rand_restart} describes the properties of experts with periodic restarts (see Section \ref{sec.experts_restarts}).
\item Appendix \ref{appendix.proof_thm_hardness} provides the proof of Theorem \ref{thm_hardness}, which establishes the hardness of achieving no-regret if \agenttwo's policy change is not a decreasing function in the number of episodes (see Section \ref{sec.impossibility}).    
\end{enumerate}

\clearpage

\section{Important \mdp~properties}\label{appendix.useful_properties}

To obtain our formal results, we derive several useful \mdp~properties.

\subsection{Policy-reward bounds}

The first policy reward bound we show is the upper-bound on the vector product between $\polonet$ and $\rewst$.

\begin{lemma}\label{lm_max_pol_rew}
The policy-reward dot product is bounded by:
\begin{align*}
\norm{\vecdot{\polonet, \rewst}}_{\infty} \le 1.
\end{align*}
\end{lemma}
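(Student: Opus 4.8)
The plan is to unwind the row-wise dot product and then invoke two elementary facts: that each row $\polonet(\state)$ of the policy matrix is a probability distribution over \agentone's actions, and that every entry of $\rewst$ lies in $[0,1]$. The statement is essentially a restatement of the fact that a convex combination of numbers in $[0,1]$ stays in $[0,1]$.

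First I would write out the $\state$-th component of the column vector $\vecdot{\polonet, \rewst}$ explicitly, namely $\sum_{\actionone \in \actionsone} \polonet(\state, \actionone) \cdot \rewst(\state, \actionone)$. By the definition of $\rewst(\state, \actionone) = \E_{\actiontwo \sim \poltwot(\state)}[\rew(\state, \actionone, \actiontwo)]$ together with the boundedness of the reward, we have $0 \le \rewst(\state, \actionone) \le 1$ for every state-action pair, exactly as recorded immediately after the definition of $\rewst$ in the model section.

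Next I would observe that, since $\polonet$ is a (row-)stochastic matrix, the weights $\polonet(\state, \actionone)$ are nonnegative and sum to one over $\actionone \in \actionsone$. Hence the $\state$-th component above is a convex combination of numbers in $[0,1]$, and therefore itself lies in $[0,1]$; in particular its absolute value is at most $1$. Finally, since $\norm{\cdot}_{\infty}$ applied to this column vector is the maximum over states $\state$ of those absolute values, each bounded by $1$, the maximum is at most $1$, which gives $\norm{\vecdot{\polonet, \rewst}}_{\infty} \le 1$.

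There is no substantive obstacle here: the only care needed is in correctly reading the notation — confirming that $\vecdot{\cdot,\cdot}$ denotes the row-wise dot product producing a column vector indexed by states, and that the $\norm{\cdot}_{\infty}$ appearing in the statement is the vector max-norm rather than an induced matrix norm. Once the definitions are pinned down, the bound is immediate from the convexity argument above.
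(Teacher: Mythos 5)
Your proposal is correct and follows essentially the same route as the paper's proof: unwind the row-wise dot product over states, use that each row of $\polonet$ is a probability distribution and that $\rewst$ has entries in $[0,1]$, and take the maximum over states. The only cosmetic difference is that you phrase the key step as a convex-combination argument while the paper applies the triangle inequality and then bounds $|\rewst(\state,\actionone)|$ by $1$; these are interchangeable here.
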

\begin{proof}
The definition of $\vecdot{,}$ gives us:
\begin{align*}
\norm{\vecdot{\polonet, \rewst}}_{\infty} &= \max_{\state} |\sum_{\actionone \in \actions}\polonet(\state, \action) \cdot \rewst(\state, \action)| 
\le \max_{\state} \sum_{\actionone \in \actions} |\polonet(\state, \action) \cdot \rewst(\state, \action)| 
 \le\max_{\state} \sum_{\actionone \in \actions} \polonet(\state, \action) \cdot |\rewst(\state, \action)| \\
& \le \max_{\state} \sum_{\actionone \in \actions} \polonet(\state, \action)  =  1 ,
\end{align*}
where we used the triangle inequality and the boundedness of rewards. 
\end{proof}

Note that the lemma holds for any $\polonet$ and $\rewst$ (i.e., for any $\polone$ and $\rews$). 
Furthermore, using the following two lemmas, we also bound the difference between two consecutive policy-reward dot products. In particular:

\begin{lemma}\label{lm_diff_pol_rew}
The following holds:
\begin{align*}
\norm{\rewst - \rews_{\tixone-1}}_{\max} \le \norm{\poltwot - \poltwo_{\tixone-1}}_\infty &\le \ratechangetwo,\\
\norm{\vecdot{\polonet, \rewst} -  \vecdot{ \polone_{\tixone},  \rews_{\tixone-1}}}_{\infty} &\le \ratechangetwo,\\
\norm{\vecdot{\polonet, \rews_{\tixone-1}} -  \vecdot{ \polone_{\tixone-1},  \rews_{\tixone-1}}}_{\infty} &\le \ratechangeone,
\end{align*}
for all $\tixone > 1$.
\end{lemma}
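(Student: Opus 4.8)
The plan is to prove all three inequalities by direct triangle-inequality estimates, relying only on the definition $\rewst(\state,\actionone) = \E_{\actiontwo \sim \poltwot(\state)}\bracket{\rew(\state,\actionone,\actiontwo)}$, the fact that $0 \le \rew \le 1$, and the definitions of the policy-change magnitudes $\ratechangeone, \ratechangetwo$ as worst-case row sums of consecutive policy differences. None of the three requires more than unfolding a row-wise dot product and pushing absolute values inside a sum.

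For the top line, I would expand the entrywise difference as $\rewst(\state,\actionone) - \rews_{\tixone-1}(\state,\actionone) = \sum_{\actiontwo}\bigl(\poltwot(\state,\actiontwo) - \poltwo_{\tixone-1}(\state,\actiontwo)\bigr)\,\rew(\state,\actionone,\actiontwo)$. Applying the triangle inequality and bounding each $|\rew| \le 1$ leaves $\sum_{\actiontwo}|\poltwot(\state,\actiontwo) - \poltwo_{\tixone-1}(\state,\actiontwo)|$, which is exactly the row sum of the policy-difference matrix at state $\state$. Maximizing the left side over $(\state,\actionone)$ therefore yields a bound by the maximum absolute row sum, i.e. the induced norm $\norm{\poltwot - \poltwo_{\tixone-1}}_\infty$; the final step $\le \ratechangetwo$ is immediate from the definition of $\ratechangetwo$.

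For the second line, both dot products share the policy $\polonet$, so the difference telescopes through the reward matrices: $\vecdot{\polonet,\rewst}(\state) - \vecdot{\polonet,\rews_{\tixone-1}}(\state) = \sum_{\actionone}\polonet(\state,\actionone)\bigl(\rewst(\state,\actionone) - \rews_{\tixone-1}(\state,\actionone)\bigr)$. Bounding each reward difference by $\norm{\rewst - \rews_{\tixone-1}}_{\max}$ and using that $\polonet(\state,\cdot)$ is a probability distribution, the resulting convex combination is at most $\norm{\rewst - \rews_{\tixone-1}}_{\max}$, which is $\le \ratechangetwo$ by the first line. The third line is symmetric: now $\rews_{\tixone-1}$ is fixed while the policies differ, so the per-state difference is $\sum_{\actionone}\bigl(\polonet(\state,\actionone) - \polone_{\tixone-1}(\state,\actionone)\bigr)\,\rews_{\tixone-1}(\state,\actionone)$; bounding $|\rews_{\tixone-1}| \le 1$ and recognizing $\sum_{\actionone}|\polonet(\state,\actionone) - \polone_{\tixone-1}(\state,\actionone)|$ as a row sum of the policy-difference matrix gives $\norm{\polonet - \polone_{\tixone-1}}_\infty \le \ratechangeone$, the agent-$\mathcal A_1$ analogue of the $\ratechangetwo$ definition.

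I do not anticipate a genuine obstacle, since every step is the triangle inequality combined with $0 \le \rew \le 1$ and the normalization of a policy. The only point requiring care is the bookkeeping of the three distinct norms in play: the entrywise max norm $\norm{\cdot}_{\max}$ on the reward matrices, the induced $\infty$-norm (maximum absolute row sum) on the policy-difference matrices, and the vector $\infty$-norm on the column vectors produced by $\vecdot{\cdot,\cdot}$. In particular I must ensure that the maximum over $(\state,\actionone)$ appearing on the left of the first inequality is correctly matched against the maximum over rows (states) that defines the operator norm on the right.
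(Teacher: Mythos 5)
Your proposal is correct and follows essentially the same argument as the paper's proof: each inequality is established by expanding the relevant entrywise or row-wise difference, applying the triangle inequality together with $0 \le \rew \le 1$ (or the fact that policy rows are probability distributions), and recognizing the resulting row sum as the induced $\infty$-norm bounded by $\ratechangeone$ or $\ratechangetwo$. Your attention to the distinction between $\norm{\cdot}_{\max}$, the induced $\infty$-norm, and the vector $\infty$-norm matches the bookkeeping in the paper exactly.
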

\begin{proof}
To obtain the first inequality, note that: 
\begin{align*}
&\norm{\rewst - \rews_{\tixone-1}}_{\max}
= \max_{\state,\actionone} |\rewst(\state, \actionone) - \rews_{\tixone-1}(\state, \actionone)|
 \le \max_{\state,\actionone} |\sum_{\actiontwo \in \actionstwo} \rew(\state, \actionone,\actiontwo) \cdot (\poltwot(\state,\actiontwo)- \poltwo_{\tixone-1}(\state,\actiontwo))| \\
& \le \max_{\state} \sum_{\actiontwo \in \actionstwo} |\poltwot(\state,\actiontwo)- \poltwo_{\tixone-1}(\state,\actiontwo)|   
 = \norm{\poltwot -\poltwo_{\tixone-1}}_{\infty}  \le \ratechangetwo,
\end{align*}
where we used the triangle inequality and the fact that $|\rew(\state,\actionone, \actiontwo)| \le 1$. The second inequality holds because:
\begin{align*}
&\norm{\vecdot{\polonet, \rewst} -  \vecdot{ \polone_{\tixone},  \rews_{\tixone-1}}}_{\infty} =
\norm{\vecdot{\polonet, \rewst - \rews_{\tixone-1}}}_{\infty}  
 = \max_{\state} |\sum_{\actionone \in \actionsone} \polonet(\state,\actionone) \cdot (\rewst(\state, \actionone) - \rews_{\tixone-1}(\state, \actionone))|\\
&\le \max_{\state, \actionone} |\rewst(\state, \actionone) - \rews_{\tixone-1}(\state, \actionone)| = \norm{\rewst - \rews_{\tixone-1}}_{\max} \le \ratechangetwo,
\end{align*}
where we used the triangle inequality, the fact that $\polonet(\state,\actionone) \in \simplexone$, and the first inequality of the lemma (proven above).
Finally, we have that:
\begin{align*}
&\norm{\vecdot{\polonet, \rews_{\tixone-1}} -  \vecdot{ \polone_{\tixone-1},  \rews_{\tixone-1}}}_{\infty}
=\norm{\vecdot{\polonet - \polone_{\tixone-1}, \rews_{\tixone-1}}}_{\infty}  
= \max_{\state} |\sum_{\actionone \in \actionsone} (\polonet(\state,\actionone) - \polone_{\tixone-1}(\state,\actionone)) \cdot  \rews_{\tixone-1}(\state, \actionone)|\\
&\le \max_{\state} \sum_{\actionone \in \actionsone} |\polonet(\state,\actionone) - \polone_{\tixone-1}(\state,\actionone)| 
= \norm{\poltwot - \poltwo_{\tixone-1}}_\infty \le \ratechangeone,
\end{align*}
where again we used the triangle inequality and the fact that $|\rewst(\state,\actionone)| \le 1$ (boundedness of rewards).
\end{proof}

We will also need a bit different, albeit similar, statement when it comes to the third property of Lemma \ref{lm_diff_pol_rew}.
\begin{lemma}\label{lm_diff_pol_rew_general}
Let $\bm p$, $\bm p'$, and $\bm M$ be matrix of dimension $|\states| \times |\actionsone|$, with restriction that the rows of 
$\bm p$ and $\bm p'$ are elements of probability simplex $\simplexone$, i.e., $\bm p(s), \bm p'(s) \in \simplexone$. 
The following holds:
\begin{align*}
\norm{\vecdot{\bm p, \bm M} -  \vecdot{\bm p', \bm M}}_{\infty} &\le 2 \cdot \norm{\bm M}_{\max}.
\end{align*}
\end{lemma}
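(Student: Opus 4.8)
The plan is to reduce the claim to an entrywise computation and then invoke the fact that the rows of $\bm p$ and $\bm p'$ are probability distributions. First I would fix an arbitrary state $\state$ and write out the $\state$-th component of the column vector $\vecdot{\bm p, \bm M} - \vecdot{\bm p', \bm M}$, which by definition of the row-wise dot product equals $\sum_{\actionone \in \actionsone} \left( \bm p(\state, \actionone) - \bm p'(\state, \actionone) \right) \cdot \bm M(\state, \actionone)$. Applying the triangle inequality to move the absolute value inside the sum, and then bounding each $|\bm M(\state, \actionone)| \le \norm{\bm M}_{\max}$, reduces the task to controlling $\sum_{\actionone \in \actionsone} |\bm p(\state, \actionone) - \bm p'(\state, \actionone)|$.

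The key step is to observe that since each row satisfies $\bm p(\state), \bm p'(\state) \in \simplexone$, a second application of the triangle inequality gives $\sum_{\actionone \in \actionsone} |\bm p(\state, \actionone) - \bm p'(\state, \actionone)| \le \sum_{\actionone \in \actionsone} \bm p(\state, \actionone) + \sum_{\actionone \in \actionsone} \bm p'(\state, \actionone) = 2$, because both rows are nonnegative and each sums to one. Combining the two bounds shows that the $\state$-th entry is at most $2 \cdot \norm{\bm M}_{\max}$, and since $\state$ was arbitrary, taking the maximum over all states yields the stated inequality.

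The computation is entirely routine, so I do not expect a genuine obstacle; the only point worth flagging is \emph{why} the constant is $2$ rather than a policy-change magnitude. In contrast to the third inequality of Lemma \ref{lm_diff_pol_rew}, where the two policies are consecutive iterates whose $L_1$ separation is controlled by $\ratechangeone$, here $\bm p$ and $\bm p'$ are arbitrary simplex-valued matrices with no closeness assumption. Consequently the sharpest available bound on the total-variation-type quantity $\norm{\bm p(\state) - \bm p'(\state)}_1$ is the trivial $2$, and this is precisely what propagates into the factor $2$ in the statement.
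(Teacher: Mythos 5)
Your proof is correct and follows essentially the same route as the paper's: expand the row-wise dot product entrywise, apply the triangle inequality, bound $|\bm M(\state,\actionone)|$ by $\norm{\bm M}_{\max}$, and use the fact that two rows in the simplex $\simplexone$ have $L_1$ distance at most $2$. Your closing remark correctly identifies why the constant is the trivial simplex diameter $2$ here, in contrast to the $\ratechangeone$-type bounds of Lemma~\ref{lm_diff_pol_rew}.
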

\begin{proof}
Similar to the third claim in the previous lemma:
\begin{align*}
&\norm{\vecdot{\bm p,  \bm M} -  \vecdot{\bm p',  \bm M}}_{\infty} = \norm{\vecdot{\bm p - \bm p',  \bm M}}_{\infty} 
\le \max_{\state} |\sum_{\actionone \in \actionsone} (\bm p(\state,\actionone) - \bm p'(\state,\actionone)) \cdot  \bm M(\state, \actionone)|\\
&\le \max_{\state} \sum_{\actionone \in \actionsone} |\bm p(\state,\actionone) - \bm p'(\state,\actionone)| \cdot  \norm{\bm M}_{\max}
= 2 \cdot  \norm{\bm M}_{\max},
\end{align*}
where the last inequality is obtained from the fact that $\bm p(s), \bm p'(s) \in \simplexone$.
\end{proof}

\subsection{Transition kernel bounds}
The following lemma provides bounds on the effective transition kernel change due to the agents' policy changes. 

\begin{lemma}\label{lm_trans_bound}
For generic policies $\polone$, $\polone'$, $\poltwo$, and $\poltwo'$, we have:
\begin{align*}
\norm{\trans_{\polone, \poltwo} - \trans_{\polone, \poltwo'}}_{\infty} &\le \norm{\poltwo - \poltwo'}_{\infty} \le \ratechangetwo,\\
\norm{\trans_{\polone, \poltwo} - \trans_{\polone', \poltwo}}_{\infty} &\le \norm{\polone - \polone'}_{\infty} \le \ratechangeone
\end{align*}
\end{lemma}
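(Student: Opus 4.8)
The plan is to reduce each operator-norm bound to a short normalization calculation using the explicit form of the effective transition matrix. First I would recall that, since the two agents sample their actions independently from $\polone(\state)$ and $\poltwo(\state)$ in each state, the effective kernel factorizes as $\trans_{\polone,\poltwo}(\state,\state_{new}) = \sum_{\actionone \in \actionsone}\sum_{\actiontwo \in \actionstwo} \polone(\state,\actionone)\,\poltwo(\state,\actiontwo)\,\trans(\state,\actionone,\actiontwo,\state_{new})$. I would also unfold the induced norm $\norm{\cdot}_{\infty}$ as the maximum absolute row sum, $\norm{M}_{\infty} = \max_{\state}\sum_{\state_{new}} |M(\state,\state_{new})|$, which is the interpretation already used for the policy change magnitudes and is the induced norm consistent with right-multiplication of $\ell_1$ state-distribution row vectors.

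For the first inequality I would fix $\polone$ and vary only the second policy. In the factorization the factors $\polone(\state,\actionone)$ and $\trans(\state,\actionone,\actiontwo,\state_{new})$ are common, so the difference equals $\sum_{\actionone,\actiontwo}\polone(\state,\actionone)\,(\poltwo(\state,\actiontwo)-\poltwo'(\state,\actiontwo))\,\trans(\state,\actionone,\actiontwo,\state_{new})$. Applying the triangle inequality over $\state_{new}$, swapping the order of summation, and using that $\trans(\state,\actionone,\actiontwo,\cdot)$ is a probability distribution (so $\sum_{\state_{new}}\trans(\state,\actionone,\actiontwo,\state_{new})=1$) collapses the next-state sum and leaves $\sum_{\actionone}\polone(\state,\actionone)\sum_{\actiontwo}|\poltwo(\state,\actiontwo)-\poltwo'(\state,\actiontwo)|$. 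Because $\polone(\state)\in\simplexone$, the inner $\actionone$-sum is exactly $1$, so the whole expression reduces to $\sum_{\actiontwo}|\poltwo(\state,\actiontwo)-\poltwo'(\state,\actiontwo)|$; taking the maximum over $\state$ gives $\norm{\poltwo-\poltwo'}_{\infty}$. The trailing bound $\norm{\poltwo-\poltwo'}_{\infty}\le\ratechangetwo$ is then immediate from the definition of $\ratechangetwo$ applied to the (consecutive) policies that appear when the lemma is invoked.

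The second line is entirely symmetric: holding $\poltwo$ fixed and varying $\polone$ swaps the roles of the two policy factors, and the same three moves — triangle inequality over $\state_{new}$, collapse of the $\trans$ sum by stochasticity, and normalization of $\poltwo(\state)\in\simplextwo$ — yield $\norm{\trans_{\polone,\poltwo}-\trans_{\polone',\poltwo}}_{\infty}\le\norm{\polone-\polone'}_{\infty}\le\ratechangeone$.

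I do not expect a genuine obstacle, since the claim is essentially stochasticity/normalization bookkeeping. The one point requiring care is the choice of operator norm: I would confirm that $\norm{\cdot}_{\infty}$ is the max-absolute-row-sum norm rather than a column-sum norm, since only under the row-sum reading does the next-state sum over $\trans$ collapse cleanly to $1$; picking the dual index would break the argument.
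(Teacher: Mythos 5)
Your proof is correct and follows essentially the same route as the paper: expand the induced $\norm{\cdot}_{\infty}$ norm as a max row sum, factor out the common terms $\polone(\state,\actionone)$ and $\trans(\state,\actionone,\actiontwo,\state_{new})$, apply the triangle inequality, collapse the next-state sum by stochasticity of $\trans$ and the action sum by $\polone(\state)\in\simplexone$, and argue the second line by symmetry. Your closing remark about the row-sum (rather than column-sum) reading of the operator norm, and the observation that the final bound by $\ratechangetwo$ pertains to consecutive-episode policies, match the paper's usage exactly.
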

\begin{proof}
W.l.o.g., we restrict our analysis to the first inequality. 
We have:
\begin{align*}
&\norm{\trans_{\polone, \poltwo} - \trans_{\polone, \poltwo'}}_{\infty} 
= \max_{\state} \sum_{\state_{new} \in \states} 
|\sum_{\actionone \in \actionsone, \actiontwo \in \actionstwo}\trans(\state, \actionone, \actiontwo, \state_{new}) \cdot \polone(\state,\actionone) \cdot (\poltwo(\state, \actiontwo) - \poltwo'(\state, \actiontwo)) | \\  
 &\le \max_{\state} \sum_{\state_{new} \in \states, \actionone \in \actionsone, \actiontwo \in \actionstwo}
 \trans(\state, \actionone, \actiontwo, \state_{new}) \cdot \polone(\state,\actionone) \cdot |\poltwo(\state, \actiontwo) - \poltwo'(\state, \actiontwo)| \\ 
 &= \max_{\state} \sum_{\actionone \in \actionsone, \actiontwo \in \actionstwo} \polone(\state,\actionone) \cdot |\poltwo(\state, \actiontwo) - \poltwo'(\state, \actiontwo)| 
 =  \max_{\state} \sum_{\actiontwo \in \actionstwo} |\poltwo(\state, \actiontwo) - \poltwo'(\state, \actiontwo)| \\
 &= \norm{\poltwo - \poltwo'}_{\infty} \le \ratechangetwo. 
\end{align*}
where we used the fact that $\polone \in \simplexone$, while $\trans(\state, \actionone, \actiontwo, \state_{new})$ is an element of the probability 
simplex over the state space for given $\state$, $\actionone$ and $\actiontwo$.
\end{proof}

The direct consequence of the lemma is the bound on \agenttwo's influence:

\begin{corollary}\label{cor_influence_bound}
Influence $\influencetwo$ of \agenttwo~ takes values in $[0, 1]$.
\end{corollary}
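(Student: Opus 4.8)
The plan is to read both bounds directly off Lemma~\ref{lm_trans_bound}, which has just been established. Recall that the influence is defined as a supremum over policy pairs of the ratio $\frac{\norm{\trans_{\polone,\poltwo} - \trans_{\polone,\poltwo'}}_{\infty}}{\norm{\poltwo - \poltwo'}_{\infty}}$, where $\polone$ ranges over \agentone's policies and the pair $\poltwo \ne \poltwo'$ ranges over distinct policies of \agenttwo. Both bounds are immediate consequences of this definition together with the transition-kernel bound.

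For the upper bound, I would invoke the first inequality of Lemma~\ref{lm_trans_bound}, namely $\norm{\trans_{\polone, \poltwo} - \trans_{\polone, \poltwo'}}_{\infty} \le \norm{\poltwo - \poltwo'}_{\infty}$, which holds for every fixed $\polone$ and every pair $\poltwo, \poltwo'$. Since the constraint $\poltwo \ne \poltwo'$ in the supremum guarantees that the denominator $\norm{\poltwo - \poltwo'}_{\infty}$ is strictly positive, I can divide through to conclude that each individual term in the supremum is at most $1$. Taking the supremum preserves the bound, so $\influencetwo \le 1$.

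For the lower bound, I would simply observe that the defining ratio is a quotient of two quantities that are both non-negative (being operator norms), with a strictly positive denominator whenever $\poltwo \ne \poltwo'$. Hence every term in the supremum is non-negative, and a supremum of non-negative terms is non-negative, giving $\influencetwo \ge 0$. Combining the two bounds yields $\influencetwo \in [0,1]$.

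There is no real obstacle here: the entire content of the corollary is packaged in Lemma~\ref{lm_trans_bound}. The only point that warrants a moment's care is confirming that the denominator is strictly positive so that the ratio is well defined and the division step is legitimate; this is precisely what the restriction $\poltwo \ne \poltwo'$ in the supremum provides.
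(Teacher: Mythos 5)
Your proof is correct and follows exactly the paper's route: the upper bound is read off the first inequality of Lemma~\ref{lm_trans_bound} by dividing through by the (strictly positive) denominator $\norm{\poltwo - \poltwo'}_{\infty}$, and the lower bound follows trivially from non-negativity of the norms in the defining ratio. Your added remark about well-definedness of the quotient under the restriction $\poltwo \ne \poltwo'$ is a fine but inessential elaboration of the same argument.
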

\begin{proof}
The lower bound of $\influencetwo$ follows trivially from the definition. The upper bound is obtain from Lemma \ref{lm_trans_bound}:
\begin{align*}
\influencetwo =  \max_{\polone, \poltwo \ne \poltwo'} \frac{\norm{\trans_{\polone,\poltwo} - \trans_{\polone,\poltwo'}}_{\infty}}{\norm{\poltwo - \poltwo'}_{\infty}} 
\le \max_{\polone, \poltwo \ne \poltwo'} \frac{\norm{\poltwo - \poltwo'}_{\infty}}{\norm{\poltwo - \poltwo'}_{\infty}} = 1.
\end{align*}
\end{proof}

An analogous result holds for \agentone's influence. Moreover, we provide bounds for kernel $\trans_{\poltwo_{\tixone}}(\state, \actionone)$ and its change magnitude: 

\begin{lemma}\label{lm_trans_bound_2}
For generic policies $\poltwo$, and $\poltwo'$, we have:
\begin{align*}
\norm{\trans_{\poltwo}(\state, \actionone) - \trans_{\poltwo'}(\state, \actionone)}_{1} &\le \norm{\poltwo(\state) - \poltwo'(\state)}_{1},
\end{align*}
\end{lemma}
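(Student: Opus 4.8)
The plan is to expand the definition of the transition row vector and then mimic the argument already used for Lemma~\ref{lm_trans_bound}, but in the simpler situation where the state $\state$ and \agentone's action $\actionone$ are held fixed. Recall from the Bellman equation that $\trans_{\poltwo}(\state, \actionone)$ is the row vector over next states whose $\state_{new}$-th entry is obtained by averaging the full kernel against \agenttwo's policy, i.e.\ $\sum_{\actiontwo \in \actionstwo} \trans(\state, \actionone, \actiontwo, \state_{new}) \cdot \poltwo(\state, \actiontwo)$. Since the kernel $\trans$ is shared by both policies and enters linearly, the difference $\trans_{\poltwo}(\state, \actionone) - \trans_{\poltwo'}(\state, \actionone)$ has $\state_{new}$-th entry equal to $\sum_{\actiontwo \in \actionstwo} \trans(\state, \actionone, \actiontwo, \state_{new}) \cdot \big(\poltwo(\state, \actiontwo) - \poltwo'(\state, \actiontwo)\big)$.

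First I would write $\norm{\trans_{\poltwo}(\state, \actionone) - \trans_{\poltwo'}(\state, \actionone)}_{1}$ as the sum over $\state_{new} \in \states$ of the absolute value of this entry, and apply the triangle inequality to pull the absolute value inside the sum over $\actiontwo$. This bounds the expression by $\sum_{\state_{new} \in \states} \sum_{\actiontwo \in \actionstwo} \trans(\state, \actionone, \actiontwo, \state_{new}) \cdot |\poltwo(\state, \actiontwo) - \poltwo'(\state, \actiontwo)|$, using nonnegativity of $\trans$. Next I would swap the order of summation and factor out the policy-difference term, which does not depend on $\state_{new}$, leaving $\sum_{\actiontwo \in \actionstwo} |\poltwo(\state, \actiontwo) - \poltwo'(\state, \actiontwo)| \cdot \big(\sum_{\state_{new} \in \states} \trans(\state, \actionone, \actiontwo, \state_{new})\big)$.

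The key normalization step is to observe that for fixed $\state$, $\actionone$, and $\actiontwo$ the quantity $\trans(\state, \actionone, \actiontwo, \cdot)$ is a probability distribution over next states, so the inner sum collapses to $1$. This yields exactly $\sum_{\actiontwo \in \actionstwo} |\poltwo(\state, \actiontwo) - \poltwo'(\state, \actiontwo)| = \norm{\poltwo(\state) - \poltwo'(\state)}_{1}$, which is the desired bound. I do not anticipate any real obstacle here: the argument is the same mechanism as in Lemma~\ref{lm_trans_bound}, and the only thing to be careful about is the bookkeeping, namely keeping $\state$ and $\actionone$ fixed and working with the $\ell_1$ norm of a single row vector rather than the induced operator $\norm{\cdot}_{\infty}$ norm, which is why no outer maximum over states or averaging over \agentone's actions appears.
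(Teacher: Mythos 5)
Your proposal is correct and follows essentially the same argument as the paper's proof: expand the linear dependence of $\trans_{\poltwo}(\state,\actionone)$ on \agenttwo's policy, apply the triangle inequality, and use the fact that $\trans(\state,\actionone,\actiontwo,\cdot)$ is a probability distribution (so its $\ell_1$ norm is $1$). The only cosmetic difference is that the paper keeps vector notation with $\norm{\trans(\state,\actionone,\actiontwo)}_1 = 1$ while you write the same steps entry-wise with an explicit swap of summation order.
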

\begin{proof}
By the definition of $\trans(\state,\actionone,\actiontwo)$, we know that $\norm{\trans(\state,\actionone,\actiontwo)}_{1} = 1$. Therefore:
\begin{align*}
&\norm{\trans_{\poltwo}(\state, \actionone) - \trans_{\poltwo'}(\state, \actionone)}_{1} 
\le \norm{\sum_{\actiontwo\in\actionstwo}(\poltwo(\state,\actiontwo) - \poltwo'(\state,\actiontwo)) \cdot \trans(\state,\actionone,\actiontwo)}_1\\
&\rightarrow \textit{By the triangle inequality } \\
&\le \sum_{\actiontwo\in\actionstwo} |\poltwo(\state,\actiontwo) - \poltwo'(\state,\actiontwo)| \cdot \norm{\trans(\state,\actionone,\actiontwo)}_1
\le \sum_{\actiontwo\in\actionstwo} |\poltwo(\state,\actiontwo) - \poltwo'(\state,\actiontwo)| = \norm{\poltwo(\state) - \poltwo'(\state)}_{1}.
\end{align*}
\end{proof}

\subsection{State distribution bounds}

In this subsection, we develop a series of bounds on state distribution difference important for the development of our formal results. 
The first result quantifies the change in the state distribution difference due to the change of \agenttwo~ policy.

\begin{lemma}\label{lm_dist_bound_1}
For generic policies $\polone$, $\polone'$, $\poltwo$ and $\poltwo'$, we have:
\begin{align*}
\norm{\statdist_{\polone, \poltwo} - \statdist_{\polone, \poltwo'} }_1 \le \frac{1}{1-e^{-\frac{1}{\mixtime}}} \cdot \norm{\trans_{\polone, \poltwo} - \trans_{\polone, \poltwo'}}_{\infty},\\
\norm{\statdist_{\polone, \poltwo} - \statdist_{\polone', \poltwo} }_1 \le \frac{1}{1-e^{-\frac{1}{\mixtime}}} \cdot \norm{\trans_{\polone, \poltwo} - \trans_{\polone', \poltwo}}_{\infty}.
\end{align*}
\end{lemma}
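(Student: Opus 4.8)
The plan is to exploit the fixed-point characterization of the stationary distributions together with the mixing-time contraction stated in Section~\ref{sec.mixtime_qvals}. Write $\statdist = \statdist_{\polone,\poltwo}$ and $\statdist' = \statdist_{\polone,\poltwo'}$, so that by definition $\statdist \cdot \trans_{\polone,\poltwo} = \statdist$ and $\statdist' \cdot \trans_{\polone,\poltwo'} = \statdist'$. The first step is an add-and-subtract decomposition that isolates a self-contracting term from a perturbation term:
\begin{align*}
\statdist - \statdist' &= \statdist \cdot \trans_{\polone,\poltwo} - \statdist' \cdot \trans_{\polone,\poltwo'} \\
&= (\statdist - \statdist') \cdot \trans_{\polone,\poltwo} + \statdist' \cdot (\trans_{\polone,\poltwo} - \trans_{\polone,\poltwo'}).
\end{align*}
Taking $\norm{\cdot}_1$ and applying the triangle inequality splits the bound into these two contributions, which I would handle separately.

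For the first contribution, since $\statdist$ and $\statdist'$ are both genuine state distributions, I would apply the mixing-time inequality to the pair $(\statdist, \statdist')$ with kernel $\trans_{\polone,\poltwo}$, giving $\norm{(\statdist - \statdist') \cdot \trans_{\polone,\poltwo}}_1 \le e^{-\frac{1}{\mixtime}} \cdot \norm{\statdist - \statdist'}_1$. For the second contribution I would show $\norm{\statdist' \cdot (\trans_{\polone,\poltwo} - \trans_{\polone,\poltwo'})}_1 \le \norm{\trans_{\polone,\poltwo} - \trans_{\polone,\poltwo'}}_\infty$, using that $\statdist'$ is a probability vector: expanding the $\ell_1$ norm as a sum over next-states, pushing the absolute value inside via the triangle inequality, and swapping the order of summation leaves $\sum_{\state}\statdist'(\state)$ multiplying the maximum absolute row sum of the kernel difference; the former sum equals $1$ and the latter is exactly the induced $\norm{\cdot}_\infty$ norm.

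Combining the two bounds yields the self-referential estimate $\norm{\statdist - \statdist'}_1 \le e^{-\frac{1}{\mixtime}} \cdot \norm{\statdist - \statdist'}_1 + \norm{\trans_{\polone,\poltwo} - \trans_{\polone,\poltwo'}}_\infty$; since $e^{-\frac{1}{\mixtime}} < 1$, I can move the self-referential term to the left and divide by the positive quantity $1 - e^{-\frac{1}{\mixtime}}$ to obtain the first claimed inequality. The second inequality follows by the identical argument with the perturbation residing in the first policy, i.e., decomposing $\statdist_{\polone,\poltwo} - \statdist_{\polone',\poltwo} = (\statdist_{\polone,\poltwo} - \statdist_{\polone',\poltwo}) \cdot \trans_{\polone,\poltwo} + \statdist_{\polone',\poltwo} \cdot (\trans_{\polone,\poltwo} - \trans_{\polone',\poltwo})$ and repeating. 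I expect the only delicate point to be the norm bookkeeping in the perturbation term --- verifying that premultiplying a kernel difference by the probability vector $\statdist'$ collapses the induced $\norm{\cdot}_\infty$ norm (max absolute row sum) into an $\ell_1$ bound --- since everything else is a routine application of the contraction property; crucially, there is no circularity because the contraction is invoked on the difference of two \emph{fixed} stationary distributions rather than on an iterated map.
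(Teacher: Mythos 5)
Your proof is correct and follows essentially the same route as the paper's: the fixed-point identity, an add-and-subtract split into a contraction term and a perturbation term, the bound $\norm{\statdist' \cdot (\trans_{\polone,\poltwo} - \trans_{\polone,\poltwo'})}_1 \le \norm{\trans_{\polone,\poltwo} - \trans_{\polone,\poltwo'}}_{\infty}$, and a final rearrangement dividing by $1 - e^{-\frac{1}{\mixtime}}$. The only difference is the symmetric choice of decomposition (you contract with $\trans_{\polone,\poltwo}$ and attach the kernel difference to $\statdist'$, while the paper contracts with $\trans_{\polone,\poltwo'}$ and attaches it to $\statdist$), which is immaterial.
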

\begin{proof}
W.l.o.g., we restrict our analysis to the first inequality. 
We have:
\begin{align*}
&\norm{\statdist_{\polone, \poltwo} - \statdist_{\polone, \poltwo'} }_1 
=  \norm{\statdist_{\polone, \poltwo} \cdot \trans_{\polone, \poltwo} - \statdist_{\polone, \poltwo'} \cdot \trans_{\polone, \poltwo'}  }_1\\ 
&\rightarrow \textit{ by rearranging } \\
&=  \normL{\statdist_{\polone, \poltwo} \cdot (\trans_{\polone, \poltwo} - \trans_{\polone, \poltwo'})}
+ \normR{(\statdist_{\polone, \poltwo} - \statdist_{\polone, \poltwo'}) \cdot \trans_{\polone, \poltwo'} }_1\\
&\rightarrow \textit{ by triangle inequality } \\
&\le  \norm{\statdist_{\polone, \poltwo} \cdot (\trans_{\polone, \poltwo} - \trans_{\polone, \poltwo'})}_1+ 
\norm{(\statdist_{\polone, \poltwo} - \statdist_{\polone, \poltwo'}) \cdot \trans_{\polone, \poltwo'} }_1\\
&\rightarrow \textit{ Def. operator norm and the mixing assumption } \\
&\le  \norm{\statdist_{\polone, \poltwo}}_1 \cdot \norm{\trans_{\polone, \poltwo} - \trans_{\polone, \poltwo'}}_{\infty}
+ \norm{\statdist_{\polone, \poltwo} - \statdist_{\polone, \poltwo'}}_1 \cdot e^{-\frac{1}{\mixtime}}\\
&\le  \norm{\trans_{\polone, \poltwo} - \trans_{\polone, \poltwo'}}_{\infty} + \norm{\statdist_{\polone, \poltwo} - \statdist_{\polone, \poltwo'}}_1 \cdot e^{-\frac{1}{\mixtime}}.
\end{align*}
By rearranging the terms we obtain the claim.
\end{proof}

Now, let us also bound $L_1$ distance between state distribution $\statdist_{\tixone, \round}$ and stationary distributions $\statdistt$. 
We obtain the following result following the calculations in Lemma 2 of \cite{even2005experts}: 

\begin{lemma}\label{lm_dist_bound_2}
The $L_1$ distance between $\statdist_{\tixone, \round}$ and $\statdistt$ is for any $\tix \ge 1$ bounded by:
\begin{align*}
\norm{\statdist_{\tixone, \round} - \statdist_{\polone_{\tixone}, \poltwo_{\tixone}} }_1 \le 2 \cdot e^{-\frac{\round-1}{\omega}}.
\end{align*}
\end{lemma}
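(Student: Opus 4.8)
The plan is to express the round-$\round$ state distribution as the initial distribution pushed through repeated applications of the transition kernel that is held fixed throughout episode $\tixone$, and then exploit the one-step mixing contraction together with the invariance of the stationary distribution. This mirrors the calculation in Lemma~2 of \cite{even2005experts}, adapted to our two-agent kernel $\transt$.

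First I would observe that within episode $\tixone$ the agents commit to stationary policies $\polone_{\tixone}, \poltwo_{\tixone}$, so the induced kernel $\transt$ does not change across the rounds $\round$. Consequently the state distribution at round $\round$ is obtained by applying $\transt$ exactly $\round-1$ times to the initial distribution, i.e. $\statdist_{\tixone, \round} = \statdist_1 \cdot \transt^{\round-1}$, while by the defining property of the stationary distribution we have $\statdistt \cdot \transt = \statdistt$ and hence $\statdistt = \statdistt \cdot \transt^{\round-1}$ as well.

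Next I would invoke the mixing assumption from Section~\ref{sec.mixtime_qvals}, which says each application of $\transt$ contracts the $L_1$ distance between any two state distributions by a factor $e^{-\frac{1}{\mixtime}}$. Chaining this bound $\round-1$ times along the pair $(\statdist_1 \cdot \transt^{k}, \statdistt)$ for $k = 0, \ldots, \round-1$ gives
\begin{align*}
\norm{\statdist_{\tixone, \round} - \statdistt}_1 = \norm{(\statdist_1 - \statdistt) \cdot \transt^{\round-1}}_1 \le e^{-\frac{\round-1}{\mixtime}} \cdot \norm{\statdist_1 - \statdistt}_1.
\end{align*}
Finally I would bound the base distance $\norm{\statdist_1 - \statdistt}_1 \le 2$, which holds because both $\statdist_1$ and $\statdistt$ are probability vectors (each of unit $L_1$ norm), so by the triangle inequality their difference has $L_1$ norm at most $2$. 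Combining the two displays yields the claimed bound $2 \cdot e^{-\frac{\round-1}{\mixtime}}$.

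There is no substantial obstacle here; the only points requiring care are the two that make the chaining legitimate: that the kernel is identical at every round of the episode (true since the agents fix stationary policies for the whole episode), and that $\statdistt$ is a genuine fixed point of every power of $\transt$, so the one-step contraction can be iterated $\round-1$ times without the reference distribution drifting.
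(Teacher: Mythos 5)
Your proposal is correct and matches the paper's proof: the paper also peels off one application of $\transt$ per round, using $\statdistt \cdot \transt = \statdistt$ and the one-step mixing contraction, and closes by bounding the initial $L_1$ distance between two probability vectors by $2$. Writing the iteration as $\statdist_{\tixone,\round} = \statdist_1 \cdot \transt^{\round-1}$ rather than as an explicit induction is only a cosmetic difference.
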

\begin{proof}
Using the properties of $\statdist$ and $\trans$ and the mixing assumption, we have
\begin{align*}
&\norm{\statdist_{\tixone, \round} - \statdist_{\polone_{\tixone}, \poltwo_{\tixone}}}_{1} = \norm{(\statdist_{\tixone, \round-1} - \statdist_{\polone_{\tixone}, \poltwo_{\tixone}}) \cdot \transt}_{1}
 \le \norm{\statdist_{\tixone, \round-1} - \statdist_{\polone_{\tixone}, \poltwo_{\tixone}}}_{1} \cdot e^{-\frac{1}{\omega}} \\
&\rightarrow \textit{ By induction }\\
&\le \norm{\statdist_{\tixone, 1} - \statdist_{\polone_{\tixone}, \poltwo_{\tixone}}}_{1} \cdot e^{-\frac{\round-1}{\omega}} \le 2 \cdot e^{-\frac{\round-1}{\omega}}
\end{align*}
\end{proof}

%

Furthermore, the following lemma describes the relation between the change rate of the distance between state distribution $\statdist_{\tixone, \round}$ and stationary distributions $\statdistt$.
In particular, the lemma provides a bound on the cumulative distributional change magnitude w.r.t. the agents' policy change magnitudes.

\begin{lemma}\label{lm_dist_bound_4}
For $\tixone > 1$, let $S_{\tixone, \rounds} $ denote the cumulative change magnitude of the $L_1$ distance between $\statdist_{\tixone, \round}$ and $\statdistt$ over $\rounds$ rounds:
\begin{align*}
S_{\tixone, \rounds} = \sum_{\round = 1}^{\rounds} \normL{(\statdist_{\tixone, \round} - \statdist_{\polone_{\tixone}, \poltwo_{\tixone}} )} -\normR{ ( \statdist_{\tixone-1, \round}  - \statdist_{\polone_{\tixone-1}, \poltwo_{\tixone-1}} )}_{1}
\end{align*}
Then $\lim_{\rounds \rightarrow \infty} S_{\tixone, \rounds}$ is bounded by: 
\begin{align*}
\lim_{\rounds \rightarrow \infty} S_{\tixone, \rounds} \le 3 \cdot \frac{\ratechangeone +  \influencetwo \cdot \ratechangetwo}{(1-e^{-\frac{1}{\mixtime}})^2}
\end{align*}
\end{lemma}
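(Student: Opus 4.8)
The plan is to track the transient deviation $\bm e_{\tixone, \round} := \statdist_{\tixone, \round} - \statdistt$ of the round-$\round$ state distribution from its episode-$\tixone$ stationary distribution. Two structural facts drive the argument: (i) since the initial distribution $\statdist_1$ is shared across episodes, $\statdist_{\tixone, 1} = \statdist_{\tixone-1,1} = \statdist_1$, so that $\bm e_{\tixone, 1} - \bm e_{\tixone-1, 1} = \statdist_{\polone_{\tixone-1}, \poltwo_{\tixone-1}} - \statdistt$; and (ii) because $\statdistt \cdot \transt = \statdistt$ together with the recurrence $\statdist_{\tixone, \round} = \statdist_{\tixone, \round-1}\cdot \transt$, the deviation satisfies $\bm e_{\tixone, \round} = \bm e_{\tixone,1}\cdot \transt^{\round-1}$. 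The central tool is that the mixing assumption extends from distributions to any zero-sum row vector: if $\bm v \cdot \mathbf 1 = 0$ then $\norm{\bm v \cdot \transt}_1 \le e^{-\frac{1}{\mixtime}}\norm{\bm v}_1$, which follows by splitting $\bm v$ into its positive and negative parts, each a scaled probability distribution.

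With this notation each summand of $S_{\tixone, \rounds}$ equals $\norm{\bm e_{\tixone, \round} - \bm e_{\tixone-1, \round}}_1$, and I would split it via the hybrid decomposition
\begin{align*}
\bm e_{\tixone, \round} - \bm e_{\tixone-1, \round} = (\bm e_{\tixone,1} - \bm e_{\tixone-1,1})\cdot \transt^{\round-1} + \bm e_{\tixone-1,1}\cdot\big(\transt^{\round-1} - \trans_{\polone_{\tixone-1},\poltwo_{\tixone-1}}^{\round-1}\big).
\end{align*}
The first term is handled by fact (i): $\bm e_{\tixone,1} - \bm e_{\tixone-1,1}$ is a difference of stationary distributions, bounded in $L_1$ by $\frac{\ratechangeone + \influencetwo\ratechangetwo}{1 - e^{-\frac{1}{\mixtime}}}$ via Lemma~\ref{lm_stat_dist_inequality}; being zero-sum it contracts under $\transt^{\round-1}$ by $e^{-(\round-1)/\mixtime}$, and summing the geometric series over $\round$ gives $\frac{\ratechangeone + \influencetwo\ratechangetwo}{(1-e^{-\frac{1}{\mixtime}})^2}$.

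For the second term I would telescope the difference of matrix powers,
\begin{align*}
\transt^{\round-1} - \trans_{\polone_{\tixone-1},\poltwo_{\tixone-1}}^{\round-1} = \sum_{j=0}^{\round-2}\trans_{\polone_{\tixone-1},\poltwo_{\tixone-1}}^{j}\,\big(\transt - \trans_{\polone_{\tixone-1},\poltwo_{\tixone-1}}\big)\,\transt^{\round-2-j},
\end{align*}
and bound the per-step kernel change by $\norm{\transt - \trans_{\polone_{\tixone-1},\poltwo_{\tixone-1}}}_{\infty} \le \ratechangeone + \influencetwo\ratechangetwo$ (triangle inequality, with Lemma~\ref{lm_trans_bound} for the $\polone$-change and the definition of $\influencetwo$ for the $\poltwo$-change). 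Each telescoping summand carries a zero-sum factor on the left, $\bm e_{\tixone-1,1}\cdot \trans_{\polone_{\tixone-1},\poltwo_{\tixone-1}}^{j}$, contracting by $e^{-j/\mixtime}$ and bounded by $2$ since $\norm{\bm e_{\tixone-1,1}}_1 \le 2$; after right-multiplication by $(\transt - \trans_{\polone_{\tixone-1},\poltwo_{\tixone-1}})$ the result is again zero-sum and contracts by $e^{-(\round-2-j)/\mixtime}$. The exponentials combine to $e^{-(\round-2)/\mixtime}$ independent of $j$, so each of the $\round-1$ summands is bounded identically, giving $\le 2(\ratechangeone+\influencetwo\ratechangetwo)(\round-1)e^{-(\round-2)/\mixtime}$; reindexing $k=\round-2$ and using $\sum_{k\ge0}(k+1)e^{-k/\mixtime} = (1-e^{-\frac{1}{\mixtime}})^{-2}$ yields $\frac{2(\ratechangeone+\influencetwo\ratechangetwo)}{(1-e^{-\frac{1}{\mixtime}})^2}$. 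Adding the two contributions produces the factor $3$ in the claim.

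The main obstacle is the second term: one must verify the telescoping identity, confirm that every intermediate row vector is genuinely zero-sum so the two-sided mixing contraction applies, and observe that the linear-in-$\round$ count of telescoping terms is tamed by the doubled exponential decay, so that one converges the series $\sum_k (k+1)e^{-k/\mixtime}$ to $(1-e^{-\frac{1}{\mixtime}})^{-2}$ rather than a plain geometric sum.
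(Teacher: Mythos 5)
Your proof is correct, and it reaches the paper's bound by a genuinely different route. The paper never forms the closed-form propagation $\bm e_{\tixone,\round} = \bm e_{\tixone,1}\cdot\transt^{\round-1}$ nor the telescoped difference of matrix powers; instead it peels off a \emph{single} application of the kernel inside the sum, producing the one-step recursive inequality
\begin{align*}
S_{\tixone,\rounds} \;\le\; \norm{\statdist_{\polone_{\tixone},\poltwo_{\tixone}} - \statdist_{\polone_{\tixone-1},\poltwo_{\tixone-1}}}_1 \;+\; e^{-\frac{1}{\mixtime}}\cdot S_{\tixone,\rounds-1} \;+\; 2\cdot(\ratechangeone + \influencetwo\cdot\ratechangetwo)\cdot\sum_{\round=1}^{\rounds-1} e^{-\frac{\round-1}{\mixtime}},
\end{align*}
then separately establishes that $\lim_{\rounds\to\infty}S_{\tixone,\rounds}$ exists (absolute convergence via Lemma~\ref{lm_dist_bound_2}), takes the limit, and solves $S\cdot(1-e^{-\frac{1}{\mixtime}}) \le 3\cdot\frac{\ratechangeone+\influencetwo\cdot\ratechangetwo}{1-e^{-\frac{1}{\mixtime}}}$ to get the squared denominator. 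The raw ingredients are identical to yours: the same hybrid splitting (error difference pushed through the new kernel, plus old error hit by the kernel difference), Lemma~\ref{lm_stat_dist_inequality} for the stationary-distribution gap, Lemma~\ref{lm_trans_bound} plus the definition of $\influencetwo$ for the kernel gap, and the same zero-sum contraction trick---the paper implements your positive/negative-part argument by averaging into two bona fide probability distributions $d, d'$ before invoking the mixing assumption. What your version buys: existence of the limit is automatic (monotone partial sums dominated by a convergent series, so no separate convergence step), and the origin of the factor $(1-e^{-\frac{1}{\mixtime}})^{-2}$ is transparent, coming from the arithmetic--geometric series $\sum_{k\ge 0}(k+1)e^{-\frac{k}{\mixtime}}$. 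What the paper's version buys: no matrix-power telescoping and no two-sided exponent bookkeeping---once the one-step decomposition is written, the recursion does all the work. Both arguments produce the constant $3$ from the same split, $1$ from the stationary gap and $2$ from the kernel-change term.
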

\begin{proof}
Due to the triangle inequality and Lemma \ref{lm_dist_bound_2}, we have that:
\begin{align*}
\norm{(\statdist_{\tixone, \round} - \statdist_{\polone_{\tixone}, \poltwo_{\tixone}} ) - ( \statdist_{\tixone-1, \round}  - \statdist_{\polone_{\tixone-1}, \poltwo_{\tixone-1}} )}_{1}
 &\le \norm{\statdist_{\tixone, \round} - \statdist_{\polone_{\tixone}, \poltwo_{\tixone}}}_{1} + \norm{\statdist_{\tixone-1, \round}  - \statdist_{\polone_{\tixone-1}, \poltwo_{\tixone-1}}}_{1}\\
&\le 4 \cdot e^{-\frac{\round-1}{\omega}}.
\end{align*}
Therefore, we know $S_{\tixone, \rounds}$ converges absolutely and there exists limit $\lim_{\rounds \rightarrow \infty} S_{\tixone, \rounds}$. 
Furthermore, since $\statdist_{\tixone, 1} = \statdist_{\tixone - 1, 1}$, we have:
\begin{align*}
&S_{\tixone, \rounds} - \norm{ \statdist_{\polone_{\tixone}, \poltwo_{\tixone}} -  \statdist_{\polone_{\tixone-1}, \poltwo_{\tixone-1}}}_{1} 
=  \sum_{\round = 2}^{\rounds} \norm{(\statdist_{\tixone, \round} - \statdist_{\polone_{\tixone}, \poltwo_{\tixone}} ) - ( \statdist_{\tixone-1, \round}  - \statdist_{\polone_{\tixone-1}, \poltwo_{\tixone-1}} )}_{1}  \\ 
&\rightarrow \textit{ By $\statdist \trans = \statdist$ properties }\\
&\le  \sum_{\round = 2}^{\rounds} \norm{(\statdist_{\tixone, \round-1} - \statdist_{\polone_{\tixone}, \poltwo_{\tixone}} ) \cdot \transt - ( \statdist_{\tixone-1, \round-1}  - \statdist_{\polone_{\tixone-1}, \poltwo_{\tixone-1}}) \cdot \trans_{\polone_{\tixone-1}, \poltwo_{\tixone-1}}}_{1}\\
&\rightarrow \textit{ $+$ and $-$ additional terms and the triangle inequality }\\
&\le \sum_{\round = 2}^{\rounds} \norm{((\statdist_{\tixone, \round-1} - \statdist_{\polone_{\tixone}, \poltwo_{\tixone}} ) - ( \statdist_{\tixone-1, \round-1}  - \statdist_{\polone_{\tixone-1}, \poltwo_{\tixone-1}} )) \cdot \transt}_{1} \\
&+ \sum_{\round = 2}^{\rounds} \norm{(\statdist_{\tixone-1, \round-1}  - \statdist_{\polone_{\tixone-1}, \poltwo_{\tixone-1}})  \cdot (\transt - \trans_{\polone_{\tixone-1}, \poltwo_{\tixone-1}} )}_{1} \\
&\rightarrow \textit{Denote:  $d = \frac{1}{2} \cdot (\statdist_{\tixone, \round-1} + \statdist_{\polone_{\tixone-1}, \poltwo_{\tixone-1}})$ and  $d' = \frac{1}{2} \cdot (\statdist_{\polone_{\tixone}, \poltwo_{\tixone}} + \statdist_{\tixone-1, \round-1} )$ }\\
&= \sum_{\round = 2}^{\rounds} 2 \cdot \norm{ (d - d') \cdot \transt}_{1} + \sum_{\round = 2}^{\rounds} \norm{(\statdist_{\tixone-1, \round-1}  - \statdist_{\polone_{\tixone-1}, \poltwo_{\tixone-1}}) \cdot (\transt - \trans_{\polone_{\tixone-1}, \poltwo_{\tixone-1}} )}_{1}\\
&\rightarrow \textit{ Bt the mixing assumption + Holder's inequality and the operator norm definition }\\
&\le \sum_{\round = 2}^{\rounds} 2 \cdot \norm{d - d'}_{1} \cdot e^{-\frac{1}{\mixtime}} + \sum_{\round = 2}^{\rounds} \norm{\statdist_{\tixone-1, \round-1}  - \statdist_{\polone_{\tixone-1}, \poltwo_{\tixone-1}}}_{1} \cdot \norm{\transt - \trans_{\polone_{\tixone-1}, \poltwo_{\tixone-1}}}_{\infty} \\
&\rightarrow \textit{ By the triangle inequality }\\
&\le \sum_{\round = 2}^{\rounds} 2 \cdot \norm{d - d'}_{1} \cdot e^{-\frac{1}{\mixtime}} + \sum_{\round = 2}^{\rounds} \norm{\statdist_{\tixone-1, \round-1}  - \statdist_{\polone_{\tixone-1}, \poltwo_{\tixone-1}}}_{1} \cdot  \left ( \norm{\transt - \trans_{\polone_{\tixone-1}, \poltwo_{\tixone}}}_{\infty} + \norm{\trans_{\polone_{\tixone-1}, \poltwo_{\tixone}} - \trans_{\polone_{\tixone-1}, \poltwo_{\tixone-1}}}_{\infty} \right ) \\
&\rightarrow \textit{ Defs. of $d$ and $d'$, def. of $\influencetwo$, and Lemma \ref{lm_trans_bound}}\\
&\le \sum_{\round = 2}^{\rounds} \norm{(\statdist_{\tixone, \round-1} - \statdist_{\polone_{\tixone}, \poltwo_{\tixone}} )  - ( \statdist_{\tixone-1, \round-1}  - \statdist_{\polone_{\tixone-1}, \poltwo_{\tixone-1}})}_{1} \cdot e^{-\frac{1}{\mixtime}} + \sum_{\round = 2}^{\rounds} \norm{\statdist_{\tixone-1, \round-1}  - \statdist_{\polone_{\tixone-1}, \poltwo_{\tixone-1}}}_{1} \cdot (\ratechangeone + \influencetwo \cdot \ratechangetwo) \\
&\rightarrow \textit{ Relabeling and Lemma \ref{lm_dist_bound_2} }\\
&=  e^{-\frac{1}{\mixtime}} \cdot S_{\rounds - 1} + 2\cdot (\ratechangeone + \influencetwo \cdot \ratechangetwo) \cdot  \sum_{\round = 1}^{\rounds-1} e^{-\frac{\round-1}{\mixtime}}.
\end{align*}
Together with Lemma \ref{lm_stat_dist_inequality}, this gives us:
\begin{align*}
S_{\rounds} - e^{-\frac{1}{\mixtime}} \cdot S_{\rounds - 1} \le \frac{\ratechangeone +  \influencetwo \cdot \ratechangetwo}{1-e^{-\frac{1}{\mixtime}}} + 2\cdot (\ratechangeone + \influencetwo \cdot \ratechangetwo) \cdot  \sum_{\round = 1}^{\rounds-1} e^{-\frac{\round-1}{\mixtime}} \le 3 \cdot \frac{\ratechangeone +  \influencetwo \cdot \ratechangetwo}{1-e^{-\frac{1}{\mixtime}}}.
\end{align*}
By taking the limit $\rounds \rightarrow \infty$, we obtain:
\begin{align*}
\lim_{\rounds \rightarrow \infty} S_{\rounds} \le  3 \cdot \frac{\ratechangeone +  \influencetwo \cdot \ratechangetwo}{(1-e^{-\frac{1}{\mixtime}})^2},
\end{align*}
which completes the proof.
\end{proof}  

\subsection{Average Reward}\label{subsec.avg_reward}

Now, we relate the average reward to $\qval$-values, which is 
important for the analysis of our algorithmic approaches. The following result is an adaptation of Lemma 7 of \cite{even2005experts}  to our setting:  
\begin{lemma}\label{lm_eta_q_relation}
For any joint policy $\bm \pi$, we have:
\begin{align*}
 \avgrewt(\polone)-\avgrewt(\polonet) = \statdist_{\polone, \poltwo_{\tixone}} \cdot ( \qvalst^\polone  - \qvalst^\polonet ) .
\end{align*}
\end{lemma}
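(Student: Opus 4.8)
The plan is to invert the Bellman equation so as to express the per-state expected reward of \agentone~in terms of $\qval$-values, and then to collapse the resulting transition term using stationarity of $\statdist_{\polone,\poltwot}$. Rearranging the Bellman equation from Section~\ref{sec.mixtime_qvals}, and using that $\avgrew_{\tixone}(\polone_{\tixone}) = \avgrewt(\polonet)$ by the definition of the average reward, gives
\begin{align*}
\rews_t(\state, \actionone) = \qvals_{\tixone}(\state, \actionone) + \avgrewt(\polonet) - \trans_{\poltwo_{\tixone}}(\state, \actionone) \cdot \qvals_{\tixone}^\polonet.
\end{align*}

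First I would take the row-wise dot product against an arbitrary policy $\polone$, i.e.\ compute $\vecdot{\polone, \rewst}(\state) = \sum_{\actionone}\polone(\state,\actionone)\,\rews_t(\state,\actionone)$, and track each of the three terms. The $\qvals_{\tixone}$ term becomes $\qvals_{\tixone}^\polone(\state)$ by the definition $\qvals_{\tixone}^\polone = \vecdot{\polone,\qvals_{\tixone}}$; the constant $\avgrewt(\polonet)$ factors out since $\sum_{\actionone}\polone(\state,\actionone) = 1$; and the transition term becomes $\big(\sum_{\actionone}\polone(\state,\actionone)\,\trans_{\poltwo_{\tixone}}(\state,\actionone)\big)\cdot \qvals_{\tixone}^\polonet$. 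The key observation is that the bracketed quantity is precisely the $\state$-th row of the joint kernel $\trans_{\polone,\poltwot}$, so in matrix/vector form the computation yields
\begin{align*}
\vecdot{\polone,\rewst} = \qvals_{\tixone}^\polone + \avgrewt(\polonet)\cdot \mathbf 1 - \trans_{\polone,\poltwot}\cdot \qvals_{\tixone}^\polonet,
\end{align*}
where $\mathbf 1$ is the all-ones column vector.

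Next I would left-multiply by the row vector $\statdist_{\polone,\poltwot}$ to form $\avgrewt(\polone) = \statdist_{\polone,\poltwot}\cdot \vecdot{\polone,\rewst}$. Two facts from Section~\ref{sec.mixtime_qvals} then finish the argument: stationarity, $\statdist_{\polone,\poltwot}\cdot \trans_{\polone,\poltwot} = \statdist_{\polone,\poltwot}$, turns the transition term into $\statdist_{\polone,\poltwot}\cdot \qvals_{\tixone}^\polonet$; and $\statdist_{\polone,\poltwot}\cdot \mathbf 1 = 1$ turns the constant term into $\avgrewt(\polonet)$. Collecting terms gives $\avgrewt(\polone) - \avgrewt(\polonet) = \statdist_{\polone,\poltwot}\cdot (\qvals_{\tixone}^\polone - \qvals_{\tixone}^\polonet)$, which is exactly the claim.

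The derivation is essentially bookkeeping rather than a genuine obstacle; the one place that needs care is the identification of $\sum_{\actionone}\polone(\state,\actionone)\,\trans_{\poltwo_{\tixone}}(\state,\actionone)$ with the $\state$-th row of $\trans_{\polone,\poltwot}$, combined with keeping the row-vector/column-vector conventions consistent so that the stationarity identity $\statdist\cdot\trans = \statdist$ applies cleanly to the correct object. As a sanity check I would verify the self-consistent case $\polone = \polonet$, where the left-hand side is $\avgrewt(\polonet) - \avgrewt(\polonet) = 0$ and the right-hand side is $\statdist_{\polonet,\poltwot}\cdot (\qvals_{\tixone}^\polonet - \qvals_{\tixone}^\polonet) = 0$, confirming the two sides agree.
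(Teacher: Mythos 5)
Your proof is correct and is essentially the paper's own argument: both rest on substituting the Bellman equation, identifying $\vecdot{\polone,\trans_{\poltwo_{\tixone}}}$ with the joint kernel $\trans_{\polone,\poltwot}$, and then collapsing the transition term via stationarity $\statdist_{\polone,\poltwot}\cdot\trans_{\polone,\poltwot}=\statdist_{\polone,\poltwot}$ together with $\statdist_{\polone,\poltwot}\cdot\mathbf 1=1$. The only difference is the direction of the bookkeeping (the paper expands $\statdist_{\polone,\poltwot}\cdot\qvalst^{\polone}$ and substitutes Bellman into it, whereas you solve Bellman for $\rewst$ and then build up $\avgrewt(\polone)$), which is immaterial.
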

\begin{proof}
By the definition of $\qvalst^\polone$:
\begin{align*}
\statdist_{\polone, \poltwo_{\tixone}} \cdot  \qvalst^\polone &=  \statdist_{\polone, \poltwo_{\tixone}} \cdot \vecdot{\polone, \qvalst}.
 \end{align*}
 The Bellman equation gives us:
 \begin{align*}
 \qvalst(\state_1, \actionone) = \rewst(\state_1, \actionone) - \avgrewt(\polonet) + \trans_{\poltwo_{\tixone}}(\state_1, \actionone) \cdot \qvalst^\polonet .
 \end{align*}
Plugging its right hand side into the right hand side of the above equation, we obtain:
 \begin{align*}
\statdist_{\polone, \poltwo_{\tixone}} \cdot  \qvalst^\polone &= \statdist_{\polone, \poltwo_{\tixone}} \cdot \vecdot{\polone,  \rewst} - \statdist_{\polone, \poltwo_{\tixone}} \cdot \mathbf 1 \cdot \avgrewt(\polonet) +  \statdist_{\polone, \poltwo_{\tixone}}  \cdot \trans_{\polone, \poltwo_{\tixone}} \cdot \qvalst^\polonet\\
&= \avgrewt(\polone)-\avgrewt(\polonet) +  \statdist_{\polone, \poltwo_{\tixone}}  \cdot \qvalst^\polonet,
\end{align*}
where $\mathbf 1$ is a column vector of ones with $|\states|$ elements. Rearranging yields the result.
\end{proof}

\subsection{Bound on $\qval$-values}

To make our analysis sound, we also ought to bound the $Q$-values themselves. We can use an approach similar to Lemma 3 of \cite{even2005experts} to obtain:
\begin{lemma}\label{lm_q_bound}
It holds that $\norm{\qvals_{\tixone}^{\polone_{\tixone}} }_\infty \le \frac{2}{1-e^{-\frac{1}{\mixtime}}}$, and consequently  $|\qvals_{\tixone}(\state, \actionone)| \le  \frac{3}{1-e^{-\frac{1}{\mixtime}}}$
and $\norm{\qvals_{\tixone}}_{\max} \le  \frac{3}{1-e^{-\frac{1}{\mixtime}}}$.
\end{lemma}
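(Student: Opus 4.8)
The plan is to follow the route of Lemma~3 in \cite{even2005experts}, specialized to our two-agent notation, and to establish the three claims in order: first the $\norm{\cdot}_\infty$ bound on the policy-wise value $\qvals_{\tixone}^{\polonet}$, then the pointwise bound on $\qvalst(\state,\actionone)$ via the Bellman equation, and finally the $\norm{\cdot}_{\max}$ bound as an immediate consequence of the pointwise one.

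For the first claim I would fix a state $\state$ and let $\statdist_\round^\state$ denote the state distribution at round $\round$ when the agents start in $\state$ and play $(\polonet,\poltwot)$, so that $\statdist_1^\state$ is the point mass on $\state$ and $\statdist_{\round+1}^\state = \statdist_\round^\state \cdot \trans_{\polonet,\poltwot}$. Averaging the definition of $\qvalst(\state,\actionone)$ over $\actionone \sim \polonet(\state)$ generates the whole trajectory under $\polonet$ from $\state$, and recalling $\avgrewt(\polonet) = \statdistt \cdot \vecdot{\polonet,\rewst}$ together with $\E[\rewst(\state_\round,\actionone_\round)\mid \state,\polonet] = \statdist_\round^\state \cdot \vecdot{\polonet,\rewst}$, the per-round reward offsets collapse into
$$\qvals_{\tixone}^{\polonet}(\state) = \sum_{\round=1}^\infty \left( \statdist_\round^\state - \statdistt \right) \cdot \vecdot{\polonet, \rewst}.$$
Applying Holder's inequality with $\norm{\vecdot{\polonet,\rewst}}_\infty \le 1$ (Lemma~\ref{lm_max_pol_rew}) and the mixing bound $\norm{\statdist_\round^\state - \statdistt}_1 \le 2 e^{-\frac{\round-1}{\mixtime}}$ (the calculation of Lemma~\ref{lm_dist_bound_2} uses only that two distributions are at $L_1$-distance at most $2$ and then contracts by $e^{-\frac{1}{\mixtime}}$ per step, so it applies verbatim to the point-mass start $\statdist_1^\state$), I would bound $|\qvals_{\tixone}^{\polonet}(\state)|$ by the geometric series $\sum_{\round=1}^\infty 2 e^{-\frac{\round-1}{\mixtime}} = \frac{2}{1-e^{-\frac{1}{\mixtime}}}$. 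Since $\state$ is arbitrary, this gives $\norm{\qvals_{\tixone}^{\polonet}}_\infty \le \frac{2}{1-e^{-\frac{1}{\mixtime}}}$.

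For the second claim I would substitute this bound into the Bellman equation $\qvalst(\state,\actionone) = \rewst(\state,\actionone) - \avgrewt(\polonet) + \trans_{\poltwot}(\state,\actionone) \cdot \qvals_{\tixone}^{\polonet}$. The difference $\rewst(\state,\actionone) - \avgrewt(\polonet)$ lies in $[-1,1]$ since both terms are in $[0,1]$, while $\trans_{\poltwot}(\state,\actionone)$ is a probability vector, so Holder gives $|\trans_{\poltwot}(\state,\actionone) \cdot \qvals_{\tixone}^{\polonet}| \le \norm{\qvals_{\tixone}^{\polonet}}_\infty \le \frac{2}{1-e^{-\frac{1}{\mixtime}}}$. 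Combining these and using $\frac{1}{1-e^{-\frac{1}{\mixtime}}} \ge 1$ to absorb the additive $1$ yields $|\qvalst(\state,\actionone)| \le 1 + \frac{2}{1-e^{-\frac{1}{\mixtime}}} \le \frac{3}{1-e^{-\frac{1}{\mixtime}}}$; taking the maximum over $(\state,\actionone)$ gives the stated $\norm{\qvalst}_{\max}$ bound.

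The only genuinely delicate point is justifying the telescoped series representation of $\qvals_{\tixone}^{\polonet}$ --- that is, interchanging the expectation with the infinite sum (legitimate because the summands decay geometrically, so the series converges absolutely) and verifying that the limiting per-round reward equals the average reward $\statdistt \cdot \vecdot{\polonet,\rewst}$, which is exactly what forces each summand toward zero. Everything after that is a routine application of Holder's inequality and the mixing assumption.
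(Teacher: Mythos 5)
Your proposal is correct and follows essentially the same route as the paper's own proof: a point-mass initial distribution plus the mixing bound of Lemma~\ref{lm_dist_bound_2} and Lemma~\ref{lm_max_pol_rew} to telescope $\qvals_{\tixone}^{\polonet}(\state)$ into a geometric series, then the Bellman equation with H\"older's inequality for the pointwise and $\norm{\cdot}_{\max}$ bounds. If anything, your term-by-term bound $\bigl|(\statdist_\round^\state - \statdistt)\cdot\vecdot{\polonet,\rewst}\bigr| \le 2e^{-\frac{\round-1}{\mixtime}}$ is stated more cleanly than the paper's chain, which moves the absolute value inside the expectation even though only the expectations of the per-round terms (not the terms themselves) decay.
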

\begin{proof}
To evaluate $\qvals_{\tixone}^\polonet(\state)$, consider $\statdist_1(\state') = \mathbbm 1_{\state = \state'}$.
Then, from Lemma \ref{lm_dist_bound_2} we know that:
\begin{align*}
&\left |\E \bracket{\rewst(\state_{\round}, \actionone_{\round}) - \avgrew_{\tixone}(\polone_{\tixone})  | \state_1 = \state, \polonet} \right |  = 
\left |\E \bracket{\rewst(\state_{\round}, \actionone_{\round}) - \avgrew_{\tixone}(\polone_{\tixone})  | \statdist_1, \polonet} \right |\\
&=   \left  |\statdistr \cdot \vecdot{ \polone, \rewst} - \statdist_{\polone,\poltwot} \cdot \vecdot{ \polone, \rewst} \right |
\le \norm{\statdist_{\tixone, \round} - \statdistt}_{1} \le  2 \cdot e^{-\frac{\round-1}{\mixtime}}
\end{align*} 
where we used the fact that $\avgrewt(\polone) = \statdist_{\polone,\poltwot} \cdot \vecdot{ \polone, \rewst}$ . Therefore, we obtain that:
\begin{align*}
|\qvals_{\tixone}^\polonet(\state)|  &= \left  |\E \bracket{ \sum_{\round = 1}^{\infty} \rewst(\state_{\round}, \actionone_{\round}) - \avgrew_{\tixone}(\polone_{\tixone})  | \state_1 = \state, \polonet} \right |
\le \E \bracket{ \sum_{\round = 1}^{\infty} \left | \rewst(\state_{\round}, \actionone_{\round}) - \avgrew_{\tixone}(\polone_{\tixone}) \right |  | \state_1 = \state, \polonet} \\
&\le \sum_{\round = 1}^{\infty} 2 \cdot e^{-\frac{\round-1}{\mixtime}} \le \frac{2}{1-e^{-\frac{1}{\mixtime}}},
\end{align*}
which proves the first statement.

The second inequality (and hence, the third) can be obtained from the Bellman's equation:
\begin{align*}
&|\qvalst(\state, \actionone)| = |\rews_t(\state, \actionone) - \avgrew_{\tixone}(\polone_{\tixone}) + \trans_{\poltwo_{\tixone}}(\state, \actionone) \cdot \qvals_{\tixone}^\polonet|\\
&\rightarrow \textit{ By the triangle inequality }\\
&\le | \rews_t(\state, \actionone)-\avgrew_{\tixone}(\polone_{\tixone}) | + |\trans_{\poltwo_{\tixone}}(\state, \actionone) \cdot \qvals_{\tixone}^\polonet| \\
&\rightarrow \textit{ By Holder's inequality}\\
&\le 1 + \norm{\trans_{\poltwo_{\tixone}}(\state, \actionone)}_{1} \cdot \norm{\qvals_{\tixone}^\polonet}_{\infty} \le 1 + \frac{2}{1-e^{-\frac{1}{\mixtime}}} \le \frac{3}{1-e^{-\frac{1}{\mixtime}}}.
\end{align*}
%
\end{proof}

\clearpage

\section{Connection to the smoothness parameters}\label{appendix.connection_to_smoothenss}

Interestingly, we can link the notion of influence to that of the smoothness criterion. To do so, it is useful to define the diameter of the policy space for each of the two agents.
For \agentone (and analogously for \agenttwo), the diameter is defined as: 
\begin{align*}
\diampolone &= \sup_{\polone, \polone' \ne \polone} \sum_{\actionone \in \actions_i} |\polone(\state, \actionone) - \polone'(\state, \actionone)| \\
&= \sup_{\polone, \polone' \ne \polone} \norm{\polone - \polone'}_{\infty}.
\end{align*}

For simplicity of exposure, we will assume that the reward function is only a function of state, and that we can determine a lower bound on the optimal average reward $\avgrew_{\poltwo^*}(\polone^*)$, denoted by $\hat \avgrew \le \avgrew_{\poltwo^*}(\polone^*)$, as well as the mixing time $\mixtime$. Let us relate the maximum average reward to the influence variables using factors: 
\begin{align*}
\relrangei =  \left ( 1-e^{-\frac{1}{\mixtime}} \right ) \cdot \frac{\hat \avgrew}{2 \cdot \influence_i \cdot \diampoli},
\end{align*} 
which approach infinity as $\influence_i \rightarrow 0$ or $\diampoli \rightarrow 0$. Intuitively, agent $i$ with low influence or low diameter will not be able to negatively influence the obtained reward, implying higher $\relrangei$. The following proposition gives a more exact relationship between the smoothness parameters $(\smoothone, \smoothtwo)$ and factors $\relrangei$. 

\begin{proposition}\label{prop_influence_smooth}
Consider an MDP with a reward function that is only a function of state  (i.e., $\rew(\state,\actionone,\actiontwo) = \rew(\state,\actionone',\actiontwo')$) and the agents' policy spaces with for which  $\relrangetwo \ge \relrangeone > 1$. Then the MDP is $(\smoothone, \smoothtwo)$-smooth with:
\begin{align*}
\smoothone \le \frac{p_1}{p_1 - p_2} \cdot \frac{2 \cdot \relrangetwo - 1}{2\cdot \relrangetwo},\\
\smoothtwo \ge \frac{p_2}{p_1 - p_2} \cdot \frac{2\cdot \relrangeone - 1}{2\cdot \relrangeone - 2},
\end{align*}
where $p_1$ and $p_2$ are free parameters such that $p_1 > p_2 \ge 0$. 
\end{proposition}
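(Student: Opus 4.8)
The plan is to exhibit the optimal pair explicitly and then reduce everything to a sensitivity analysis of the stationary distribution. First I would take $(\polone^*, \poltwo^*)$ to be a maximizer of $\avgrew_{\poltwo}(\polone)$ over all policy pairs; this immediately discharges the second smoothness inequality $\avgrew_{\poltwo^*}(\polone^*) \ge \avgrew_{\poltwo}(\polone)$ and guarantees $\hat\avgrew \le \avgrew_{\poltwo^*}(\polone^*)$. Because the reward is a function of state only, $\vecdot{\polone, \rews}$ collapses to the fixed state-reward (column) vector $\bm r$ that is independent of $\polone$, so $\avgrew_{\poltwo}(\polone) = \statdist_{\polone,\poltwo}\cdot \bm r$ depends on the policies purely through the stationary distribution.

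Next I would establish the basic sensitivity estimate: changing only agent $i$'s policy perturbs the average reward by at most $\frac{\hat\avgrew}{2\relrangei}$. For \agenttwo~this chains Hölder's inequality ($\norm{\bm r}_\infty \le 1$), Lemma~\ref{lm_dist_bound_1}, the definition of $\influencetwo$, the diameter bound $\norm{\poltwo - \poltwo'}_\infty \le \diampoltwo$, and finally the definition $\relrangetwo = (1-e^{-\frac{1}{\mixtime}})\hat\avgrew/(2\influencetwo\diampoltwo)$, which converts the factor $\influencetwo\diampoltwo/(1-e^{-\frac{1}{\mixtime}})$ into $\frac{\hat\avgrew}{2\relrangetwo}$; the \agentone~bound is symmetric. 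Writing $A = \avgrew_{\poltwo^*}(\polone^*)$, $B = \avgrew_{\poltwo}(\polone^*)$, and $C = \avgrew_{\poltwo}(\polone)$, I would then prove two one-line inequalities. (a) Perturbing $\poltwo^*\!\to\!\poltwo$ gives $B \ge A - \frac{\hat\avgrew}{2\relrangetwo} \ge A\,\frac{2\relrangetwo-1}{2\relrangetwo}$, using $\hat\avgrew \le A$. (b) Perturbing $\polone^*\!\to\!\polone$ gives $B - C \le \frac{\hat\avgrew}{2\relrangeone}$; chaining both perturbations also yields the lower bound $C \ge \hat\avgrew\,(1 - \frac{1}{2\relrangetwo} - \frac{1}{2\relrangeone})$, and here the hypothesis $\relrangetwo \ge \relrangeone$ lets me replace $\frac{1}{2\relrangetwo}$ by $\frac{1}{2\relrangeone}$ to obtain $C \ge \hat\avgrew\,\frac{\relrangeone-1}{\relrangeone}$. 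This is exactly enough to upgrade $B-C \le \frac{\hat\avgrew}{2\relrangeone}$ to $B - C \le \frac{C}{2\relrangeone-2}$, i.e.\ $B \le \frac{2\relrangeone-1}{2\relrangeone-2}\,C$.

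Finally I would combine (a) and (b) linearly: multiply (a) by $p_1$, multiply (b) by $p_2$, subtract, and divide by $p_1 - p_2 > 0$ to obtain $B \ge \frac{p_1}{p_1-p_2}\,\frac{2\relrangetwo-1}{2\relrangetwo}\,A - \frac{p_2}{p_1-p_2}\,\frac{2\relrangeone-1}{2\relrangeone-2}\,C$, which is precisely the first smoothness inequality realizing the stated $\smoothone,\smoothtwo$ with equality. Since $\avgrew_{\poltwo}(\polone) = C \ge 0$ by boundedness of rewards, decreasing $\smoothone$ or increasing $\smoothtwo$ preserves the inequality, which is what licenses the one-sided bounds ``$\smoothone \le$'' and ``$\smoothtwo \ge$'' in the statement.

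I expect the main obstacle to be step (b): obtaining the exact constant $\frac{2\relrangeone-1}{2\relrangeone-2}$ requires the slightly delicate lower bound on $C$, whose derivation is the only place the ordering assumption $\relrangetwo \ge \relrangeone > 1$ (and the positivity of $2\relrangeone-2$) is actually used. I would therefore be careful to track that all sensitivity bounds are first stated in terms of $\hat\avgrew$ and only afterwards converted to the quantities $A$ or $C$ via $\hat\avgrew \le A$ and the freshly derived lower bound on $C$.
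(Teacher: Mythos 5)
Your proposal is correct and follows essentially the same route as the paper's proof: the same sensitivity bounds (H\"older's inequality, the stationary-distribution perturbation lemma, the definitions of $\influence_i$, $\diampoli$, and $\relrangei$), the same use of $\hat\avgrew \le \avgrew_{\poltwo^*}(\polone^*)$ for the \agenttwo~term, the same chained-perturbation lower bound on $\avgrew_{\poltwo}(\polone)$ (invoking $\relrangetwo \ge \relrangeone > 1$ at exactly the same point) for the \agentone~term, and the same $p_1,p_2$ linear combination. The only difference is organizational --- you prove the two clean inequalities (a) and (b) first and then combine, while the paper combines first and bounds the residual terms afterward --- which, if anything, makes the argument slightly more transparent.
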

\begin{proof}
See below. 
\end{proof}

Condition $\relrangetwo \ge \relrangeone > 1$ simply tells us that \agentone~is more influential than \agenttwo, while the optimal average reward is by a factor greater than the agents' influences. As $\relrangetwo \rightarrow \infty$, we have that the upper bound on $\smoothone$ approaches $\frac{p_1}{p_1 - p_2}$. By setting $p_2 = 0$, we obtain that $\smoothone$ is equal to $1$, which means that \agentone's optimal policy is the same as when she assumes that \agenttwo~ is acting optimally. Hence, \agentone~can technically, with a proper policy, achieve zero regret. In general, $\relrangetwo$ will indicate the degradation in utility, i.e., minimum value of \agentone's regret over the possible choices of \agenttwo's policy. We can similarly analyze other cases. 

\subsection{Proof of Proposition \ref{prop_influence_smooth}}\label{appendix.proof_prop_influence_smooth}

\paragraph{Bound on the differences of average rewards} To prove the proposition we use the following bounds 
on the difference between average rewards: 

\begin{lemma}\label{lm_influence_smooth_bounds}
Consider the policies from Definition \ref{def_smoothness} and assume that reward function is only state dependent. It holds that: 
\begin{align*}
&|\avgrew_{\poltwo}(\polone^*) - \avgrew_{\poltwo^*}(\polone^*)| \le \frac{1}{1-e^{-\frac{1}{\mixtime}}} \cdot  \influencetwo \cdot \diampoltwo,\\
&|\avgrew_{\poltwo}(\polone^*) - \avgrew_{\poltwo}(\polone)| \le \frac{1}{1-e^{-\frac{1}{\mixtime}}} \cdot \influenceone \cdot \diampolone, \\
&| \avgrew_{\poltwo^*}(\polone^*) - \avgrew_{\poltwo}(\polone)| \le \frac{1}{1-e^{-\frac{1}{\mixtime}}} \cdot  (\influenceone \cdot \diampolone +  \influencetwo \cdot \diampoltwo).
\end{align*}
\end{lemma}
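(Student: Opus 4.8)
The plan is to reduce each average-reward difference to a difference of stationary state distributions, and then to invoke Lemma~\ref{lm_dist_bound_1} together with the definitions of influence and policy diameter. The crucial preliminary step exploits the state-only reward assumption: since $\rew(\state,\actionone,\actiontwo)$ does not depend on the actions, we have $\rews(\state,\actionone) = \E_{\actiontwo \sim \poltwo(\state)}[\rew(\state,\actionone,\actiontwo)] = \rew(\state)$, and hence the row-wise product $\vecdot{\polone, \rews}$ collapses to the state-indexed column vector $\rew$ for \emph{every} $\polone$ and every $\poltwo$ (the expectation over $\actiontwo$ and the convex combination over $\actionone$ are both trivial). Therefore $\avgrew_{\poltwo}(\polone) = \statdist_{\polone, \poltwo} \cdot \rew$ with the same reward vector $\rew$ regardless of the policies, and $\norm{\rew}_\infty \le 1$ by bounded rewards. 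This is the only place the assumption is used, and it is what removes the extra terms that would otherwise arise from the policy-dependence of the reward matrix.

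For the first inequality I would write $\avgrew_{\poltwo}(\polone^*) - \avgrew_{\poltwo^*}(\polone^*) = (\statdist_{\polone^*, \poltwo} - \statdist_{\polone^*, \poltwo^*}) \cdot \rew$, bound its absolute value via Hölder's inequality by $\norm{\statdist_{\polone^*, \poltwo} - \statdist_{\polone^*, \poltwo^*}}_1 \cdot \norm{\rew}_\infty \le \norm{\statdist_{\polone^*, \poltwo} - \statdist_{\polone^*, \poltwo^*}}_1$, and then apply the first bound of Lemma~\ref{lm_dist_bound_1} (same $\polone$, differing $\poltwo$). The resulting transition-kernel difference $\norm{\trans_{\polone^*, \poltwo} - \trans_{\polone^*, \poltwo^*}}_\infty$ is at most $\influencetwo \cdot \norm{\poltwo - \poltwo^*}_\infty$ by the definition of $\influencetwo$, and $\norm{\poltwo - \poltwo^*}_\infty \le \diampoltwo$ by the definition of the diameter. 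The second inequality is symmetric: write the difference as $(\statdist_{\polone^*, \poltwo} - \statdist_{\polone, \poltwo}) \cdot \rew$, apply Hölder, use the second bound of Lemma~\ref{lm_dist_bound_1} (same $\poltwo$, differing $\polone$), and finish with the analogous influence/diameter chain for \agentone, giving $\influenceone \cdot \diampolone$.

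The third inequality follows by the triangle inequality through the intermediate quantity $\avgrew_{\poltwo}(\polone^*)$: $|\avgrew_{\poltwo^*}(\polone^*) - \avgrew_{\poltwo}(\polone)| \le |\avgrew_{\poltwo^*}(\polone^*) - \avgrew_{\poltwo}(\polone^*)| + |\avgrew_{\poltwo}(\polone^*) - \avgrew_{\poltwo}(\polone)|$, and bounding the two summands by the first and second inequalities already established yields $\frac{1}{1-e^{-\frac{1}{\mixtime}}} \cdot (\influenceone \cdot \diampolone + \influencetwo \cdot \diampoltwo)$.

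I do not expect a genuine obstacle here, since every step is a direct application of an already-stated result. The only point requiring care is confirming the reward-vector collapse under the state-only assumption, because the bounds would pick up additional reward-perturbation terms if the reward depended on the agents' actions; everything else is Hölder together with the two halves of Lemma~\ref{lm_dist_bound_1} and the defining inequalities of influence and diameter.
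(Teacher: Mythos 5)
Your proposal is correct and takes essentially the same approach as the paper: both reduce each average-reward difference to a stationary-distribution difference via H\"older's inequality, then chain Lemma~\ref{lm_dist_bound_1}, the definition of influence, and the policy-space diameter, and both obtain the third claim from the first two by the triangle inequality. The only cosmetic difference is that you invoke the state-only reward assumption upfront to collapse $\vecdot{\polone, \rews_{\poltwo}}$ to a fixed state-indexed vector, whereas the paper invokes it mid-proof (after an add-and-subtract decomposition) to cancel the reward-perturbation term---the two uses are equivalent.
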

\begin{proof} Using the definition of $\avgrew$:
\begin{align*}
&|\avgrew_{\poltwo}(\polone^*) - \avgrew_{\poltwo^*}(\polone^*)|  
\le \left | \statdist_{\polone^*,\poltwo} \cdot \vecdot{ \polone^*, \rews_{\poltwo}}  - \statdist_{\polone^*,\poltwo^*} \cdot \vecdot{ \polone^*, \rews_{\poltwo^*}} \right |\\
&\rightarrow \textit{by the triangle inequality and $+$ and $-$ additional terms}\\
&\le \left |\statdist_{\polone^*,\poltwo} \cdot \vecdot{ \polone^*,\rews_{\poltwo}}  - \statdist_{\polone^*,\poltwo^*} \cdot \vecdot{ \polone^*, \rews_{\poltwo}} \right |
+\left |\statdist_{\polone^*,\poltwo^*} \cdot \vecdot{ \polone^*, \rews_{\poltwo}}  - \statdist_{\polone^*,\poltwo^*} \cdot \vecdot{ \polone^*,\rews_{\poltwo^*}}\right|\\
&\rightarrow \textit{by Holder's inequality}\\
&\le \norm{\statdist_{\polone^*,\poltwo}- \statdist_{\polone^*,\poltwo^*} }_{1} \cdot \norm{\vecdot{ \polone^*, \rews_{\poltwo}}}_{\infty}
+\norm{\statdist_{\polone^*,\poltwo^*} }_{1} \cdot \norm{ \vecdot{ \polone^*, \rews_{\poltwo}}  - \vecdot{ \polone^*, \rews_{\poltwo^*}}}_{\infty}\\
&\rightarrow \textit{(Generalized) Lemma \ref{lm_max_pol_rew} + reward func. is only state dependent, i.e., $\rews_{\poltwo} = \rews^*$}\\
&\le\norm{\statdist_{\polone^*,\poltwo}- \statdist_{\polone^*,\poltwo^*} }_1\\
&\rightarrow \textit{Lemma \ref{lm_dist_bound_1}}\\
&\le \frac{1}{1-e^{-\frac{1}{\mixtime}}} \cdot  \norm{\trans_{\polone^*,\poltwo}- \trans_{\polone^*,\poltwo^*} }_{\infty} \\
&\rightarrow \textit{Bt the definition of influence} \\
&\le \frac{1}{1-e^{-\frac{1}{\mixtime}}} \cdot \influencetwo \cdot \diampoltwo
\end{align*}
The second inequality follows analogously (one can think of it as reversing roles for \agentone~ and \agenttwo). The third is obtained by combining the first two and using the triangle inequality. 
\end{proof}

\paragraph{Proof of the proposition}
\begin{proof}
From Lemma \ref{lm_influence_smooth_bounds}, we have that:
\begin{align*}
&p_1\cdot \avgrew_{\poltwo}(\polone^*) \ge p_1\cdot  \avgrew_{\poltwo^*}(\polone^*) - p_1\cdot  \frac{1}{1-e^{-\frac{1}{\mixtime}}} \cdot \influencetwo \cdot \diampoltwo,\\
&- p_2\cdot \avgrew_{\poltwo}(\polone^*) \ge -p_2\cdot \avgrew_{\poltwo}(\polone) - p_2\cdot \frac{1}{1-e^{-\frac{1}{\mixtime}}} \cdot \influenceone \cdot \diampolone,
\end{align*}
which gives us:
\begin{align*}
(p_1 - p_2) \cdot \avgrew_{\poltwo}(\polone^*) &\ge p_1\cdot  \left (\avgrew_{\poltwo^*}(\polone^* ) - \frac{1}{1-e^{-\frac{1}{\mixtime}}} \cdot \influencetwo \cdot \diampoltwo \right) - p_2 \cdot \left (\avgrew_{\poltwo}(\polone) + \frac{1}{1-e^{-\frac{1}{\mixtime}}} \cdot \influenceone \cdot \diampolone \right )
\end{align*}
Now notice that:
\begin{align*}
-\frac{1}{1-e^{-\frac{1}{\mixtime}}} \cdot \influencetwo \cdot \diampoltwo = - \frac{\hat \avgrew}{2\cdot\relrangetwo} \ge -\frac{\avgrew_{\poltwo^*}(\polone^*)}{2\cdot\relrangetwo}.
\end{align*}
Furthermore, from Lemma \ref{lm_influence_smooth_bounds}, we have: 
\begin{align*}
-\frac{1}{1-e^{-\frac{1}{\mixtime}}} \cdot \influenceone \cdot \diampolone &\ge - \frac{\avgrew_{\poltwo}(\polone)\cdot \frac{1}{1-e^{-\frac{1}{\mixtime}}} \cdot \influenceone \cdot \diampolone}{\avgrew_{\poltwo^*}(\polone^*) - \frac{1}{1-e^{-\frac{1}{\mixtime}}} \cdot \influenceone \cdot \diampolone -  \frac{1}{1-e^{-\frac{1}{\mixtime}}} \cdot \influencetwo \cdot \diampoltwo} \\
&\ge - \frac{\avgrew_{\poltwo}(\polone) \cdot \frac{1}{1-e^{-\frac{1}{\mixtime}}} \cdot \influenceone \cdot \diampolone}{\hat \avgrew - \frac{1}{1-e^{-\frac{1}{\mixtime}}} \cdot\influenceone \cdot \diampolone -  \frac{1}{1-e^{-\frac{1}{\mixtime}}} \cdot\influencetwo \cdot \diampoltwo}\\
&= - \frac{\avgrew_{\poltwo}(\polone) \cdot \frac{1}{\relrangeone}}{2 -\frac{1}{\relrangeone} -  \frac{1}{\relrangetwo}}\\
&\rightarrow \textit{By $\relrangeone \le \relrangetwo$ and $\relrangeone > 1$}\\
&\ge -\frac{\avgrew_{\poltwo^*}(\polone^*)}{2\cdot(\relrangeone - 1)} .
\end{align*}
By putting this together, we obtain:
\begin{align*}
(p_1 - p_2) \cdot \avgrew_{\poltwo}(\polone^*) \ge p_1 \cdot  \avgrew_{\poltwo^*}(\polone^*)\cdot \left (1 -   \frac{1}{2\cdot \relrangetwo} \right )  -p_2\cdot \avgrew_{\poltwo}(\polone)\cdot \left (1 + \frac{1}{2 \cdot (\relrangeone - 1)} \right).
\end{align*}
Therefore, we have that $\smoothone$ and $\smoothtwo$ have to satisfy:
\begin{align*}
\smoothone \le \frac{p_1}{p_1 - p_2} \cdot \frac{2 \cdot \relrangetwo - 1}{2\cdot \relrangetwo},\\
\smoothtwo \ge \frac{p_2}{p_1 - p_2} \cdot \frac{2\cdot \relrangeone - 1}{2\cdot \relrangeone - 2}.
\end{align*}

\end{proof}
     
\clearpage
\section{Proof of Lemma \ref{lemma_large_rounds}}\label{appendix.proof_lemma_large_rounds}

\begin{proof}
Let us first express $\return_{\tixone}$ in terms of $\avgrewt$. We have:
\begin{align*}
&\return_{\tixone} =  \frac{1}{M} \cdot \sum_{\round = 1}^{\rounds} \statdistr  \cdot  \vecdot{\polonet, \rewst} \\
&=  \frac{1}{M} \cdot \sum_{\round = 1}^{\rounds}   \statdistt \cdot \vecdot{ \polonet , \rewst}\\
&+\frac{1}{M} \cdot \sum_{\round = 1}^{\rounds} (\statdistr - \statdistt)   \cdot  \vecdot{\polonet, \rewst} \\
&\ge  \frac{1}{M} \cdot \sum_{\round = 1}^{\rounds}   \statdistt \cdot \vecdot{ \polonet , \rewst}-\left | \frac{1}{M} \cdot \sum_{\round = 1}^{\rounds} (\statdistr - \statdistt)   \cdot  \vecdot{\polonet, \rewst} \right | \\
&\rightarrow \textit{Using the triangle and Holder's inequalities }\\
&\ge  \frac{1}{M} \cdot \sum_{\round = 1}^{\rounds}   \statdistt \cdot \vecdot{ \polonet , \rewst} -\frac{1}{M} \cdot  \sum_{\round = 1}^{\rounds} \norm{\statdistr -\statdistt}_1 \cdot \norm{\vecdot{\polonet, \rewst}}_{\infty} \\
&=  \avgrewt(\polonet) - \frac{1}{M} \cdot \sum_{\round = 1}^{\rounds} \norm{\statdistr -\statdistt}_1.
\end{align*}
Due to Lemma \ref{lm_dist_bound_2}, the summation in the second term is bounded by:
\begin{align*}
& \sum_{\round = 1}^{\rounds} \norm{\statdistr -\statdistt}_1 \le  \norm{\statdist_{\tixone, 1} -\statdistt}_1 \cdot \sum_{\round = 1}^{\rounds} e^{-\frac{\round-1}{\mixtime}} \le \frac{2}{1- e^{-\frac{1}{\mixtime}}},
\end{align*}
Therefore:  
\begin{align*}
\return_{\tixone} \ge \avgrew_{\tixone}(\polone_{\tixone}) - \frac{2}{M\cdot (1- e^{-\frac{1}{\mixtime}})},
\end{align*}
that is:
\begin{align}\label{eq_lower_bound_v}
\bar \return \ge \frac{1}{\tix} \sum_{\tixone = 1}^\tix \avgrew_{\tixone}(\polone_{\tixone}) - \frac{2}{M \cdot (1- e^{-\frac{1}{\mixtime}})}.
\end{align}
Now we follow the analysis of \cite{syrgkanis2015fast} to connect $\sum_{\tixone = 1}^\tix \avgrewt(\polonet)$ to the optimum average reward:
\begin{align*}
 &\sum_{\tixone = 1}^\tix \avgrewt(\polonet) \ge \sum_{\tixone = 1}^\tix \avgrewt(\polone^*) - \regret(\tix) \\
 &\rightarrow \textit{Using the smoothness assumption }\\
 &\ge \sum_{\tixone = 1}^\tix \bracket{ \smoothone \cdot \avgrew_{\poltwo^*}(\polone^*) - \smoothtwo \cdot \avgrewt(\polonet)} - \regret(\tix),
\end{align*}
implying:
\begin{align*}
 &\sum_{\tixone = 1}^\tix \avgrew_{\tixone}(\polone_{\tixone}) \ge \frac{\smoothone}{1 + \smoothtwo} \avgrew_{\poltwo^*}(\polone^*) - \frac{1}{1 + \smoothtwo} \regret(T).
\end{align*}
Now, suppose that $\polone_{\opt}$ and $\poltwo_{\opt}$ are two policies that achieve $\opt$. Using the same approach as for lower-bounding $\return_{\tixone}$, 
we can upper-bound $\opt$ by:
\begin{align*}
\opt \le  \avgrew_{\poltwo_{\opt}}(\polone_{\opt}) + \frac{2}{M\cdot (1- e^{-\frac{1}{\mixtime}})},
\end{align*} 
where $\avgrew_{\poltwo_{\opt}}(\polone_{\opt})$ is the average reward for policies $\polone_{\opt}$ and $\poltwo_{\opt}$. Due to the optimality of $\avgrew_{\poltwo^*}(\polone^*)$, we know that
$\avgrew_{\poltwo^*}(\polone^*) \ge  \avgrew_{\poltwo_{\opt}}(\polone_{\opt})$, which gives us:
\begin{align*}
\opt \le  \avgrew_{\poltwo^*}(\polone^*) + \frac{2}{M\cdot (1- e^{-\frac{1}{\mixtime}})},
\end{align*} 
and further:
\begin{align*}
&\sum_{\tixone = 1}^\tix \avgrew_{\tixone}(\polone_{\tixone}) \ge \frac{\smoothone}{1 + \smoothtwo} \opt - \frac{1}{1 + \smoothtwo} \regret(T) - \frac{\smoothone}{1 + \smoothtwo} \cdot  \frac{2}{M\cdot (1- e^{-\frac{1}{\mixtime}})}.
\end{align*}
Combining this with \eqref{eq_lower_bound_v} we obtain the claim. 
\end{proof}

\clearpage
\section{Useful Lemmas and Proposition for Regret Analysis}\label{appendix.lemmas_for_regret}

 \subsection{Proof of Lemma \ref{lm_oftrl_property_simple}}\label{appendix.proof_lm_oftrl_property}

\begin{proof}
The claim follows from Proposition 7 in \cite{syrgkanis2015fast} (more precisely, Theorem 19 and Lemma 20) by noting that the loss function is 
$\qvals_{\tixone}$ and that the update of $\weights_{\tixone, \tixtwo}(\state)$ corresponds to OFTRL. That is, for each state $\state$ and $1 \le \tixone \le \tix - \Segment + 1$:
\begin{align*}
\sum_{\tixtwo=1}^{\Segment}
\qvals_{\tixone + \tixtwo-1}(\state) \cdot (\polone(\state)  - \weights_{\tixone + \tixtwo-1, \tixtwo}(\state))^\transpose &\le \frac{\Delta_{\regul}}{\ratelearn} + \ratelearn \cdot \sum_{\tixtwo=1}^{\Segment}  \norm{ \qvals_{\tixone + \tixtwo-1}(\state) -  \qvals_{\tixone + \tixtwo - 2}(\state)}_{\infty}^2 - \\
&- \frac{1}{4 \cdot \ratelearn} \cdot \sum_{\tixtwo=1}^{\Segment} \norm{\weights_{\tixone + \tixtwo-1, \tixtwo}(\state) - \weights_{\tixone + \tixtwo-2, \tixtwo-1}(\state)}_{1}^2,
\end{align*}
which implies the statement. (Note that for $\weights(s)$-difference we use $\norm{\cdot}_{1}$, whereas for $\weights$-difference we use $\norm{\cdot}_{\infty}$.)
\end{proof}

We provide a more general version of the lemma that we actually use in the proof of our main result.

\begin{lemma}\label{lm_oftrl_property}
Let $\theta(\tixone) = \min \{ \tau | \tixone + \tau - 1 \ge 1, \tau \ge 1\}$ and $\Theta(\tixone) = \max \{ \tau | \tixone + \tau - 1 \le \tix, \tau \le \Segment  \}$ and $\mathbf 1$ denote column vector of ones with
$|\states|$ elements. Then, 
for each episode $ - \Segment + 2 \le \tixone \le \tix$ we have: 
\begin{align*}
\sum_{\tixtwo=\theta(\tixone)}^{\Theta(\tixone)}
& \vecdot{\polone - \weights_{\tixone + \tixtwo - 1, \tixtwo},  \qvals_{\tixone + \tixtwo - 1}} \le  \\
&\le \mathbf 1 \cdot \left (  \frac{\Delta_{\regul}}{\ratelearn} + \ratelearn \cdot \sum_{\tixtwo=\theta(\tixone)}^{\Theta(\tixone)}  \norm{ \qvals_{\tixone + \tixtwo - 1} -  \qvals_{\tixone + \tixtwo - 2}}_{\max}^2 \right . \\
&\left . - \frac{1}{4 \cdot \ratelearn} \cdot \sum_{\tixtwo=\theta(\tixone)}^{\Theta(\tixone)}  \norm{\weights_{\tixone + \tixtwo - 1, \tixtwo}- \weights_{\tixone + \tixtwo-2, \tixtwo-1}}_{\infty}^2  \right ),
\end{align*}
 where $\Delta_{\regul} = \sup_{\weights \in \simplexone} \regul(\weights) -  \inf_{\weights \in \simplexone} \regul(\weights)$ and
 $\polone$ is an arbitrary policy of \agentone.
\end{lemma}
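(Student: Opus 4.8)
The plan is to follow exactly the strategy used for the special case in Lemma~\ref{lm_oftrl_property_simple} --- reduce everything to the RVU guarantee of \citet{syrgkanis2015fast} (their Proposition 7, i.e.\ Theorem 19 and Lemma 20) applied state-by-state --- and then to dispatch the two new boundary effects introduced by the clipping indices $\theta(\tixone)$ and $\Theta(\tixone)$. Fix a state $\state$ and an anchor episode $\tixone$. The central observation is that along the ``diagonal'' $\{(\tixone+\tixtwo-1,\tixtwo) : \tixtwo=\theta(\tixone),\dots,\Theta(\tixone)\}$ the lower summation limit appearing in $\weights_{\tixone+\tixtwo-1,\tixtwo}(\state)$ equals $\max(1,\tixone-1)$ for \emph{every} $\tixtwo$. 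First I would verify this directly from \eqref{eq_update_windowing_policy}: the window start of $\weights_{\tixone+\tixtwo-1,\tixtwo}$ is $\max(1,(\tixone+\tixtwo-1)-\tixtwo)=\max(1,\tixone-1)$, independent of $\tixtwo$.

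Because this lower limit is constant, as $\tixtwo$ increases by one the cumulative $\qval$-value sum inside the $\argmax$ merely appends the next $\qvals_{k}(\state)$, while the optimistic prediction term is always the most recent loss $\qvals_{\tixone+\tixtwo-2}(\state)$. Hence the iterates $\weights_{\tixone+\tixtwo-1,\tixtwo}(\state)$ are precisely the successive iterates of a single OFTRL instance run on the loss sequence $\qvals_{\max(1,\tixone-1)}(\state),\qvals_{\max(1,\tixone-1)+1}(\state),\dots$, with optimistic prediction equal to the previous loss. This is exactly the hypothesis of the RVU property, so applying \citet{syrgkanis2015fast} to this instance yields, for each $\state$, the per-state inequality with the regret terms $\vecdot{\polone(\state)-\weights_{\tixone+\tixtwo-1,\tixtwo}(\state),\qvals_{\tixone+\tixtwo-1}(\state)}$ on the left and the consecutive-loss variation $\norm{\qvals_{\tixone+\tixtwo-1}(\state)-\qvals_{\tixone+\tixtwo-2}(\state)}_{\infty}^2$ together with the stability $\norm{\weights_{\tixone+\tixtwo-1,\tixtwo}(\state)-\weights_{\tixone+\tixtwo-2,\tixtwo-1}(\state)}_{1}^2$ on the right (the per-state weight difference is in $\norm{\cdot}_1$, switching to the $\norm{\cdot}_\infty$ form of the statement exactly as in the proof of Lemma~\ref{lm_oftrl_property_simple}). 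Summing the $|\states|$ per-state inequalities, and using that $\norm{\cdot}_{\max}$ dominates each per-state $\norm{\cdot}_\infty$, produces the $\mathbf 1\cdot(\dots)$ right-hand side.

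The only genuinely new work, relative to Lemma~\ref{lm_oftrl_property_simple}, is checking that the clipped endpoints still describe a valid OFTRL run. At the lower endpoint there are two regimes. When $\tixone\ge 2$ we have $\theta(\tixone)=1$ and the run is warm-started at $\weights_{\tixone,1}$, whose optimistic term is the genuine previous loss $\qvals_{\tixone-1}$; the first variation and stability terms are then $\norm{\qvals_{\tixone}-\qvals_{\tixone-1}}^2$ and $\norm{\weights_{\tixone,1}-\weights_{\tixone-1,0}}^2$, matching the statement. When $\tixone\le 1$ we have $\theta(\tixone)=2-\tixone$, so the first diagonal point sits at episode $1$, whose policy is the regularizer-only initialization of \eqref{eq_update_windowing_init}; this is exactly the OFTRL starting iterate, and with the convention that the absent loss is $\mathbf 0$ the first variation term collapses to $\norm{\qvals_1}^2$, as in the standard RVU statement. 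At the upper endpoint $\Theta(\tixone)=\min\{\Segment,\tix-\tixone+1\}$ the diagonal is simply an initial segment of the length-$\Segment$ run, hence an OFTRL run in its own right, so the RVU inequality applies to it directly. For $1\le \tixone\le \tix-\Segment+1$ one has $\theta(\tixone)=1$ and $\Theta(\tixone)=\Segment$, and the statement reduces to Lemma~\ref{lm_oftrl_property_simple}.

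I expect the main obstacle to be the bookkeeping at the lower endpoint for $\tixone\ge 2$: one must confirm that the warm-started run (where $\weights_{\tixone,1}$ already carries $\qvals_{\tixone-1}$ in its cumulative sum and reuses it as the optimistic prediction) is covered by the RVU lemma with the clean $\Delta_{\regul}/\ratelearn$ offset, and that the first variation and stability indices line up with $\qvals_{\tixone}-\qvals_{\tixone-1}$ and $\weights_{\tixone,1}-\weights_{\tixone-1,0}$ rather than acquiring a spurious $\norm{\qvals_{\tixone-1}}^2$ term. Verifying that the phantom predecessor $\weights_{\tixone-1,0}$ (empty window, optimistic prediction $\qvals_{\tixone-2}$) is the correct term in the RVU stability sum is the delicate point; once this index alignment is pinned down, the remainder is the identical summation over states used for Lemma~\ref{lm_oftrl_property_simple}.
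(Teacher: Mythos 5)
Your proposal is correct and follows essentially the same route as the paper: the paper likewise observes that each clipped diagonal is a single OFTRL run (verified via explicit re-indexing in the two boundary cases $\tixone < 1$ and $\tixone > \tix - \Segment + 1$, with the middle case deferred to Lemma~\ref{lm_oftrl_property_simple}), invokes Proposition 7 (Theorem 19 and Lemma 20) of \citet{syrgkanis2015fast} state-by-state, and sums over states. The warm-start bookkeeping you flag as the delicate point is in fact glossed over by the paper, which simply asserts the RVU inequality along each diagonal with exactly the index alignment you describe ($\qvals_{\tixone}-\qvals_{\tixone-1}$ for the first variation term and $\weights_{\tixone,1}-\weights_{\tixone-1,0}$ for the first stability term), so your treatment is, if anything, more explicit than the paper's on that point.
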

\begin{proof}
As with Lemma \ref{lm_oftrl_property_simple}, the claim follows from Proposition 7 in \cite{syrgkanis2015fast} (more precisely, Theorem 19 and Lemma 20) by noting that the loss function is 
$\qvals_{\tixone}$ and that the update of $\weights_{\tixtwo, k}(\state)$ corresponds to OFTRL. Note that for $1 \le \tixone \le \tix - \Segment + 1$, Lemma \ref{lm_oftrl_property_simple} yields the claim. We recognize two other cases. First, for $\tixone < 1$, define $\tixone' = |\tixone| + 1 $. We have:
\begin{align*}
&\sum_{\tixtwo=\theta(\tixone)}^{\Theta(\tixone)}
\qvals_{\tixone + \tixtwo - 1}(\state) \cdot (\polone(\state)  - \weights_{\tixone, \tixtwo }(\state))^\transpose = 
\sum_{\tixtwo=1}^{\Segment - \tixone' }
\qvals_{\tixtwo}(\state) \cdot (\polone(\state)  - \weights_{\tixtwo, \tixtwo + \tixone'}(\state))^\transpose \\&\le 
\frac{\Delta_{\regul}}{\ratelearn} + \ratelearn \cdot \sum_{\tixtwo=1}^{\Segment - \tixone}  \norm{ \qvals_{ \tixtwo }(\state) -  \qvals_{\tixtwo - 1}(\state)}_{\infty}^2 -
\frac{1}{4 \cdot \ratelearn} \cdot \sum_{\tixtwo=1}^{\Segment - \tixone}  \norm{\weights_{ \tixtwo, \tixtwo + \tixone'}(\state) - \weights_{ \tixtwo-1, \tixtwo-1 + \tixone'}(\state)}_{1}^2 \\
&=\frac{\Delta_{\regul}}{\ratelearn} + \ratelearn \cdot \sum_{\tixtwo=\theta(\tixone)}^{\Theta(\tixone)}  \norm{ \qvals_{\tixone + \tixtwo - 1}(\state) -  \qvals_{\tixone + \tixtwo - 2}(\state)}_{\infty}^2
- \frac{1}{4 \cdot \ratelearn} \cdot \sum_{\tixtwo=\theta(\tixone)}^{\Theta(\tixone)}  \norm{\weights_{\tixone + \tixtwo - 1, \tixtwo}(\state) - \weights_{\tixone + \tixtwo-2, \tixtwo-1}(\state)}_{1}^2.
\end{align*}
Second, for $\tixone > \tix - \Segment + 1$, we have:
\begin{align*}
&\sum_{\tixtwo=\theta(\tixone)}^{\Theta(\tixone)}
\qvals_{\tixone + \tixtwo - 1}(\state) \cdot (\polone(\state)  - \weights_{\tixone + \tixtwo - 1, \tixtwo - \tixone + 1}(\state))^\transpose =
\sum_{\tixtwo=\tixone}^{\tix}
\qvals_{ \tixtwo}(\state) \cdot (\polone(\state)  - \weights_{ \tixtwo,  \tixtwo - \tixone + 1}(\state))^\transpose \\&\le 
\frac{\Delta_{\regul}}{\ratelearn} + \ratelearn \cdot \sum_{\tixtwo=\tixone}^{\tix}  \norm{ \qvals_{ \tixtwo}(\state) -  \qvals_{ \tixtwo - 1}(\state)}_{\infty}^2 -
\frac{1}{4 \cdot \ratelearn} \cdot \sum_{\tixtwo=\tixone}^{\tix} \norm{\weights_{ \tixtwo,  \tixtwo - \tixone + 1}(\state) - \weights_{ \tixtwo-1,  \tixtwo - \tixone }(\state)}_{1}^2 \\
&=\frac{\Delta_{\regul}}{\ratelearn} + \ratelearn \cdot \sum_{\tixtwo=\theta(\tixone)}^{\Theta(\tixone)}  \norm{ \qvals_{\tixone + \tixtwo - 1}(\state) -  \qvals_{\tixone + \tixtwo - 2}(\state)}_{\infty}^2
- \frac{1}{4 \cdot \ratelearn} \cdot \sum_{\tixtwo=\theta(\tixone)}^{\Theta(\tixone)}  \norm{\weights_{\tixone + \tixtwo - 1,  \tixtwo}(\state) - \weights_{\tixone + \tixtwo-2,  \tixtwo-1}(\state)}_{1}^2.
\end{align*}

Putting everything together, for each state $\state$:
\begin{align*}
\sum_{\tixtwo=\theta(\tixone)}^{\Theta(\tixone)}
\qvals_{\tixone + \tixtwo - 1}(\state) \cdot (\polone(\state)  - \weights_{\tixone + \tixtwo - 1, \tixtwo}(\state))^\transpose &\le \frac{\Delta_{\regul}}{\ratelearn} + \ratelearn \cdot \sum_{\tixtwo=\theta(\tixone)}^{\Theta(\tixone)}  \norm{ \qvals_{\tixone + \tixtwo - 1}(\state) -  \qvals_{\tixone + \tixtwo - 2}(\state)}_{\infty}^2 - \\
&- \frac{1}{4 \cdot \ratelearn} \cdot \sum_{\tixtwo=\theta(\tixone)}^{\Theta(\tixone)}  \norm{\weights_{\tixone + \tixtwo - 1, \tixtwo}(\state) - \weights_{\tixone + \tixtwo-2, \tixtwo-1}(\state)}_{1}^2,
\end{align*}
which implies the statement. (Note that for $\weights(s)$-difference we use $\norm{\cdot}_{1}$, whereas for $\weights$-difference we use $\norm{\cdot}_{\infty}$.)
\end{proof}

\subsection{Proof of Lemma \ref{lm_weight_diff}}\label{appendix.proof_lm_weight_diff}
\begin{proof}
The first claim follows from Lemma 20 in \cite{syrgkanis2015fast} by noting that $\weights_{\tixone, \tixtwo}$ are updated using OFTRL (see also Section 3.2 in  \cite{syrgkanis2015fast}), while $\qval$-values are bounded by $\frac{3}{1-e^{-\frac{1}{\mixtime}}}$ (see Lemma \ref{lm_q_bound}).  In particular from Lemma 20 in \cite{syrgkanis2015fast}, the triangle inequality, and Lemma \ref{lm_q_bound}, it follows:
\begin{align*}
\norm{\weights_{\tixone, \tixtwo}(\state) - \weights_{\tixone-1, \tixtwo-1}(\state)}_{1} &\le \ratelearn \cdot \norm{\qvals_{\tixone-1}(\state) - \qvals_{\tixone}(\state)}_{\infty} + \ratelearn \cdot \norm{\qvals_{\tixone}(\state)}_{\infty} \le 2 \cdot \ratelearn \cdot \norm{\qvals_{\tixone}(\state)}_{\infty} + \ratelearn \cdot \norm{\qvals_{\tixone-1}(\state)}_{\infty} \\
&\le  \ratelearn \cdot  \frac{9}{1-e^{-\frac{1}{\mixtime}}}.
\end{align*}
By taking into account that $\weights_{\tixone, \tixtwo} \in \simplexone$, we know that $\norm{\weights_{\tixone, \tixtwo}(\state) - \weights_{\tixone-1, \tixtwo-1}(\state)}_{1} \le 2$, which together with the above proofs the first claim.

The second claim
follows from the first claim, the triangle inequality, and the fact that $\weights_{\tixone, \tixtwo} \in \simplexone$ (so that $\norm{\weights_{\tixone, \tixtwo_1} - \weights_{\tixone-1, \tixtwo_2} }_{\infty} \le 2$): 
\begin{align*}
 \norm{\polone_{\tixone} - \polone_{\tixone-1}}_{\infty} &= \frac{1}{ \Segment} \norm{ \sum_{\tixtwo = 1}^{\Segment} \weights_{\tixone, \tixtwo} - \weights_{\tixone-1, \tixtwo} }_{\infty} \le \frac{1}{ \Segment} \norm{ \sum_{\tixtwo = 2}^{\Segment} \weights_{\tixone, \tixtwo} - \weights_{\tixone-1, \tixtwo-1} }_{\infty} + \frac{1}{ \Segment} \cdot \norm{\weights_{\tixone, 1} - \weights_{\tixone-1, \Segment} }_{\infty}\\
 &\le  \frac{1}{ \Segment} \sum_{\tixtwo = 2}^{\Segment} \norm{ \weights_{\tixone, \tixtwo}- \weights_{\tixone-1, \tixtwo-1} }_{\infty}  + \frac{2}{ \Segment} \le  \ratelearn \cdot \frac{6}{1-e^{-\frac{1}{\mixtime}}} + \frac{2}{ \Segment} ,
\end{align*}
for $\tixone > 1$. Since $\polone_{\tixone} \in \simplexone$, we have that $\norm{\polone_{\tixone}(\state) - \polone_{\tixone-1}(\state)}_{1} \le 2$, which completes the proof of the second claim.
\end{proof}

\subsection{Proof of Lemma \ref{lm_stat_dist_inequality}}\label{appendix.proof_lm_stat_dist_inequality}

\begin{proof}
By the triangle inequality, we have:
\begin{align*}
&\norm{\statdist_{\polone_{\tixone}, \poltwo_{\tixone}} - \statdist_{\polone_{\tixone-1}, \poltwo_{\tixone-1}} }_1
\le \norm{\statdist_{\polone_{\tixone}, \poltwo_{\tixone}} - \statdist_{\polone_{\tixone}, \poltwo_{\tixone-1}} }_1 
+ \norm{\statdist_{\polone_{\tixone}, \poltwo_{\tixone-1}} - \statdist_{\polone_{\tixone-1}, \poltwo_{\tixone-1}} }_1,
\end{align*}
so we bound each term on the right hand side of the inequality.
Due to Lemma \ref{lm_dist_bound_1} and the definition of the influence, we have:
\begin{align*}
&\norm{\statdist_{\polone_{\tixone}, \poltwo_{\tixone}} - \statdist_{\polone_{\tixone}, \poltwo_{\tixone-1}} }_1 \le \frac{\norm{\trans_{\polone, \poltwo} - \trans_{\polone, \poltwo'}}_{\infty}}{1-e^{-\frac{1}{\mixtime}}} \le  \frac{\influencetwo \cdot \norm{\poltwo_{\tixone}- \poltwo_{\tixone-1}}_{\infty}}{1-e^{-\frac{1}{\mixtime}}}.
\end{align*}
Using the fact that $\norm{\poltwo_{\tixone}- \poltwo_{\tixone-1}}_{\infty} \le \ratechangetwo$, we obtain:
\begin{align*}
 \norm{\statdist_{\polone_{\tixone}, \poltwo_{\tixone}} - \statdist_{\polone_{\tixone}, \poltwo_{\tixone-1}} }_1 \le \frac{\influencetwo \cdot \ratechange_2}{1-e^{-\frac{1}{\mixtime}}}.
\end{align*}
Symmetrically, except that we do not quantify \agentone's influence (i.e., instead use Lemma \ref{lm_trans_bound}):
\begin{align*}
\norm{\statdist_{\polone_{\tixone}, \poltwo_{\tixone-1}} - \statdist_{\polone_{\tixone-1}, \poltwo_{\tixone-1}} }_1 \le \frac{\ratechange_1}{1-e^{-\frac{1}{\mixtime}}}.
\end{align*}
The seconds claim of the lemma follows directly from the analysis above, i.e., from Lemma \ref{lm_dist_bound_1} and the definitions of $\influencetwo$ and $\ratechangetwo$. 
\end{proof}

\subsection{Proof of Lemma \ref{lm_qvals_diff_pol}}\label{appendix.proof_lm_qvals_diff_pol}

\begin{proof}
Notice that:
\begin{align*}
\norm{\qvals_{\tixone}^{\polonet} - \qvals_{\tixone - 1}^{ \polone_{\tixone-1}}}_{\infty} = \max_{\state} |\qvals_{\tixone}^{\polonet}(\state) - \qvals_{\tixone - 1}^{ \polone_{\tixone-1}}(\state)|,
\end{align*}
so it suffices to bound $|\qvals_{\tixone}^{\polonet}(\state) - \qvals_{\tixone - 1}^{ \polone_{\tixone-1}}(\state)|$ for an arbitrary state $\state$.
To calculate $\qvals_{\tixone}^{\polonet}(\state) - \qvals_{\tixone - 1}^{ \polone_{\tixone-1}}(\state)$, 
denote a row vector of ones with $|\states|$ elements by $\mathbf 1$ and set the initial state distribution to $\statdist_{1}(\state') = \mathbbm 1_{\state = \state'}$ 
(i.e., the initial state is $\state$). 
As shown in the proof of Lemma 3 of \cite{even2005experts}, 
$\qvals_{\tixone}(\state, \action)$ can be represented as an infinite time series that converges in absolute values. 
This implies:
\begin{align*}
&\qvals_{\tixone}^{\polonet}(\state) - \qvals_{\tixone - 1}^{ \polone_{\tixone-1}}(\state) 
= \sum_{\round = 1}^{\infty} \left ( \statdistr \cdot \vecdot{\polonet, \rewst} - \mathbf 1 \cdot \avgrewt(\polonet) \right ) 
-\sum_{m = 1}^{\infty} \left (\statdist_{\tixone-1, \round}  \vecdot{ \polone_{\tixone-1},  \rews_{\tixone-1}} - \mathbf 1 \cdot \avgrew_{\tixone-1}(\polone_{\tixone-1}) \right )\\
&= \sum_{\round = 1}^{\infty} \Big (  \statdistt \cdot \vecdot{\polonet, \rewst} - \mathbf 1 \cdot \avgrewt(\polonet) \Big . 
-  \statdist_{\polone_{\tixone-1}, \poltwo_{\tixone-1}} \cdot \vecdot{ \polone_{\tixone-1},  \rews_{\tixone-1}} + \mathbf 1 \cdot \avgrew_{\tixone-1}(\polone_{\tixone-1}) \\
&+(\statdistr - \statdistt ) \cdot \vecdot{\polonet, \rewst} 
- \Big . (\statdist_{\tixone - 1, \round } - \statdist_{\polone_{\tixone-1}, \poltwo_{\tixone-1}} ) \cdot \vecdot{ \polone_{\tixone-1},  \rews_{\tixone-1}}  \Big )  \\
&\rightarrow \textit{ By the definition of avg. rev. }\\
&= \sum_{\round = 1}^{\infty} \Big ( (\statdistr - \statdistt ) \cdot \vecdot{\polonet, \rewst}  - (\statdist_{\tixone - 1, \round } - \statdist_{\polone_{\tixone-1}, \poltwo_{\tixone-1}} ) \cdot  \vecdot{ \polone_{\tixone-1},  \rews_{\tixone-1}} \Big ) \\
&\rightarrow \textit{ By rearranging, $+$ and $-$ additional terms}\\
&= \sum_{\round = 1}^{\infty} ((\statdist_{\tixone, \round } - \statdist_{\polone_{\tixone}, \poltwo_{\tixone}} ) - (\statdist_{\tixone-1, \round } - \statdist_{\polone_{\tixone-1}, \poltwo_{\tixone-1}} ))  \cdot \vecdot{\polonet, \rewst}\\
&+ \sum_{\round = 1}^{\infty} (\statdist_{\tixone-1, \round } - \statdist_{\polone_{\tixone-1}, \poltwo_{\tixone-1}} ) \cdot (\vecdot{\polonet, \rewst} -  \vecdot{ \polone_{\tixone},  \rews_{\tixone-1}})\\
&+ \sum_{\round = 1}^{\infty} (\statdist_{\tixone-1, \round } - \statdist_{\polone_{\tixone-1}, \poltwo_{\tixone-1}} ) \cdot (\vecdot{ \polone_{\tixone},  \rews_{\tixone-1}} - \vecdot{ \polone_{\tixone-1},  \rews_{\tixone-1}})\\
\end{align*}
Using the triangle and Holder's inequalities, we obtain:
\begin{align*}
& |\qvals_{\tixone}^{\polonet}(\state) - \qvals_{\tixone - 1}^{ \polone_{\tixone-1}}(\state) |\le \\
&=\sum_{\round = 1}^{\infty} \norm{(\statdist_{\tixone, \round} - \statdist_{\polone_{\tixone}, \poltwo_{\tixone}} ) - ( \statdist_{\tixone-1, \round}  - \statdist_{\polone_{\tixone-1}, \poltwo_{\tixone-1}} )}_{1}  \cdot  \norm{\vecdot{\polonet, \rewst}}_{\infty}\\
&+\sum_{\round = 1}^{\infty} \norm{\statdist_{\tixone-1, \round } - \statdist_{\polone_{\tixone-1}, \poltwo_{\tixone-1}}}_{1} \cdot \norm{\vecdot{\polonet, \rewst} -  \vecdot{ \polone_{\tixone},  \rews_{\tixone-1}}}_{\infty}\\
&+\sum_{\round = 1}^{\infty} \norm{\statdist_{\tixone-1, \round } - \statdist_{\polone_{\tixone-1}, \poltwo_{\tixone-1}} }_{1} \cdot \norm{\vecdot{ \polone_{\tixone},  \rews_{\tixone-1}} - \vecdot{ \polone_{\tixone-1},  \rews_{\tixone-1}}}_{\infty}\\
&\rightarrow \textit{ By Lemma  \ref{lm_max_pol_rew} and Lemma \ref{lm_diff_pol_rew}  }\\
&\le \sum_{\round = 1}^{\infty} \norm{(\statdist_{\tixone, \round} - \statdist_{\polone_{\tixone}, \poltwo_{\tixone}} ) - ( \statdist_{\tixone-1, \round}  - \statdist_{\polone_{\tixone-1}, \poltwo_{\tixone-1}} )}_{1}  +\ratechangetwo \cdot \sum_{\round = 1}^{\infty} \norm{\statdist_{\tixone-1, \round } - \statdist_{\polone_{\tixone-1}, \poltwo_{\tixone-1}}}_{1} \\
&+\ratechangeone \cdot \sum_{\round = 1}^{\infty} \norm{\statdist_{\tixone-1, \round } - \statdist_{\polone_{\tixone-1}, \poltwo_{\tixone-1}} }_{1} \\
&\rightarrow \textit{ By Lemma \ref{lm_dist_bound_4} and Lemma \ref{lm_dist_bound_2} }\\
&\le 3 \cdot \frac{\ratechangeone +  \influencetwo \cdot \ratechangetwo}{(1-e^{-\frac{1}{\mixtime}})^2} +
+2\cdot (\ratechangeone + \ratechangetwo) \cdot \sum_{\round = 1}^{\infty} e^{-\frac{\round-1}{\mixtime}}\\
&\le 3 \cdot \frac{\ratechangeone +  \influencetwo \cdot \ratechangetwo}{(1-e^{-\frac{1}{\mixtime}})^2} + 2 \cdot \frac{\ratechangeone+ \ratechangetwo}{1-e^{-\frac{1}{\mixtime}}}
\end{align*}
which completes the proof. 
\end{proof}

\subsection{Proof of Lemma \ref{lm_qvals_diff_t}}\label{appendix.proof_lm_qvals_diff_t}

\begin{proof}
Using the recursive definition of $\qval$-values and the triangle inequality, we obtain:
\begin{align*}
&\norm{ \qvalst -  \qvals_{\tixone-1}}_{\max} \le \norm{\rewst - \rews_{\tixone-1}}_{\max}
+  |\avgrewt(\polonet) - \avgrew_{\tixone}(\polone_{\tixone-1})| 
&+ \max_{\state, \actionone} | \trans_{\poltwo_{\tixone}}(\state, \actionone) \cdot \qvals_{\tixone}^\polonet -  \trans_{\poltwo_{\tixone-1}}(\state, \actionone,) \cdot \qvals_{\tixone-1}^{\polone_{\tixone-1}} |.
\end{align*}

The bound of the first term is given in Lemma \ref{lm_diff_pol_rew}. For the second term, we have: 
\begin{align*}
&|\avgrewt(\polonet) - \avgrew_{\tixone}(\polone_{\tixone-1})| = |\statdistt \cdot \vecdot{ \polonet, \rewst} -  \statdist_{\polone_{\tixone - 1}, \poltwo_{\tixone - 1}} \cdot \vecdot{ \polone_{\tixone-1}, \rews_{\tixone-1}}|\\
&\rightarrow \textit{ $+$ and $-$ additional terms and the triangle inequality } \\
&\le |(\statdistt - \statdist_{\polone_{\tixone - 1}, \poltwo_{\tixone - 1}})\cdot \vecdot{ \polonet, \rewst} |
+  |\statdist_{\polone_{\tixone - 1}, \poltwo_{\tixone - 1}} \cdot \vecdot{ \polone_{\tixone} - \polone_{\tixone-1}, \rews_{\tixone}}|
+  |\statdist_{\polone_{\tixone - 1}, \poltwo_{\tixone - 1}} \cdot \vecdot{ \polone_{\tixone-1},  \rews_{\tixone} - \rews_{\tixone-1}}|\\
&\rightarrow \textit{By Holder's inequality and bound. $\statdist$, $\rews$, $\pol$ } \\
&\le \norm{\statdistt - \statdist_{\polone_{\tixone - 1}, \poltwo_{\tixone - 1}}}_1 \norm{\vecdot{ \polonet, \rewst}}_{\infty} 
+ \norm{\statdist_{\polone_{\tixone - 1}, \poltwo_{\tixone - 1}}}_{1} \cdot \norm{ \vecdot{ \polone_{\tixone} - \polone_{\tixone-1}, \rews_{\tixone}}}_\infty \\
& +\norm{\statdist_{\polone_{\tixone - 1}, \poltwo_{\tixone - 1}}}_{1} \cdot \norm{ \vecdot{ \polone_{\tixone-1},  \rews_{\tixone} - \rews_{\tixone-1}}}_\infty \\
&\rightarrow \textit{By Lemma \ref{lm_max_pol_rew} and and $\norm{\statdist}_{1} = 1$ } \\
& \le \norm{\statdistt - \statdist_{\polone_{\tixone - 1}, \poltwo_{\tixone - 1}}}_1 + \norm{ \vecdot{ \polone_{\tixone} - \polone_{\tixone-1}, \rews_{\tixone}}}_\infty + \norm{ \vecdot{ \polone_{\tixone-1},  \rews_{\tixone} - \rews_{\tixone-1}}}_\infty \\
&\rightarrow \textit{By Lemma \ref{lm_stat_dist_inequality} and Lemma \ref{lm_diff_pol_rew} } \\
&\le  \frac{\ratechange_1 +  \influencetwo \cdot \ratechange_2}{1-e^{-\frac{1}{\mixtime}}} + \ratechangeone + \ratechangetwo
\end{align*}

Finally, the third term is bounded by:
\begin{align*}
&\max_{\state, \actionone} | \trans_{\poltwo_{\tixone}}(\state, \actionone,) \cdot \qvals_{\tixone}^\polonet -  \trans_{\poltwo_{\tixone-1}}(\state, \actionone) \cdot \qvals_{\tixone-1}^{\polone_{\tixone-1}} | \\
&\rightarrow \textit{ $+$ and $-$ additional terms and the triangle inequality } \\
&\le \max_{\state, \actionone} | \trans_{\poltwo_{\tixone}}(\state, \actionone,) \cdot (\qvals_{\tixone}^\polonet - \qvals_{\tixone-1}^{\polone_{\tixone-1}}) | +\max_{\state, \actionone} |( \trans_{\poltwo_{\tixone}}(\state, \actionone)  -  \trans_{\poltwo_{\tixone-1}}(\state, \actionone) ) \cdot \qvals_{\tixone-1}^{\polone_{\tixone-1}} | \\
&\rightarrow \textit{ Using Holder's inequality } \\
& \le \max_{\state, \actionone} \norm{ \trans_{\poltwo_{\tixone}}(\state, \actionone)}_{1} \cdot \norm{\qvals_{\tixone}^\polonet - \qvals_{\tixone-1}^{\polone_{\tixone-1}}}_\infty  + \max_{\state, \actionone} \norm{ \trans_{\poltwo_{\tixone}}(\state, \actionone)  -  \trans_{\poltwo_{\tixone-1}}(\state, \actionone) }_1 \cdot \norm{\qvals_{\tixone-1}^{\polone_{\tixone-1}} }_\infty \\
&\rightarrow \textit{ By Lemma \ref{lm_trans_bound_2} and $\norm{\trans_{\poltwo}(\state, \actionone)}_{1} = 1$} \\
& \le \norm{\qvals_{\tixone}^\polonet - \qvals_{\tixone-1}^{\polone_{\tixone-1}}}_\infty  +\max_{\state} \norm{\poltwo_{\tixone}(\state)  - \poltwo_{\tixone-1}(\state)}_1 \cdot \norm{\qvals_{\tixone-1}^{\polone_{\tixone-1}} }_\infty \\
& = \norm{\qvals_{\tixone}^\polonet - \qvals_{\tixone-1}^{\polone_{\tixone-1}}}_\infty + \norm{\poltwo   - \poltwo_{\tixone-1}}_{\infty} \cdot \norm{\qvals_{\tixone-1}^{\polone_{\tixone-1}} }_\infty \\
&\rightarrow \textit{By Lemma \ref{lm_qvals_diff_pol} and Lemma \ref{lm_q_bound}} \\
&\le C_{Q^{\pol}} + \frac{3}{1 - e^{\frac{1}{ \mixtime}}} \cdot \ratechangeone.
\end{align*}
By summing the three bounds, we obtain the claim.
\end{proof}

\clearpage
\section{Proof of Theorem \ref{thm_regret}}\label{appendix.proof_thm_regret}

\begin{proof}
Lemma \ref{lm_eta_q_relation} provides the following useful identity:
\begin{align*}
\sum_{\tixone = 1}^{\tix} \avgrewt(\polone)-\avgrewt(\polonet) = \sum_{\tixone = 1}^{\tix} \statdist_{\polone, \poltwo_{\tixone}} \cdot ( \qvalst^\polone  - \qvalst^\polonet ).
\end{align*}
Furthermore, by the definition of $\qvalst^\polone$ and $\qvalst^\polonet$, we have:
\begin{align*}
\qvalst^\polone  -\qvalst^\polonet =\vecdot{\polone - \polonet, \qvals_{\tixone}}. 
\end{align*}
Using the two identities, gives:
\begin{align*}
&\sum_{\tixone = 1}^{\tix} \avgrewt(\polone)-\avgrewt(\polonet)= \sum_{\tixone = 1}^{\tix} \statdist_{\polone, \poltwo_{\tixone}} \cdot \vecdot{\polone - \polonet, \qvals_{\tixone}}
=  \sum_{\tixone = 1}^{\tix} \statdist_{\polone, \poltwo_{\tixone}} \cdot \vecdot{\polone - \frac{1}{\Segment}\sum_{\tixtwo = 1}^{\Segment} \weights_{\tixone, \tixtwo}, \qvals_{\tixone} } \\
&=  \frac{1}{\Segment} \cdot \statdist_{\polone, \poltwo_{1}} \cdot \vecdot{\polone - \weights_{1, \Segment}, \qvals_{1} } \\
&+ \frac{1}{\Segment} \cdot \left ( \statdist_{\polone, \poltwo_{1}} \cdot \vecdot{\polone - \weights_{1, \Segment - 1}, \qvals_{1} } + \statdist_{\polone, \poltwo_{2}} \cdot \vecdot{\polone - \weights_{2, \Segment}, \qvals_{2} }  \right ) \\
&+\cdots \\
& + \sum_{\tixone = 1}^{\tix - \Segment } \sum_{\tixtwo=1}^{\Segment} \statdist_{\polone, \poltwo_{\tixone+\tixtwo - 1}} \cdot \vecdot{ \polone - \weights_{\tixone + \tixtwo - 1, \tixtwo}, \qvals_{\tixone+\tixtwo - 1}} \\
&+ \cdots \\
& + \frac{1}{\Segment} \cdot \left ( \statdist_{\polone, \poltwo_{\tix - 1}} \cdot \vecdot{\polone - \weights_{\tix - 1, 1}, \qvals_{\tix - 1} } + \statdist_{\polone, \poltwo_{\tix}} \cdot \vecdot{\polone - \weights_{\tix, 2}, \qvals_{\tix} }  \right ) \\
& + \frac{1}{\Segment} \cdot \statdist_{\polone, \poltwo_{\tix}} \cdot \vecdot{\polone - \weights_{\tix, 1}, \qvals_{\tix} } \\
&= \frac{1}{\Segment} \cdot \sum_{\tixone = -\Segment+2}^{\tix} \sum_{\tixtwo=\theta(\tixone)}^{\Theta(\tixone)} \statdist_{\polone, \poltwo_{\tixone+\tixtwo - 1}} \cdot \vecdot{ \polone - \weights_{\tixone + \tixtwo - 1, \tixtwo}, \qvals_{\tixone+\tixtwo - 1}}
\end{align*}
where we introduced $\theta(\tixone) = \min \{ \tau | \tixone + \tau - 1 \ge 1, \tau \ge 1\}$ and $\Theta(\tixone) = \max \{ \tau | \tixone + \tau - 1 \le \tix, \tau \le \Segment \}$ as in Lemma \ref{lm_oftrl_property}. 
Now, by replacing $\statdist_{\polone, \poltwo_{\tixone + \tixtwo-1}}$ with $\statdist_{\polone, \poltwo_{\tixone + \theta(\tixone)-1}}$, it follows that  the above is equal to:
\begin{align*}
&\frac{1}{\Segment} \cdot \sum_{\tixone = -\Segment+2}^{\tix} \sum_{\tixtwo=\theta(\tixone)}^{\Theta(\tixone)} \statdist_{\polone, \poltwo_{\tixone +\theta(\tixone)-1}} \cdot \vecdot{ \polone - \weights_{\tixone + \tixtwo - 1 , \tixtwo}, \qvals_{\tixone+\tixtwo - 1}}\\
&+ \frac{1}{\Segment} \cdot \sum_{\tixone = -\Segment+2}^{\tix} \sum_{\tixtwo=\theta(\tixone)}^{\Theta(\tixone)}( \statdist_{\polone, \poltwo_{\tixone+\tixtwo-1}} - \statdist_{\polone, \poltwo_{\tixone +\theta(\tixone)-1}}) 
\cdot \vecdot{ \polone - \weights_{\tixone + \tixtwo - 1, \tixtwo}, \qvals_{\tixone+\tixtwo - 1}}
\end{align*}
We bound the two terms separately, starting with the first term. Since $\statdist_{\polone, \poltwo_{\theta(\tixone)}}$ is independent of $\tixtwo$, the first term is equal to:
\begin{align*}
& \frac{1}{\Segment} \cdot \sum_{\tixone = -\Segment+2}^{\tix} \sum_{\tixtwo=\theta(\tixone)}^{\Theta(\tixone)} \statdist_{\polone, \poltwo_{\tixone +\theta(\tixone)-1}} \cdot \vecdot{ \polone - \weights_{\tixone + \tixtwo - 1, \tixtwo}, \qvals_{\tixone+\tixtwo - 1}} 
=  \frac{1}{\Segment} \cdot \sum_{\tixone = -\Segment+2}^{\tix} \statdist_{\polone, \poltwo_{\tixone +\theta(\tixone)-1}} \cdot \sum_{\tixtwo=\theta(\tixone)}^{\Theta(\tixone)} \vecdot{ \polone - \weights_{\tixone + \tixtwo - 1, \tixtwo}, \qvals_{\tixone+\tixtwo - 1}},
\end{align*}
which is by Lemma \ref{lm_oftrl_property} bounded by:
\begin{align*}
&\frac{1}{\Segment} \cdot \sum_{\tixone = -\Segment+2}^{\tix}  \statdist_{\polone, \poltwo_{\tixone +\theta(\tixone)-1}} \cdot \mathbf 1 \cdot  \left (  \frac{\Delta_{\regul}}{\ratelearn}  + \right . 
 \ratelearn \cdot \sum_{\tixtwo=\theta(\tixone)}^{\Theta(\tixone)}  \norm{ \qvals_{\tixone + \tixtwo - 1} -  \qvals_{\tixone + \tixtwo - 2}}_{\max}^2  
\left . - \frac{1}{4 \cdot \ratelearn} \cdot \sum_{\tixtwo=\theta(\tixone)}^{\Theta(\tixone)}  \norm{\weights_{\tixone + \tixtwo - 1, \tixtwo} - \weights_{\tixone + \tixtwo-2, \tixtwo-1}}_{\infty}^2  \right )
\\&
\le \frac{1}{\Segment} \cdot \sum_{\tixone = -\Segment+2}^{\tix}  \statdist_{\polone, \poltwo_{\tixone +\theta(\tixone)-1}} \cdot \mathbf 1 \cdot  \left (  \frac{\Delta_{\regul}}{\ratelearn} +
 \ratelearn \cdot \sum_{\tixtwo=\theta(\tixone)}^{\Theta(\tixone)}  \norm{ \qvals_{\tixone + \tixtwo - 1} -  \qvals_{\tixone + \tixtwo - 2}}_{\max}^2   \right )
\\&
\le \frac{1}{\Segment} \cdot \sum_{\tixone = -\Segment+2}^{\tix}  \left (  \frac{\Delta_{\regul}}{\ratelearn} 
+ \ratelearn \cdot \sum_{\tixtwo=\theta(\tixone)}^{\Theta(\tixone)}  \norm{ \qvals_{\tixone + \tixtwo - 1} -  \qvals_{\tixone + \tixtwo - 2}}_{\max}^2  \right ).
\end{align*}
Proposition \ref{prop_C_bound}, Lemma \ref{lm_qvals_diff_t} and Lemma \ref{lm_weight_diff} imply that the above
is further bounded by:
\begin{align*}
&\frac{1}{\Segment} \cdot \sum_{\tixone = -\Segment+2}^{\tix}  \left (  \frac{\Delta_{\regul}}{\ratelearn} 
+ \ratelearn \cdot \sum_{\tixtwo=\theta(\tixone)}^{\Theta(\tixone)}  \norm{ \qvals_{\tixone + \tixtwo - 1} -  \qvals_{\tixone + \tixtwo - 2}}_{\max}^2  \right )\\
&\rightarrow \textit{Using Lemma \ref{lm_qvals_diff_t} and $\Theta(\tixone) - \theta(\tixone) \le \Segment$} \\ 
 &\le \frac{\tix + \Segment}{\Segment} \cdot  \left (\frac{\Delta_{\regul}}{\ratelearn} + \Segment \cdot \ratelearn \cdot C_{\qval}^2 \right )  \\
 &\rightarrow \textit{By the definition of $C_{\mixtime}$} \\
 &\le \frac{\tix + \Segment}{\Segment} \cdot  \left (\frac{\Delta_{\regul}}{\ratelearn} + \Segment \cdot \ratelearn \cdot C_{\mixtime}^2  \cdot \max \{ \ratechangeone^2, \ratechangetwo^2 \} \right )  \\
 &\rightarrow \textit{Using $k \cdot \ratelearn \ge \ratechangeone$ and $k \cdot \ratelearn \ge \ratechangetwo$, and that $\Segment \le \tix$} \\
 &\le \frac{2 \cdot \tix}{\Segment} \cdot  \left (\frac{\Delta_{\regul}}{\ratelearn} + k^2 \cdot \Segment \cdot \ratelearn^3 \cdot C_{\mixtime}^2  \right ).
 \end{align*}
Notice that in the above analysis $\mathbf 1$ denoted a row vector of ones with $|\states|$ elements.
Finally, setting $\ratelearn = \frac{1}{\Segment^{\frac{1}{4}}}$ leads to the upper bound:
\begin{align*}
\frac{2 \cdot \tix}{\Segment} \left (\frac{\Delta_{\regul}}{\ratelearn} + k^2 \cdot \Segment \cdot \ratelearn^3  \cdot  C_{\mixtime}^2 \right ) 
&= \frac{2 \cdot \tix}{\Segment} \left (\Delta_{\regul} \cdot \Segment^{\frac{1}{4}} + k^2 \cdot \Segment^{\frac{1}{4}} \cdot \ratelearn^3  \cdot  C_{\mixtime}^2 \right )
\\&= 2 \cdot \left (\Delta_{\regul}  +  k^2 \cdot C_{\mixtime}^2 \right ) \cdot T \cdot \Segment^{-\frac{3}{4}}.
\end{align*}

The second term is bounded by its absolute value, which together with the triangle inequality and Holder's inequality, results in: 
\begin{align*}
&\frac{1}{\Segment} \cdot \sum_{\tixone = -\Segment+2}^{\tix} \sum_{\tixtwo=\theta(\tixone)}^{\Theta(\tixone)} ( \statdist_{\polone, \poltwo_{\tixone+\tixtwo-1}} - \statdist_{\polone, \poltwo_{\tixone +\theta(\tixone)-1}}) 
\cdot \vecdot{ \polone - \weights_{\tixone + \tixtwo - 1, \tixtwo}, \qvals_{\tixone+\tixtwo -1}}\\&
\le \frac{1}{\Segment} \cdot \sum_{\tixone = -\Segment+2}^{\tix} \sum_{\tixtwo=\theta(\tixone)}^{\Theta(\tixone)} \norm{\statdist_{\polone, \poltwo_{\tixone+\tixtwo-1}} - \statdist_{\polone, \poltwo_{\tixone +\theta(\tixone)-1}}}_{1}  
 \cdot \norm{\vecdot{ \polone - \weights_{\tixone + \tixtwo - 1, \tixtwo}, \qvals_{\tixone+\tixtwo - 1}} }_{\infty}.
\end{align*}
From Lemma \ref{lm_diff_pol_rew_general} and Lemma \ref{lm_q_bound} it follows that 
$\norm{\vecdot{ \polone - \weights_{\tixone + \tixtwo-1, \tixtwo}, \qvals_{\tixone+\tixtwo-1}} }_{\infty} \le \frac{6}{1 - e^{-\frac{1}{\mixtime}}}$, which leads to the upper bound:  
\begin{align*}
&\frac{1}{\Segment} \cdot \sum_{\tixone = -\Segment+2}^{\tix}\sum_{\tixtwo=\theta(\tixone)}^{\Theta(\tixone)}\norm{\statdist_{\polone, \poltwo_{\tixone+\tixtwo-1}} - \statdist_{\polone, \poltwo_{\tixone +\theta(\tixone)-1}}}_{1} \cdot \frac{6}{1 - e^{-\frac{1}{\mixtime}}},
\end{align*}
which by identity:
\begin{align*}
\statdist_{\polone, \poltwo_{\tixone+\tixtwo-1}} - \statdist_{\polone, \poltwo_{\tixone +\theta(\tixone)-1}} &= \statdist_{\polone, \poltwo_{\tixone+\tixtwo-1}} - \statdist_{\polone, \poltwo_{\tixone+\tixtwo - 2}} \\
&+ \statdist_{\polone, \poltwo_{\tixone+\tixtwo - 2}} - \statdist_{\polone, \poltwo_{\tixone+\tixtwo - 3}} \\
&...\\
&+ \statdist_{\polone, \poltwo_{\tixone +\theta(\tixone) + 1}} -  \statdist_{\polone, \poltwo_{\tixone +\theta(\tixone)}}\\
&+ \statdist_{\polone, \poltwo_{\tixone +\theta(\tixone)}} -  \statdist_{\polone, \poltwo_{\tixone +\theta(\tixone)-1}}\\
&= \sum_{k = \theta(\tixone)}^{\tixtwo-1} (\statdist_{\polone, \poltwo_{\tixone+k}} - \statdist_{\polone, \poltwo_{\tixone+k - 1}}),
\end{align*}
and the triangle inequality is bounded by:
\begin{align*}
&\frac{1}{\Segment} \cdot \frac{6}{1 - e^{-\frac{1}{\mixtime}}} \cdot  \sum_{\tixone = -\Segment+2}^{\tix} \sum_{\tixtwo=\theta(\tixone)}^{\Theta(\tixone)} \sum_{k = \theta(\tixone)}^{\tixtwo-1} \norm{\statdist_{\polone, \poltwo_{\tixone+k}} - \statdist_{\polone, \poltwo_{\tixone+k - 1}}}_{1}.
\end{align*}
Application of Lemma \ref{lm_stat_dist_inequality} additionally bounds the last term by: 
\begin{align*}
\frac{1}{\Segment} \cdot \frac{6}{1 - e^{-\frac{1}{\mixtime}}} \cdot  \sum_{\tixone = -\Segment+2}^{\tix} \sum_{\tixtwo=\theta(\tixone)}^{\Theta(\tixone)} \sum_{k = \theta(\tixone)}^{\tixtwo-1}  \frac{\influencetwo \cdot \ratechange_2}{1-e^{-\frac{1}{\mixtime}}} 
&= \frac{1}{\Segment} \cdot \frac{6}{1 - e^{-\frac{1}{\mixtime}}} \cdot \frac{\influencetwo \cdot \ratechange_2}{1-e^{-\frac{1}{\mixtime}}}  \sum_{\tixone = -\Segment+2}^{\tix} \sum_{\tixtwo=\theta(\tixone)}^{\Theta(\tixone)} (\tixtwo - 1) \\
&\le \frac{1}{\Segment} \cdot \frac{6 \cdot \influencetwo \cdot \ratechange_2}{(1-e^{-\frac{1}{\mixtime}})^2}  \sum_{\tixone = -\Segment+2}^{\tix} \frac{\Segment^2}{2} \\
&\le \frac{1}{\Segment} \cdot (\tix + \Segment) \cdot \Segment^2 \cdot \frac{3 \cdot \influencetwo \cdot \ratechange_2}{(1-e^{-\frac{1}{\mixtime}})^2} \\&=  2 \cdot \tix  \cdot \Segment \cdot \frac{3 \cdot \influencetwo \cdot \ratechange_2}{(1-e^{-\frac{1}{\mixtime}})^2}
\end{align*}

Putting everything together, we obtain:
\begin{align*}
&\regret(\tix) = \sum_{\tixone = 1}^{\tix} \avgrewt(\polone)-\avgrewt(\polonet)  
\le 2 \cdot \left (\Delta_{\regul}  +  k^2 \cdot C_{\mixtime}^2 \right ) \cdot T \cdot \Segment^{-\frac{3}{4}} + \frac{ 6 \cdot \influencetwo \cdot \ratechange_2}{(1-e^{-\frac{1}{\mixtime}})^2} \cdot \tix  \cdot \Segment.
\end{align*}
\end{proof}

\clearpage
\section{Properties of experts with periodic restarts}\label{appendix.properties_algo_oftrl_rand_restart}

\begin{algorithm2e}[t!]
\SetKwBlock{When}{when}{endwhen} 

\textbf{Input:} {Segment horizon $\Segment$, learning rate $\ratelearn$;}

\Begin{

\textbf{Initialize:} $\forall s$, $\polone_{1}(s) = \argmax_{\weights \in \simplexone} \frac{\regul(\weights)}{\ratelearn}$\;

\For{episode $\tixone \in \{ 1,..., \tix \}$}{
	
	Commit to policy $\polone_{\tixone}$\;
	Obtain the return $\return_{\tixone}$\;
	Observe \agenttwo's policy $\poltwo_\tixone$\;
	Calculate Q-values $\qvals_\tixone$\;
	\tcp{Updating policy $\polone$}\
	\tcp{Reset the policy if segment ended}\
	\If{$\tixone \mod \Segment = 0$}{
	    \For{state $\state \in \mathcal S$}{
		$\polone_{t+1}(\state) = \argmax_{\weights \in \simplexone} \frac{\regul(\weights)}{\epsilon}$\;
		}
	}
	\tcp{Otherwise, calculate weights using a cropped horizon}\
	\Else{
	\For{state $\state \in \mathcal S$}{
		$\tixtwo = \floor{\frac{\tixone}{\Segment}}$\;
		$\polone_{\tixone+1}(\state)= \argmax_{\weights \in \simplexone}{ \big (\sum_{k=\tixtwo + 1}^{\tixone} \qvals_{k}(\state) }$\
		$+ \qvals_{\tixone}(\state)\big) \weights^{\transpose} + \frac{\regul(\weights)}{\epsilon}$\;
	}
	}
}
}
\caption{\algexprestart~(Experts with periodic restarts)}\label{algo_oftrl_rand_restart}
\end{algorithm2e}

To prove the property of \algexprestart~that corresponds to Theorem \ref{thm_regret}, we need to change the claims of Lemma \ref{lm_oftrl_property_simple} and Lemma \ref{lm_weight_diff}.
Lemma \ref{lm_stat_dist_inequality}, Lemma \ref{lm_qvals_diff_pol}, Lemma \ref{lm_qvals_diff_t}, and Proposition \ref{prop_C_bound} hold for \algexprestart~(in general, but also for each segment of length $\Segment$ separately). Therefore, we do not restate the claims of the latter results, but simply refer to them. Notice that, after each segment in \algexprestart, we should associate the next episode as if it was the first one. For example, the policy at $\tixone = \Segment + 1$ is not based on historical data, so the algorithmic step that precedes is effectively $\tixone = 0$, with a priori defined $\qvals_0$ and $\polone_0$.  
For notational convenience, we use $\%$ to denote the $\mod$ operation. Furthermore, we will denote the policy change magnitude of \agentone~within one segment by $\ratechangeone^{\Segment}$ (note that $\ratechangeone$ is generally greater than $\ratechangeone^{\Segment}$ due to potentially abrupt changes in \agentone's policy between two segments).  

\begin{lemma}\label{lm_oftrl_property_restart}
Consider \algexprestart~and let  $\mathbf 1$ denote column vector of ones with
$|\states|$ elements. Then, 
for each  $ l \in \{0, 1, ..., \floor{ \frac{\tix}{\Segment} } \}$ we have: 
\begin{align*}
\sum_{\tixone=l\cdot \Segment + 1}^{(l+1)\cdot \Segment}
& \vecdot{\polone - \polone_{\tixone},  \qvals_{\tixone}} \le  \mathbf 1 \cdot \left (  \frac{\Delta_{\regul}}{\ratelearn} + \ratelearn \cdot \sum_{\tixone=l\cdot \Segment + 1}^{(l+1)\cdot \Segment} \norm{ \qvals_{\tixone} -  \qvals_{(\tixone -1)\%\Segment}}_{\max}^2   - \frac{1}{4 \cdot \ratelearn} \cdot \sum_{\tixone=l\cdot \Segment + 1}^{(l+1)\cdot \Segment}  \norm{\polone_{\tixone }- \polone_{(\tixone -1)\%\Segment}}_{\infty}^2  \right ),
\end{align*}
 where $\Delta_{\regul} = \sup_{\weights \in \simplexone} \regul(\weights) -  \inf_{\weights \in \simplexone} \regul(\weights)$ and
 $\polone$ is an arbitrary policy of \agentone.
\end{lemma}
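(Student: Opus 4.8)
The plan is to mirror the argument used for Lemma~\ref{lm_oftrl_property_simple}, exploiting the fact that the periodic restart in \algexprestart~turns each segment into an independent, freshly-initialized OFTRL run. Fix a segment index $l \in \{0, \ldots, \floor{\tix/\Segment}\}$ and consider the episodes $\tixone \in \{l\cdot\Segment+1, \ldots, (l+1)\cdot\Segment\}$. Within this block, \algexprestart~reinitializes the policy to $\argmax_{\weights\in\simplexone}\regul(\weights)/\ratelearn$ and subsequently updates $\polonet(\state)$ by summing the $\qval$-values observed since the restart plus an extra copy of the most recent one $\qvals_{\tixone-1}(\state)$. This is exactly the optimistic-FTRL update with loss sequence $\qvals_{\tixone}(\state)$ and learning rate $\ratelearn$, run for $\Segment$ steps from the regularizer minimizer. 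Hence the whole claim reduces, state by state, to the single-run RVU guarantee of \citet{syrgkanis2015fast} (Theorem~19 and Lemma~20).

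The first step is therefore to apply that per-run RVU bound for each fixed state $\state$, with $\regul$ being $1$-strongly convex w.r.t.\ $\norm{\cdot}_1$ (dual norm $\norm{\cdot}_\infty$). This yields, for every $\state$, the scalar inequality
\begin{align*}
\sum_{\tixone=l\cdot\Segment+1}^{(l+1)\cdot\Segment} \qvalst(\state)\cdot(\polone(\state)-\polonet(\state))^{\transpose}
&\le \frac{\Delta_{\regul}}{\ratelearn} + \ratelearn\sum_{\tixone=l\cdot\Segment+1}^{(l+1)\cdot\Segment}\norm{\qvalst(\state)-\qvals_{(\tixone-1)\%\Segment}(\state)}_{\infty}^2 \\
&\quad - \frac{1}{4\ratelearn}\sum_{\tixone=l\cdot\Segment+1}^{(l+1)\cdot\Segment}\norm{\polonet(\state)-\polone_{(\tixone-1)\%\Segment}(\state)}_{1}^2 .
\end{align*}
The second step is to lift this to matrix form. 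Since the bound holds for each $\state$, taking the maximum over states converts the per-state $\norm{\cdot}_{\infty}$ on $\qval$-value differences into the entrywise $\norm{\cdot}_{\max}$ and the per-state $\norm{\cdot}_{1}$ on policy differences into the operator norm $\norm{\cdot}_{\infty}$, exactly as recorded in the parenthetical note of Lemma~\ref{lm_oftrl_property_simple}. Multiplying by the all-ones column vector $\mathbf 1$ and recognizing the left-hand side as $\mathbf 1 \cdot \sum_{\tixone}\vecdot{\polone-\polonet,\qvalst}$ (read row-wise) delivers the stated inequality.

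The only genuine subtlety—and thus the point requiring care rather than ingenuity—is the restart bookkeeping encoded by the modulo index $(\tixone-1)\%\Segment$. At the first episode of the segment, $\tixone = l\cdot\Segment+1$, the update has no intra-segment history, so the relevant ``previous'' loss and iterate must be read as the a~priori defined $\qvals_0$ and $\polone_0$ (equivalently, the optimistic prediction at the restart), precisely as the surrounding text stipulates. Checking that this convention makes the first-step term of the RVU telescoping match the fresh-start OFTRL regret is the main thing to verify; once it is confirmed, the proof is a direct transcription of the Lemma~\ref{lm_oftrl_property_simple} argument applied segment-by-segment, and no new technical difficulty arises because the restart guarantees that terms from distinct segments never interact.
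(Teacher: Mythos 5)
Your proof follows the paper's argument essentially verbatim: both invoke the RVU property of \citet{syrgkanis2015fast} (Proposition 7, via Theorem 19 and Lemma 20) state by state within a segment, observing that the restarted update is a fresh OFTRL run with loss sequence $\qvals_{\tixone}(\state)$ and learning rate $\ratelearn$, and then lift the per-state scalar inequality to the stated vector form using the same norm conventions ($\norm{\cdot}_1$ per state versus $\norm{\cdot}_{\infty}$ and $\norm{\cdot}_{\max}$ for matrices). Your treatment of the segment-boundary bookkeeping via the modulo index, with the predecessor read as the a priori defined $\qvals_0$ and $\polone_0$, matches exactly the convention the paper establishes in the text preceding the lemma, so nothing is missing.
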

\begin{proof}
The claim follows from Proposition 7 in \cite{syrgkanis2015fast} (more precisely, Theorem 19 and Lemma 20) by noting that the loss function is 
$\qvals_{\tixone}$ and that the update of $\polone_{\tixone}(\state)$ corresponds to OFTRL. That is, for each state $\state$ and $ l \in \{0, 1, ..., \floor{ \frac{\tix}{\Segment} } \}$:
\begin{align*}
\sum_{\tixone=l\cdot \Segment + 1}^{(l+1)\cdot \Segment}
\qvals_{\tixone}(\state) \cdot (\polone(\state)  -\polone_{\tixone}(\state))^\transpose &\le \frac{\Delta_{\regul}}{\ratelearn} + \ratelearn \cdot \sum_{\tixone=l\cdot \Segment + 1}^{(l+1)\cdot \Segment} \norm{ \qvals_{\tixone}(\state) -  \qvals_{(\tixone -1)\%\Segment}(\state)}_{\infty}^2 - \\
&- \frac{1}{4 \cdot \ratelearn} \cdot \sum_{\tixone=l\cdot \Segment + 1}^{(l+1)\cdot \Segment} \norm{\polone_{\tixone}(\state) - \polone_{(\tixone -1)\%\Segment}(\state)}_{1}^2,
\end{align*}
which implies the statement. (Note that for $\polone(s)$-difference we use $\norm{\cdot}_{1}$, whereas for $\polone$-difference we use $\norm{\cdot}_{\infty}$.)
\end{proof}

\begin{lemma}\label{lm_weight_diff_restart}
Let $ l \in \{0, 1, ..., \floor{ \frac{\tix}{\Segment} } \}$. The policy change magnitude within one segment ($\tixone \in \{l\cdot \Segment + 1, ..., (l+1)\cdot \Segment\}$) of \algexprestart~is bounded by:
\begin{align*} 
 \norm{\polone_{\tixone} - \polone_{(\tixone-1)\%\Segment}}_{\infty} \le \min \left  \{ 2,  \ratelearn \cdot  \frac{9}{1-e^{-\frac{1}{\mixtime}}} \right \}.
\end{align*}
Consequently:
\begin{align*}
\ratechangeone^{\Segment} \le  \min \left \{ 2,  \ratelearn \cdot  \frac{9}{1-e^{-\frac{1}{\mixtime}}} \right \}.
\end{align*}
\end{lemma}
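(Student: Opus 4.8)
The plan is to mirror the proof of Lemma~\ref{lm_weight_diff} almost verbatim, since within any single segment the iterates $\polone_{\tixone}$ of \algexprestart~are produced by exactly the same OFTRL update that generates the weights $\weights_{\tixone,\tixtwo}$ in \algexpsmooth. First I would fix a segment index $l$ and an episode $\tixone \in \{l\cdot\Segment+1, \ldots, (l+1)\cdot\Segment\}$, and observe that the cropped-horizon update for $\polone_{\tixone}(\state)$ is an instance of OFTRL with loss sequence $\qvals_{k}(\state)$ and the most recent loss used as the optimistic prediction. This lets me invoke the stability bound of Lemma~20 in \cite{syrgkanis2015fast}, which controls the $L_1$ distance between consecutive OFTRL iterates.

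Concretely, applied per state $\state$, Lemma~20 of \cite{syrgkanis2015fast} together with the triangle inequality should yield
\begin{align*}
\norm{\polone_{\tixone}(\state) - \polone_{(\tixone-1)\%\Segment}(\state)}_{1} &\le \ratelearn \cdot \norm{\qvals_{(\tixone-1)\%\Segment}(\state) - \qvals_{\tixone}(\state)}_{\infty} + \ratelearn \cdot \norm{\qvals_{\tixone}(\state)}_{\infty}\\
&\le 2\cdot \ratelearn \cdot \norm{\qvals_{\tixone}(\state)}_{\infty} + \ratelearn \cdot \norm{\qvals_{(\tixone-1)\%\Segment}(\state)}_{\infty},
\end{align*}
exactly as in the proof of Lemma~\ref{lm_weight_diff}. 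Bounding each $Q$-value term by $\norm{\qvals_{\tixone}}_{\max} \le \frac{3}{1-e^{-\frac{1}{\mixtime}}}$ via Lemma~\ref{lm_q_bound} gives the bound $\ratelearn \cdot \frac{9}{1-e^{-\frac{1}{\mixtime}}}$. Since $\polone_{\tixone}(\state) \in \simplexone$, the same row difference is trivially at most $2$ in $L_1$, so I can take the minimum of the two. Because $\norm{\cdot}_{\infty}$ is the operator norm, i.e.\ the maximum over states of the row-wise $L_1$ norms, the same bound carries over to $\norm{\polone_{\tixone} - \polone_{(\tixone-1)\%\Segment}}_{\infty}$, which proves the first claim. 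The second claim then follows immediately, since $\ratechangeone^{\Segment}$ is by definition the maximum over in-segment episodes $\tixone$ and states of precisely this quantity.

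The hard part will be the careful treatment of the restart boundary encoded by the $\%$ operation. I would argue that at the first episode of each segment the a priori conventions $\qvals_0$ and $\polone_0 = \argmax_{\weights\in\simplexone} \regul(\weights)/\ratelearn$ make the stability inequality degenerate consistently: the reset step sets the new segment's initial policy equal to $\polone_0$, so the wrapped index $(\tixone-1)\%\Segment = 0$ references this reset policy and the OFTRL boundary case (a first iterate with empty history) applies verbatim. This is exactly why only the within-segment magnitude $\ratechangeone^{\Segment}$ is controlled, while the global $\ratechangeone$ is not: the potentially abrupt jump between the last policy of one segment and the reset at the start of the next is deliberately excluded from the comparison $\polone_{\tixone}$ versus $\polone_{(\tixone-1)\%\Segment}$.
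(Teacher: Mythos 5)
Your proposal is correct and matches the paper's own proof essentially step for step: the same invocation of Lemma~20 of \cite{syrgkanis2015fast} for OFTRL stability, the same triangle-inequality chain bounding the row-wise $L_1$ difference by $\ratelearn\cdot\frac{9}{1-e^{-\frac{1}{\mixtime}}}$ via Lemma~\ref{lm_q_bound}, the trivial bound of $2$ from simplex membership, and the definitional step for $\ratechangeone^{\Segment}$. Your explicit discussion of the restart boundary and the $\qvals_0$, $\polone_0$ conventions is also consistent with the paper's stated treatment of segment resets.
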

\begin{proof}
The claim follows from Lemma 20 in \cite{syrgkanis2015fast} by noting that $\polone_{\tixone}(\state)$ are updated using OFTRL (see also Section 3.2 in  \cite{syrgkanis2015fast}), while $\qval$-values are bounded by $\frac{3}{1-e^{-\frac{1}{\mixtime}}}$ (see Lemma \ref{lm_q_bound}).  In particular from Lemma 20 in \cite{syrgkanis2015fast}, the triangle inequality, and Lemma \ref{lm_q_bound}, it follows: 
\begin{align*}
\norm{\polone_{\tixone}(\state) - \polone_{(\tixone-1)\%\Segment}(\state)}_{1} 
&\le \ratelearn \cdot \norm{\qvals_{(\tixone-1)\%\Segment}(\state) - \qvals_{\tixone}(\state)}_{\infty} + \ratelearn \cdot \norm{\qvals_{\tixone}(\state)}_{\infty} \le 2 \cdot \ratelearn \cdot \norm{\qvals_{\tixone}(\state)}_{\infty}  + \ratelearn \cdot \norm{\qvals_{(\tixone-1)\%\Segment}(\state)}_{\infty} \\
&\le  \ratelearn \cdot  \frac{9}{1-e^{-\frac{1}{\mixtime}}}.
\end{align*}
By taking into account that $\polone_{\tixone} \in \simplexone$, we know that $\norm{\polone_{\tixone}(\state) - \polone_{(\tixone-1)\%\Segment}(\state)}_{1} \le 2$, which together with the above proofs the first claim. The second claim follows by the definition. 
\end{proof}

\begin{theorem}\label{thm_regret_restart}
Let the learning rate of \algexprestart~be 
equal to $\ratelearn = \frac{1}{\Segment^{\frac{1}{4}}}$, and 
let $k > 0$ be such that $k \cdot \ratelearn > \ratechangeone^\Segment$ and $k \cdot \ratelearn > \ratechangetwo$.
Then, the regret of Algorithm \ref{algo_oftrl_rand_restart} is upper-bounded by: 
\begin{align*}
\regret(\tix) &\le  \left (\Delta_{\regul}  +  k^2 \cdot C_{\mixtime}^2 \right ) \cdot T \cdot \Segment^{-\frac{3}{4}} \nonumber + \frac{ 3 \cdot \influencetwo \cdot \ratechange_2}{(1-e^{-\frac{1}{\mixtime}})^2} \cdot \tix  \cdot \Segment.
\end{align*}
\end{theorem}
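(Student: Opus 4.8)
The plan is to transcribe the argument for Theorem~\ref{thm_regret}, but to exploit the fact that \algexprestart~partitions the horizon into \emph{disjoint} segments $\{l\cdot\Segment+1,\ldots,(l+1)\cdot\Segment\}$, which makes the decomposition cleaner than in the overlapping-window case. First I would invoke Lemma~\ref{lm_eta_q_relation} to rewrite the regret as
\begin{align*}
\regret(\tix)=\sum_{\tixone=1}^{\tix}\statdist_{\polone,\poltwo_{\tixone}}\cdot\vecdot{\polone-\polonet,\qvals_{\tixone}},
\end{align*}
using $\qvalst^\polone-\qvalst^\polonet=\vecdot{\polone-\polonet,\qvals_{\tixone}}$, and then regroup this sum segment by segment, so that the per-segment OFTRL guarantee of Lemma~\ref{lm_oftrl_property_restart} takes the role played by the windowed Lemma~\ref{lm_oftrl_property} in the proof of Theorem~\ref{thm_regret}.

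Within each segment $l$ I would fix the reference stationary distribution $\statdist_{\polone,\poltwo_{l\Segment+1}}$ at the first episode of the segment and split each summand as
\begin{align*}
\statdist_{\polone,\poltwo_{\tixone}}\cdot\vecdot{\polone-\polonet,\qvals_{\tixone}}
&=\statdist_{\polone,\poltwo_{l\Segment+1}}\cdot\vecdot{\polone-\polonet,\qvals_{\tixone}}\\
&\quad+(\statdist_{\polone,\poltwo_{\tixone}}-\statdist_{\polone,\poltwo_{l\Segment+1}})\cdot\vecdot{\polone-\polonet,\qvals_{\tixone}}.
\end{align*}
For the first (main) term the reference distribution is constant over the segment, so I can pull it out of the $\tixone$-sum and apply Lemma~\ref{lm_oftrl_property_restart}; discarding the negative weight-difference term and using $\statdist\cdot\mathbf 1=1$ reduces the per-segment contribution to $\frac{\Delta_{\regul}}{\ratelearn}+\ratelearn\sum_{\tixone}\norm{\qvals_{\tixone}-\qvals_{(\tixone-1)\%\Segment}}_{\max}^2$. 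Applying Lemma~\ref{lm_qvals_diff_t} per segment (with $\ratechangeone^{\Segment}$ in place of $\ratechangeone$), Proposition~\ref{prop_C_bound}, and the hypotheses $\ratechangeone^{\Segment},\ratechangetwo\le k\cdot\ratelearn$ bounds each $Q$-value variation by $C_{\qval}^2\le C_{\mixtime}^2 k^2\ratelearn^2$. Summing over the $\tix/\Segment$ segments (each contributing $\frac{\Delta_{\regul}}{\ratelearn}+\Segment\cdot\ratelearn\cdot C_{\qval}^2$) and substituting $\ratelearn=\Segment^{-\frac14}$ collapses the main term to $(\Delta_{\regul}+k^2 C_{\mixtime}^2)\cdot\tix\cdot\Segment^{-\frac34}$; note that, unlike Theorem~\ref{thm_regret}, there is no extra factor of $2$ here because the segments are disjoint rather than overlapping.

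For the error term I would bound it by its absolute value, apply H\"older together with the uniform bound $\norm{\vecdot{\polone-\polonet,\qvals_{\tixone}}}_{\infty}\le\frac{6}{1-e^{-\frac1\mixtime}}$ coming from Lemma~\ref{lm_diff_pol_rew_general} and Lemma~\ref{lm_q_bound}, and then telescope $\statdist_{\polone,\poltwo_{\tixone}}-\statdist_{\polone,\poltwo_{l\Segment+1}}=\sum_{k=l\Segment+1}^{\tixone-1}(\statdist_{\polone,\poltwo_{k+1}}-\statdist_{\polone,\poltwo_{k}})$, bounding each increment by $\frac{\influencetwo\cdot\ratechangetwo}{1-e^{-\frac1\mixtime}}$ via the second part of Lemma~\ref{lm_stat_dist_inequality}. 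Since the inner index runs over at most $\Segment$ terms, the per-segment double sum is $O(\Segment^2/2)$, and summing over $\tix/\Segment$ segments yields the second term $\frac{3\cdot\influencetwo\cdot\ratechangetwo}{(1-e^{-\frac1\mixtime})^2}\cdot\tix\cdot\Segment$. Adding the two bounds gives the claim.

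The main obstacle I anticipate is the careful bookkeeping at the segment boundaries: because OFTRL restarts at each $l\Segment+1$, one must use the \emph{within-segment} change magnitude $\ratechangeone^{\Segment}$ (not the global $\ratechangeone$, which absorbs the abrupt policy jumps between segments) throughout the per-segment applications of Lemmas~\ref{lm_qvals_diff_pol}--\ref{lm_qvals_diff_t} and Proposition~\ref{prop_C_bound}, and one must adopt the convention that the episode right after a restart is treated as ``episode $1$'' of a fresh run with a priori $\qvals_0,\polone_0$, so that the $(\tixone-1)\%\Segment$ indexing in Lemma~\ref{lm_oftrl_property_restart} is well defined. Everything else is a direct adaptation of the argument for Theorem~\ref{thm_regret}, with the reference distribution chosen at the segment start rather than at the window start.
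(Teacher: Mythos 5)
Your proposal is correct and follows essentially the same route as the paper's own proof: the same regret identity via Lemma~\ref{lm_eta_q_relation}, the same per-segment split around the reference distribution $\statdist_{\polone,\poltwo_{l\Segment+1}}$, the same use of Lemma~\ref{lm_oftrl_property_restart}, Lemma~\ref{lm_qvals_diff_t} with $\ratechangeone^{\Segment}$, Proposition~\ref{prop_C_bound}, and the telescoping/H\"older bound on the error term via Lemma~\ref{lm_stat_dist_inequality}. Your two side remarks --- that the factor of $2$ from Theorem~\ref{thm_regret} disappears because there are at most $\tix/\Segment$ disjoint segments, and that episodes after a restart must be re-indexed as a fresh run with a priori $\qvals_0$, $\polone_0$ --- are exactly the bookkeeping points the paper itself flags.
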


\begin{proof}
Lemma \ref{lm_eta_q_relation} provides the following useful identity:
\begin{align*}
\sum_{\tixone = 1}^{\tix} \avgrewt(\polone)-\avgrewt(\polonet) = \sum_{\tixone = 1}^{\tix} \statdist_{\polone, \poltwo_{\tixone}} \cdot ( \qvalst^\polone  - \qvalst^\polonet ).
\end{align*}
Furthermore, by the definition of $\qvalst^\polone$ and $\qvalst^\polonet$, we have:
\begin{align*}
\qvalst^\polone  -\qvalst^\polonet =\vecdot{\polone - \polonet, \qvals_{\tixone}}. 
\end{align*}
Using the two identities, gives:
\begin{align*}
\sum_{\tixone = 1}^{\tix} \avgrewt(\polone)-\avgrewt(\polonet)= \sum_{\tixone = 1}^{\tix} \statdist_{\polone, \poltwo_{\tixone}} \cdot \vecdot{\polone - \polonet, \qvals_{\tixone}}
&=  \sum_{\tixone = 1}^{\Segment} \statdist_{\polone, \poltwo_{\tixone}} \cdot \vecdot{\polone - \polonet, \qvals_{\tixone}} \\
&+\sum_{\tixone = \Segment + 1}^{2 \cdot \Segment} \statdist_{\polone, \poltwo_{\tixone}} \cdot \vecdot{\polone - \polonet, \qvals_{\tixone}}\\
&+ \cdots \\
&+\sum_{\tixone = \tix - \Segment + 1}^{\tix} \statdist_{\polone, \poltwo_{\tixone}} \cdot \vecdot{\polone - \polonet, \qvals_{\tixone}}.
\end{align*}
We will proceed by analyzing only the first summand, i.e., $\sum_{\tixone = 1}^{\Segment} \statdist_{\polone, \poltwo_{\tixone}} \cdot \vecdot{\polone - \polonet, \qvals_{\tixone}}$. The others can be analyzed analogously since experts with periodic restarts has the same behavior in all the segments $(1, ..., \Segment)$, $(\Segment + 1, ..., 2 \cdot \Segment)$, ...,  $(\tix - \Segment + 1, ..., \tix)$. We can rewrite the first summand as:  
\begin{align*}
 \sum_{\tixone = 1}^{\Segment} \statdist_{\polone, \poltwo_{\tixone}} \cdot \vecdot{\polone - \polonet, \qvals_{\tixone}} 
 = \sum_{\tixone = 1}^{\Segment} \statdist_{\polone, \poltwo_{1}} \cdot \vecdot{\polone - \polonet, \qvals_{\tixone}} +  \sum_{\tixone = 1}^{\Segment} (\statdist_{\polone, \poltwo_{\tixone}} - \statdist_{\polone, \poltwo_{1}}) \cdot \vecdot{\polone - \polonet, \qvals_{\tixone}},
\end{align*}
and proceed by bounding each of the obtained terms (summations) independently. Due to Lemma \ref{lm_oftrl_property_restart}, the first term is bounded by: 
\begin{align*}
&\sum_{\tixone = 1}^{\Segment} \statdist_{\polone, \poltwo_{1}} \cdot \vecdot{\polone - \polonet, \qvals_{\tixone}}
=
\statdist_{\polone, \poltwo_{1}} \cdot \sum_{\tixone = 1}^{\Segment} \vecdot{\polone - \polonet, \qvals_{\tixone}} \\
&\le 
\statdist_{\polone, \poltwo_{1}} \cdot \mathbf 1 \cdot  \left (  \frac{\Delta_{\regul}}{\ratelearn}  + \right . 
 \ratelearn \cdot \sum_{\tixone=1}^{\Segment}  \norm{ \qvals_{\tixone} -  \qvals_{(\tixone - 1)\%\Segment}}_{\max}^2  
\left . - \frac{1}{4 \cdot \ratelearn} \cdot \sum_{\tixone=1}^{\Segment}  \norm{\polone_{\tixone} - \polone_{(\tixone-1)\%\Segment}}_{\infty}^2  \right )\\
&\le
   \frac{\Delta_{\regul}}{\ratelearn}  + \ratelearn \cdot \sum_{\tixone=1}^{\Segment}  \norm{ \qvals_{\tixone} -  \qvals_{(\tixone - 1)\%\Segment}}_{\max}^2
\end{align*}
Proposition \ref{prop_C_bound} holds within segment $(1, ..., \Segment)$ (or any other segment), so Lemma \ref{lm_qvals_diff_t} and Lemma \ref{lm_weight_diff_restart} imply that the above is further bounded by:
\begin{align*}
\sum_{\tixone = 1}^{\Segment} \statdist_{\polone, \poltwo_{1}} \cdot \vecdot{\polone - \polonet, \qvals_{\tixone}} \le
 &\frac{\Delta_{\regul}}{\ratelearn}  + \ratelearn \cdot \sum_{\tixone=1}^{\Segment}  \norm{ \qvals_{\tixone} -  \qvals_{(\tixone - 1)\%\Segment}}_{\max}^2\\
&\rightarrow \textit{Using Lemma \ref{lm_qvals_diff_t}} \\ 
 &\le \frac{\Delta_{\regul}}{\ratelearn} + \Segment \cdot \ratelearn \cdot C_{\qval}^2   \\
 &\rightarrow \textit{By the definition of $C_{\mixtime}$} \\
 &\le\frac{\Delta_{\regul}}{\ratelearn} + \Segment \cdot \ratelearn \cdot C_{\mixtime}^2  \cdot \max \{ {\ratechangeone^{\Segment}}^2, \ratechangetwo^2 \}  \\
 &\rightarrow \textit{Using $k \cdot \ratelearn \ge \ratechangeone^\Segment$ and $k \cdot \ratelearn \ge \ratechangetwo$} \\
 &\le\frac{\Delta_{\regul}}{\ratelearn} + k^2 \cdot \Segment \cdot \ratelearn^3 \cdot C_{\mixtime}^2 
  \end{align*}
Notice that in the above analysis $\mathbf 1$ denoted a row vector of ones with $|\states|$ elements.
Finally, setting $\ratelearn = \frac{1}{\Segment^{\frac{1}{4}}}$ leads to the upper bound 
$(\Delta_{\regul} + k^2 \cdot C_{\mixtime}^2) \cdot \Segment^{\frac{1}{4}}$. 

The second term is bounded by its absolute value, which together with the triangle inequality and Holder's inequality, results in: 
\begin{align*}
&\sum_{\tixone =1}^{\Segment} ( \statdist_{\polone, \poltwo_{\tixone}} - \statdist_{\polone, \poltwo_{1}}) 
\cdot \vecdot{ \polone - \polone_{\tixone}, \qvals_{\tixone}}
\le \sum_{\tixone =1}^{\Segment} \norm{\statdist_{\polone, \poltwo_{\tixone}} - \statdist_{\polone, \poltwo_{1}}}_{1}  
 \cdot \norm{\vecdot{ \polone - \polone_{\tixone}, \qvals_{\tixone}} }_{\infty}.
\end{align*}
From Lemma \ref{lm_diff_pol_rew_general} and Lemma \ref{lm_q_bound} it follows that 
$\norm{\vecdot{ \polone - \polone_{\tixone}, \qvals_{\tixone}} }_{\infty} \le \frac{6}{1 - e^{-\frac{1}{\mixtime}}}$, which leads to the upper bound:  
\begin{align*}
\sum_{\tixone =1}^{\Segment} ( \statdist_{\polone, \poltwo_{\tixone}} - \statdist_{\polone, \poltwo_{1}}) 
\cdot \vecdot{ \polone - \polone_{\tixone}, \qvals_{\tixone}}
\le \sum_{\tixone =1}^{\Segment} \norm{\statdist_{\polone, \poltwo_{\tixone}} - \statdist_{\polone, \poltwo_{1}}}_{1} \cdot \frac{6}{1 - e^{-\frac{1}{\mixtime}}},
\end{align*}
which by identity:
\begin{align*}
\statdist_{\polone, \poltwo_{\tixone}} - \statdist_{\polone, \poltwo_{1}} &= \statdist_{\polone, \poltwo_{\tixone}} - \statdist_{\polone, \poltwo_{\tixone-1}} \\
&+ \statdist_{\polone, \poltwo_{\tixone-2}} - \statdist_{\polone, \poltwo_{\tixone - 3}} \\
&...\\
&+ \statdist_{\polone, \poltwo_{3}} -  \statdist_{\polone, \poltwo_{2}}\\
&+ \statdist_{\polone, \poltwo_{2}} -  \statdist_{\polone, \poltwo_{1}}\\
&= \sum_{k = 1}^{\tixone-1} (\statdist_{\polone, \poltwo_{k+1}} - \statdist_{\polone, \poltwo_{k}}),
\end{align*}
and the triangle inequality is bounded by:
\begin{align*}
\sum_{\tixone =1}^{\Segment} ( \statdist_{\polone, \poltwo_{\tixone}} - \statdist_{\polone, \poltwo_{1}}) 
\cdot \vecdot{ \polone - \polone_{\tixone}, \qvals_{\tixone}}
\le & \frac{6}{1 - e^{-\frac{1}{\mixtime}}} \cdot \sum_{\tixone=1}^{\Segment} \sum_{k = 1}^{\tixone-1} \norm{\statdist_{\polone, \poltwo_{k+1}} - \statdist_{\polone, \poltwo_{k}}}_{1}.
\end{align*}
Application of Lemma \ref{lm_stat_dist_inequality} additionally bounds the last term by: 
\begin{align*}
\frac{6}{1 - e^{-\frac{1}{\mixtime}}} \cdot \sum_{\tixone =1}^{\Segment} \sum_{k = 1}^{\tixone-1}  \frac{\influencetwo \cdot \ratechange_2}{1-e^{-\frac{1}{\mixtime}}} 
&=  \frac{6}{1 - e^{-\frac{1}{\mixtime}}} \cdot \frac{\influencetwo \cdot \ratechange_2}{1-e^{-\frac{1}{\mixtime}}} \cdot   \sum_{\tixone =1}^{\Segment} (t-1)  \\
&\le  \frac{6}{1 - e^{-\frac{1}{\mixtime}}} \cdot \frac{\influencetwo \cdot \ratechange_2}{1-e^{-\frac{1}{\mixtime}}} \cdot   \frac{\Segment^2}{2}  \\
&=  \frac{3}{1 - e^{-\frac{1}{\mixtime}}} \cdot \frac{\influencetwo \cdot \ratechange_2}{1-e^{-\frac{1}{\mixtime}}} \cdot  \Segment^2.
\end{align*}
Therefore, $\sum_{\tixone = 1}^{\Segment} \statdist_{\polone, \poltwo_{\tixone}} \cdot \vecdot{\polone - \polonet, \qvals_{\tixone}}$ is bounded by:
\begin{align*}
\sum_{\tixone = 1}^{\Segment} \statdist_{\polone, \poltwo_{\tixone}} \cdot \vecdot{\polone - \polonet, \qvals_{\tixone}} \le \left (\Delta_{\regul} + k^2\cdot C_{\mixtime}^2 \right ) \cdot \Segment^{\frac{1}{4}} + \frac{3}{1 - e^{-\frac{1}{\mixtime}}} \cdot \frac{\influencetwo \cdot \ratechange_2}{1-e^{-\frac{1}{\mixtime}}} \cdot  \Segment^2.
\end{align*}

By including all the other segments, in total $\frac{T}{\Segment}$ of them (at most), we obtain: 
\begin{align*}
&\regret(\tix) = \sum_{\tixone = 1}^{\tix} \avgrewt(\polone)-\avgrewt(\polonet)  
\le \left (\Delta_{\regul}  +  k^2 \cdot C_{\mixtime}^2 \right ) \cdot T \cdot \Segment^{-\frac{3}{4}} + \frac{ 3 \cdot \influencetwo \cdot \ratechange_2}{(1-e^{-\frac{1}{\mixtime}})^2} \cdot \tix  \cdot \Segment.
\end{align*}
\end{proof}

Given that the bound in Theorem \ref{thm_regret_restart} is (order-wise) the same as the bound in Theorem \ref{thm_regret}, the statement of the main result of the paper (Theorem \ref{thm_main}) holds for experts with periodic restarts as well. Namely, one can easily adjust the proof of Theorem \ref{thm_main} so that the statement of the theorem holds for experts with periodic restarts (e.g., by utilizing Lemma \ref{lm_weight_diff_restart} instead of Lemma \ref{lm_weight_diff}).

\clearpage
\section{Proof of Theorem \ref{thm_hardness}}\label{appendix.proof_thm_hardness}

\begin{proof}
To obtain the hardness result, it suffices to reduce the adversarial online shortest path problem of \cite{abbasi2013online} to our setting in polynomial time  --- namely, 
the adversarial online shortest part problem is at least as hard as online agnostic parity (see Theorem 5 in \cite{abbasi2013online}), so the proposed reduction would yield the result. 

In the online shortest path problem, at each round $\tixone$, an agent has to select a path from a start node to an end node 
in a direct acyclic graph $g_{\tixone}$. The graph $g_{\tixone}$ is characterized by a start node, an end node, and $L$ layers that contain 
other nodes with a restriction that a node in layer $l$ only connects to nodes in $l-1$ (or start node if $l = 1$) and nodes in $l+1$
(or the end node if $l = L$). The reward $\rew_{g_\tixone}$ (alternatively, loss) is revealed after the decision maker chooses a path in $g_{\tixone}$.
Note that the proof of Theorem 5 in \cite{abbasi2013online}, which shows the computational hardness of the problem, uses only binary rewards and $2$ nodes per layer, 
each node having at least one successor (except the end node). Thus, to prove our theorem, it suffices to reduce the simpler version of the
online shortest problem to our problem in polynomial time. The simpler shortest part problem is characterized by: 
\begin{itemize}
\item A set of nodes $N_{g_\tixone}$, enumerated by $n_{0}$, $n_{1,0}$, $n_{1,1}$, ..., $n_{l,0}$, $n_{l,1}$, ..., $n_{L, 0}$, $n_{L, 1}$, $n_{L+1}$, where 
$n_{0}$ is the start node, while $n_{L+1}$ is the end node. 
\item A time dependent set of directed edges $E_{g_\tixone}$ that define connections between nodes from layer $l$ and nodes in layer $l+1$, where only directed edges of 
type $(n_{0}, n_{1,0})$, $(n_{0}, n_{1,1})$, ..., $(n_{l,0}, n_{l+1,0})$, $(n_{l,1}, n_{l+1,0})$, $(n_{l,0}, n_{l+1,1})$, $(n_{l,1}, n_{l+1,1})$, ...,
$(n_{L,0}, n_{L+1})$, $(n_{L,1}, n_{L+1})$, are allowed. We impose condition that each node $n \ne n_{L+1}$ has an outgoing edge. 
\item A time dependent reward function $\rew_{g_\tixone}$ that takes values in $\{0, 1\}$ and whose value $\rew_{g_\tixone}((n, n'))$ determines the weight of edge $(n, n')$. 
\item A time dependent policy $\pi_{g,\tixone}$ that defines a successor node for each node other than $n_{L+1}$, e.g., 
$\pi_{g,\tixone}(n_{l,1}) = n_{l+1,0}$. If node $\pi_{g,\tixone}(n_{l,1}) \notin E_{g_\tixone}$, then we take an arbitrary (predefined) successor node of 
$n_{l, 1}$ that is in $E_{g_\tixone}$. We can define a value of each policy $\pi_{g,\tixone}$ as:
\begin{align*} 
V_t(\pi_{g,\tixone}) = \frac{1}{L} \cdot \sum_{l = 1}^{L} \rew_{g_\tixone}(\pi_{g,\tixone}^{l}(n_0), \pi_{g,\tixone}^{l + 1}(n_0)),
\end{align*}
where $\pi_{g,\tixone}^{l}(n_0)$ is defined inductively as $\pi_{g,\tixone}^{l}(n_0) = \pi_{g,\tixone}(\pi_{g,\tixone}^{l-1}(n_0))$ with $\pi_{g,\tixone}^{0}(n_0) = n_0$.
\item The objective is to be competitive w.r.t. any stationary policy $\pi$, that is, to minimize the regret $\sum_{\tixone = 1}^{\tix} (V_t(\pi) - V_t(\pi_{g,\tixone}))$.
\end{itemize}

{\em Reduction:} 
~\\
~\\
\textbf{Defining \mdp:} Given a graph $g_{\tixone}$, let us define a corresponding \mdp~ in our setting:
\begin{itemize}
\item We associate each node in  $g_{\tixone}$ with a state in \mdp:
state $\state_0$ corresponds to the start node $n_0$; state $\state_{L+1}$ corresponds to the end node $n_{L+1}$; and for the nodes $n_{l, 0}$ and $n_{l,1}$ in layer $l$, we define states 
$\state_{l, 0}$ and $\state_{l, 1}$ respectively. Therefore, $\states = \{ \state_0, ..., \state_{l, 0}, \state_{l, 1}, ..., \state_{L+1} \}$.
\item The action space for \agentone~ describes possible choices in the online shortest path --- $\actionsone = \{0, 1\}$ corresponding to the choice of a next node $n_{l, 0}$ or $n_{l, 1}$ from nodes $n_{l-1, 0}$ or $n_{l-1, 1}$. The action space for \agenttwo~ is equal to $\actionstwo = \{\actiontwo_{a}, \actiontwo_{0,0}, \actiontwo_{0,1}, \actiontwo_{1,0}, , \actiontwo_{1,1}, \actiontwo_{b, 0}, \actiontwo_{b, 1}, \actiontwo_{b, 2}, \actiontwo_{b, 3}\}$.
\item All the transitions are deterministic given $\actionone$ and $\actiontwo$. 
The transitions from node 
$\state_{l, 0}$ and $\state_{l, 1}$ (for $l < L$) are primarily determined by the action of \agenttwo: 
\begin{itemize}
\item if it is equal to $\actiontwo_{a}$, the next state remains the same, 
\item if it is $\actiontwo_{0, x}$, 
the next state is $\state_{l+1,0}$, 
\item if it is $\actiontwo_{1, x}$ then the next state is $\state_{l+1, 1}$,  
\item if it is $\actiontwo_{b, x}$ the next state is determined by \agentone's action, i.e., the next state is $\state_{l+1, \actionone}$. 
\end{itemize}
That is, \agentone~ only affects the transition if \agenttwo~ selects 
$\actiontwo_{b,x}$. Analogously we define transitions for $\state_0$, having in mind that next layer is layer $1$. 
From $\state_{L, 0}$ and  $\state_{L, 1}$ the transitions 
lead to $\state_{L+1}$, unless $\actiontwo$ is equal to $\actiontwo_{a}$, in which case the the state remains the same.
From $\state_{L+1}$, all the transitions lead to $\state_0$. Put together:
\begin{align*}
\trans(\state,\actionone, \actionone, \state_{new}) = 1 \begin{cases}
\mbox{ if } \state = \state_{0} \text{ and } \state_{new} = \state_{0} \text{ and } \actiontwo = \actiontwo_a  \\
\mbox{ if } \state = \state_{0} \text{ and } \state_{new} = \state_{1, 0} \text{ and } \actiontwo = \actiontwo_{0, x}  \\
\mbox{ if } \state = \state_{0} \text{ and } \state_{new} = \state_{1, 1} \text{ and } \actiontwo = \actiontwo_{1, x} \\
\mbox{ if } \state = \state_{0} \text{ and } \state_{new} = \state_{1, \actionone} \text{ and } \actiontwo = \actiontwo_{b, x}  \\
\mbox{ if } \state = \state_{l, x} \text{ and } \state_{new} = \state_{l, x} \text{ and } \actiontwo = \actiontwo_a  \text{ and } 0 < l < L \\
\mbox{ if } \state = \state_{l, x} \text{ and } \state_{new} = \state_{l+1, 0} \text{ and } \actiontwo = \actiontwo_{0, x}  \text{ and } 0 < l < L \\
\mbox{ if } \state = \state_{l, x} \text{ and } \state_{new} = \state_{l+1, 1} \text{ and } \actiontwo = \actiontwo_{1, x}  \text{ and } 0 < l < L \\
\mbox{ if } \state = \state_{l, x} \text{ and } \state_{new} = \state_{l+1, \actionone} \text{ and } \actiontwo = \actiontwo_{b, x}  \text{ and } 0 < l < L \\
\mbox{ if } \state = \state_{L, x} \text{ and } \state_{new} = \state_{L, x} \text{ and } \actiontwo = \actiontwo_a \\
\mbox{ if } \state = \state_{L, x} \text{ and } \state_{new} = \state_{L + 1} \text{ and } \actiontwo \ne \actiontwo_a \\
\mbox{ if } \state = \state_{L+1} \text{ and } \state_{new} = \state_{0}
\end{cases},
\end{align*}
and it is $0$ otherwise. 
\item For the reward function, we set:
\begin{align*}
&\rew(\state, \actionone, \actiontwo_{a}) = 0,\\
&\rew(\state, \actionone, \actiontwo_{0, 0}) = 0, \\
&\rew(\state, \actionone, \actiontwo_{0, 1}) = 1, \\
&\rew(\state, \actionone, \actiontwo_{1, 0}) = 0, \\
&\rew(\state, \actionone, \actiontwo_{1, 1}) = 1, \\
&\rew(\state, \actionone, \actiontwo_{b, 0}) = 0,\\
&\rew(\state, \actionone, \actiontwo_{b, 1}) = \mathbbm 1_{\actionone = 0},\\ 
&\rew(\state, \actionone, \actiontwo_{b, 2}) = \mathbbm 1_{\actionone = 1},\\
&\rew(\state, \actionone, \actiontwo_{b, 3}) = 1.
\end{align*}
This implies that  \agenttwo~ can control the reward of each $(\state,\actionone)$ pair by an appropriate action selection. In other, we can simulate the weights (rewards) of graph $g_{\tixone}$ by an appropriate choice of \agenttwo's policy.  
\end{itemize}
~\\
~\\
\textbf{Defining policy $\poltwot$:}
Let us now define the policy of \agenttwo~ using graph $g_\tixone$ --- note $\rew_{g_\tixone}$ is not revealed to \agentone~ before episode $\tixone$. 
Let us consider the case $0 < l \le L$.
First, we set $\poltwo_\tixone(\state_{l, x}, \actiontwo_{a}) = 1 - \ratechangetwo$, to ensure 
that policy change of \agenttwo~ is bounded by $\ratechangetwo$. For the other $(\state_{l, x}, \actiontwo)$ pairs, we set $\poltwo_\tixone(\state_{l, x}, \actiontwo) = 0$
except:
\begin{itemize}
\item if node $n_{l, x}$ has only $n_{l+1, i}$ as a successor and $\rew_{g_\tixone}(n_{l, x}, n_{l+1, i}) = j$, then $\poltwo_\tixone(\state_{l, x}, \actiontwo_{i, j}) = \ratechangetwo$;
\item if node $n_{l, x}$ has both  $n_{l+1, 0}$ and $n_{l+1, 1}$ as successors, then $\poltwo_\tixone(\state_{l, x}, \actiontwo_{b, r}) = \ratechangetwo$, where
$r = \rew_{g_\tixone}(n_{l, x}, n_{l+1, 0}) + 2 \cdot \rew_{g_\tixone}(n_{l, x}, n_{l+1, 1})$.
\end{itemize}
The first point ensures that when $n_{l, x}$ has only one successor, $\poltwot$ will allow transitions from $\state_{l,x}$ only to the corresponding successor as $\actiontwo$ will either be $\actiontwo_{0, j}$ or $\actiontwo_{1,j}$. The choice of $j$ in $\actiontwo_{., j}$ ensures that the reward for that transition is equal to the corresponding weight in graph $g_\tixone$ since for $\actiontwo_{i, 0}$, the reward is $0$ and for $\actiontwo_{i, 1}$, the reward is $1$. The second point ensures that when $n_{l, x}$ has both successors, the transition is dependent on \agentone's action. This is true because \agenttwo's action is $\actiontwo_{b, r}$, which defines transitions according to \agentone's action $\actionone$. Notice that when a node has  two possible successors in $g_\tixone$, $r$ encodes the associated weight (reward) values of the node's output edges: $r = 0$ - weights are 0, $r = 1$ - the weight associate to transition $n_{l,x} \rightarrow n_{l+1,0}$ is $1$ and the other weight (associated with transition $n_{l,x} \rightarrow n_{l+1,1}$) is $0$, $r = 2$ - the other way around,  $r = 3$ - both weights are $1$. So $r$ in $\actiontwo_{b, r}$ ensures that $\state_{l,x}$ has the same reward profile  as node $n_{l,x}$ has on the corresponding transitions .  
Therefore, the inner layers of \mdp~ are properly reflecting inner layers of $g_\tixone$, except that we remain in each state with probability $1-\ratechangetwo$ irrespective of \agentone's policy. 
Analogously we define $\poltwo_\tixone$ for $\state_0$, having in mind that the next later is layer $1$, so the same holds for $\state_0$. The choice of $\poltwo_\tixone(\state_{L+1})$ is irrelevant for both transitions and rewards, so we can set it to $\poltwo_\tixone(\state_{L+1}, \actiontwo_{a}) = 1$.  
To sumarize:
\begin{align*}
&\poltwo_\tixone(\state, \actiontwo) = 1 - \ratechangetwo 
\begin{cases}
\mbox{ if } \state = \state_{0} \text{ and } \text{ and } \actiontwo = \actiontwo_a  \\
\mbox{ if } \state = \state_{l,x} \text{ and } \text{ and } \actiontwo = \actiontwo_a   \text{ and } 0 < l \le L  \\
\end{cases}\\
\\
&\poltwo_\tixone(\state, \actiontwo) = \ratechangetwo 
\begin{cases}
\mbox{ if } \state = \state_{0} \text{ and } (n_{0}, n_{1, 0}) \in E_{g_\tixone} \text{ and } (n_{0}, n_{1, 1}) \notin E_{g_\tixone} \text{ and } \actiontwo = \actiontwo_{0, \rew_{g_\tixone}(n_{0}, n_{1, 0})}  \\
\mbox{ if } \state = \state_{0} \text{ and } (n_{0}, n_{1, 0}) \notin E_{g_\tixone} \text{ and } (n_{0}, n_{1, 1}) \in E_{g_\tixone} \text{ and } \actiontwo = \actiontwo_{1, \rew_{g_\tixone}(n_{0}, n_{1, 1})}  \\
\mbox{ if } \state = \state_{0} \text{ and } (n_{0}, n_{1, 0}) \in E_{g_\tixone} \text{ and } (n_{0}, n_{1, 1}) \in E_{g_\tixone} \text{ and } \actiontwo = \actiontwo_{b, r} \\
\mbox{ if } \state = \state_{l, x} \text{ and } (n_{l, x}, n_{l+1, 0}) \in E_{g_\tixone} \text{ and } (n_{l, x}, n_{l+1, 1}) \notin E_{g_\tixone} \text{ and } \actiontwo = \actiontwo_{0, \rew_{g_\tixone}(n_{l, x}, n_{l+1, 0})}  \\
\mbox{ if } \state = \state_{l, x} \text{ and } (n_{l, x}, n_{l+1, 0}) \notin E_{g_\tixone} \text{ and } (n_{l, x}, n_{l+1, 1}) \in E_{g_\tixone} \text{ and } \actiontwo = \actiontwo_{1, \rew_{g_\tixone}(n_{l, x}, n_{l+1, 1})}  \\
\mbox{ if } \state = \state_{l, x} \text{ and } (n_{l, x}, n_{l+1, 0}) \in E_{g_\tixone} \text{ and } (n_{l, x}, n_{l+1, 1}) \in E_{g_\tixone} \text{ and } \actiontwo = \actiontwo_{b, r}
\end{cases}\\
\\
&\poltwo_\tixone(\state_{L+1}, \actiontwo_{a}) = 1,
\end{align*}
and $\poltwo_\tixone(\state, \actiontwo) = 0$ otherwise, where $0 < l \le L$ and $r  = \rew_{g_\tixone}(n_{l, x}, n_{l+1, 0}) + 2 \cdot \rew_{g_\tixone}(n_{l, x}, n_{l+1, 1})$. 
By these choices of \agenttwo's policy we have encoded the structure of $g_\tixone$, with the exception that in each state $\state \ne \state_{L+1}$ we remain with probability $1-\ratechangetwo$.

Finally, we ought to encode any policy $\pi_{g,\tixone}$ with \agentone's policy. Let us define $\polone_\tixone$ as:
\begin{align*}
\polone_\tixone(\state_0) &= \mathbbm 1_{\bm \pi_{g,\tixone}(n_0) = n_{1,1}}\\
\polone_\tixone(\state_{l,x}) &= \mathbbm 1_{\bm \pi_{g,\tixone}(n_{l,x}) = n_{l+1,1}}\\
\polone_\tixone(\state_{L,x}) &= \polone_\tixone(\state_{L+1}) = 0
\end{align*}
where $0 < l < L$. In other words, the action of choosing node $n_{x, 0}$ in the shortest path problem, corresponds to action $0$, whereas the action of choosing node $n_{x, 1}$ in the shortest path problem, corresponds to action $1$. Notice that \agentone's actions in states $\state_{L,x}$ and $\state_{L+1}$ are not not important as they don't affect the agent's rewards nor transitions.  
Given that $\polone_\tixone$ represents $\pi_{g,\tixone}$, whereas with $\poltwot$ we have encoded the relevant details of $g_\tixone$ in \mdp,
it follows that the the average reward obtained for episode $\tixone$ in the \mdp~is equal to:
\begin{align*}
\avgrewt(\polone_\tixone) = \Thetabound \left ( \ratechangetwo \right ) \cdot V_\tixone (\pi_{g,\tixone}).
\end{align*}
Coefficient $\ratechangetwo$ comes from the fact that the number of rounds to reach state $\state_{L+1}$ from $\state_{L}$ is expected to be $\frac{1}{\ratechangetwo} \cdot (L+1)$ (i.e., in each state the agents are expected to be $\frac{1}{\ratechangetwo}$ rounds given that the probability of remaining in a state other than $\state_{L+1}$ is $1-\ratechangetwo$), whereas it takes $L+1$ steps 
to reach the end node in the shortest path problem. 
Notice that $ \Thetabound \left (\ratechangetwo \right )$ is not dependent on time horizon $\tix$, while the reduction we described is efficient (polynomial in the problem parameters). 
Therefore, if we can find an efficient no-regret algorithm for the setting of this paper, then we can also find an efficient no-regret learning algorithm for
the adversarial online shortest path problem, which implies the statement of the theorem.
\end{proof}

}

}{}

\end{document}
